\begin{document}

\title{SABAF: Removing Strong Attribute Bias from Neural Networks with Adversarial Filtering}

\author{Jiazhi Li, Mahyar Khayatkhoei, Jiageng Zhu, Hanchen Xie, Mohamed E. Hussein, Wael AbdAlmageed
\thanks{Jiazhi Li, Mahyar Khayatkhoei, Jiageng Zhu, Hanchen Xie, Mohamed E. Hussein, Wael AbdAlmageed are with USC Information Sciences Institute, Marina del Rey, USA (e-mail: jiazhil@usc.edu; m.khayatkhoei@gmail.com; jiagengz@usc.edu; hanchenx@usc.edu; mehussein@isi.edu; wamageed@isi.edu).

Jiazhi Li and Jiageng Zhu are also with USC Ming Hsieh Department of Electrical and Computer Engineering, Los Angeles, USA.

Hanchen Xie is also with USC Thomas Lord Department of Computer Science, Los Angeles, USA.

Mohamed E. Hussein is also with Alexandria University, Alexandria, Egypt.

Wael AbdAlmageed is also with USC Ming Hsieh Department of Electrical and Computer Engineering, Los Angeles, USA, and USC Thomas Lod Department of Computer Science, Los Angeles, USA.}}



\maketitle

\begin{abstract}
Ensuring a neural network is not relying on protected attributes (\eg race, sex, age) for prediction is crucial in advancing fair and trustworthy AI. 
While several promising methods for removing attribute bias in neural networks have been proposed, their limitations remain under-explored. 
To that end, in this work, we mathematically and empirically reveal the limitation of existing attribute bias removal methods in presence of strong bias and propose a new method that can mitigate this limitation.
Specifically, we first derive a general non-vacuous information-theoretical upper bound on the performance of any attribute bias removal method in terms of the bias strength, revealing that they are effective only when the inherent bias in the dataset is relatively weak. Next, we derive a necessary condition for the existence of any method that can remove attribute bias regardless of the bias strength. Inspired by this condition, we then propose a new method using an adversarial objective that directly filters out protected attributes in the input space while maximally preserving all other attributes, without requiring any specific target label. The proposed method achieves state-of-the-art performance in both strong and moderate bias settings. We provide extensive experiments on synthetic, image, and census datasets, to verify the derived theoretical bound and its consequences in practice, and evaluate the effectiveness of the proposed method in removing strong attribute bias\footnote{Code will be released at \url{https://github.com/jiazhi412/strong_attribute_bias}}.

\begin{IEEEkeywords}
Trustworthy AI, Bias, Neural Networks, Protected Attributes, Information Theory, Adversarial Filter
\end{IEEEkeywords}

\end{abstract}

\begin{figure}[t]
\begin{center}
  \includegraphics[width=1\linewidth]{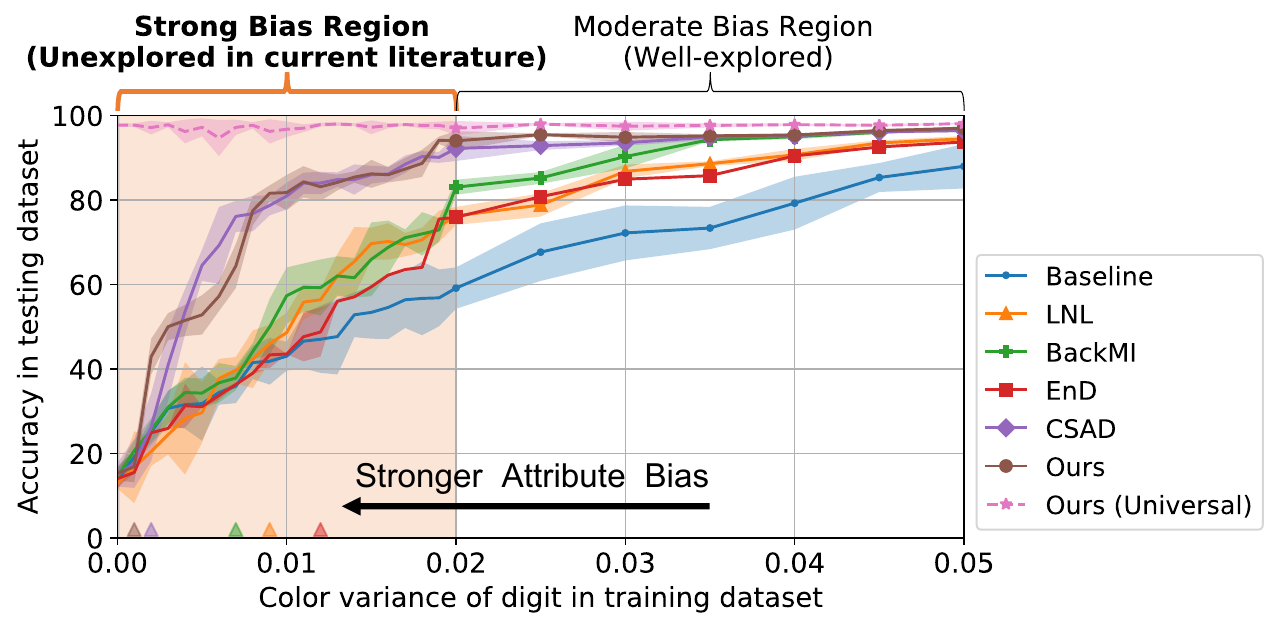}
\end{center}
  \caption{Digit prediction accuracy of bias removal methods trained under different levels of color bias strength in Colored MNIST, showing results on the unexplored region of color variance $<0.02$. The breaking point of each method, where its performance becomes statistically similar to the baseline classifier, is labeled with $\blacktriangle$ on the x-axis. While all methods clearly outperform the baseline in moderate bias region, their effectiveness sharply declines towards the baseline as the bias strength increases. Our proposed method shows a lower breaking point, and no breaking point when a \Univ{} distribution is available. The plot shows average accuracy (lines) with one standard deviation error (shaded) over 15 randomized training runs. Further details are provided in~\cref{appsec:breaking_point}.}
\label{fig:EB}
\end{figure}

\section{Introduction}
\label{sec:introduction}

\emph{Protected attributes} is a term originating from Sociology~\cite{sociology} referring to a finite set of attributes that must not be used in decision-making to prevent exacerbating societal biases against specific demographic groups~\cite{protected_attributes}. For example, in deciding whether or not someone should be qualified for a bank loan, race (as one of the protected attributes)  must not influence the decision. 
Given the widespread use of neural networks in real-world decision-making, developing methods capable of explicitly excluding protected attributes from the decision process -- more generally referred to as removing attribute bias~\cite{minority_group_vs_sensitive_attribute} -- is of paramount importance.

While many promising methods for removing attribute bias in neural networks have been proposed in the recent years~\cite{BlindEye_IMDB_eb,learn_not_to_learn_Colored_MNIST,domain_independent_training, LfF_CelebA_Bias_conflicting,End,CSAD,BCL}, the limitations of these methods remain under-explored. In particular, existing studies explore the performance of these methods only in cases where the protected attribute (\eg race) is \emph{not strongly predictive} of the prediction target (\eg credit worthiness). However, this implicit assumption does not always hold in practice, especially in cases where training data is scarce. For example, in diagnosing Human Immunodeficiency Virus (HIV) from Magnetic Resonance Imaging (MRI), HIV-positive subjects were found to be significantly older than control subjects, making age a strong attribute bias for this task~\cite{dataset_vs_task}. Another example is the Pima Indians Diabetes Database which contains only 768 samples where several spurious attributes become strongly associated with diabetes diagnosis~\cite{diabetes_dataset, diabetes_chapter}. Even the widely-used CelebA dataset~\cite{CelebA} contains strong attribute biases: for example in predicting blond hair, sex is a strong predictor\footnote{We present detailed statistics of attribute biases in various real-world datasets in~\cref{appsec:stats}.}. Therefore, it is crucial to study bias removal methods beyond the moderate bias setting to better understand their limitations and the necessary conditions for their effectiveness.

In~\cref{fig:EB}, we will illustrate by a specific example the limitation in bias removal methods that we will later investigate theoretically and empirically in several real-world datasets. In this example, we conduct an extended version of a popular controlled experiment for evaluating the performance of attribute bias removal~\cite{learn_not_to_learn_Colored_MNIST, Back_MI, CSAD}. The task is to predict digits from colored MNIST images~\cite{learn_not_to_learn_Colored_MNIST} where color is considered a protected attribute. During training, each digit is assigned a unique RGB color with a variance (\ie the smaller the color variance, the more predictive the color is of the digit, and the stronger the attribute bias). To measure how much the trained model relies on the protected attribute in its predictions, model accuracy is reported on a held-out subset of MNIST with uniformly random color assignments (\ie where the color is not predictive of the digit).  While state-of-the-art  methods~\cite{learn_not_to_learn_Colored_MNIST,Back_MI,End,CSAD} report results for the color variance only in the range $[0.02, 0.05]$ (without providing any justification for this particular range), we explore results for the missing range of $[0, 0.02]$, which we denote as the \emph{strong bias region}. 
In~\cref{fig:EB}, in the strong bias region, we observe that the effectiveness of all existing methods sharply declines and there exists a \emph{breaking point} in their effectiveness. 
The breaking point of a method is defined as the weakest bias strength at which its performance becomes indistinguishable from the baseline under a two-sample one-way Kolmogorov-Smirnov test with significance level of $0.05$.
The main goal of this paper is to study the cause and extent of this limitation empirically and theoretically. 
We summarize our contributions below\footnote{This work is an extended version of our paper~\cite{extreme_bias} presented in the Algorithmic Fairness through the Lens of Time Workshop at NeurIPS 2023.}:
 
\begin{itemize}
\item Providing comprehensive empirical evidence showing the acute degradation in the performance of existing attribute bias removal methods as the bias strength increases in the underlying training dataset (\cref{fig:EB} and \cref{sec:exp}).

\item Deriving and verifying a general non-vacuous information-theoretical upper bound for the performance of any attribute bias removal method, thereby formalizing the cause and extent of the limitation (\cref{sec:theory}).

\item Deriving a necessary condition for the existence of any method that can remove attribute bias
regardless of the bias strength (\cref{sec:necessary}).

\item Proposing a new method for strong attribute bias removal based on the necessary condition (\cref{sec:method}).

\item Conducting extensive experiments to evaluate the effectiveness of the proposed method in both moderate and strong bias settings, demonstrating state-of-the-art performance in both settings even when the necessary condition is not met, showing that we are not sacrificing performance in moderate bias region to gain performance in strong bias region (\cref{sec:exp}).
\end{itemize}

\section{Related Works}
\label{sec:related_work}

\noindent
\textbf{Bias in Neural Networks.} Mitigating bias and improving fairness in neural networks has received considerable attention in recent years~\cite{EO_define,demographic_parity,counterfactual_fairness,fairness_through_awareness,fairness_under_unawareness,RLB}. The methods proposed for mitigating bias in neural networks can be broadly grouped into two categories: 1) methods that aim to mitigate the uneven performance of neural networks between majority and minority groups; and 2) methods that aim to reduce the dependence of neural network prediction on specific attributes. Most notable examples of the former group are methods for constructing balanced training set~\cite{Timnit_sex_PPB,Fairface}, synthesizing additional samples from the minority group~\cite{transect,CAT}, importance weighting the under-represented samples~\cite{RL_RBN}, and domain adaptation techniques that adapt well-learnt representations from the majority group to the minority group~\cite{RFW,MFR,BAE}. In this work, we focus on the second group of methods, which we will further divide into two subgroups discussed below: methods that implicitly or explicitly minimize the mutual information (MI) between learnt latent features and the specific protected attribute.

\noindent
\textbf{Explicit Mutual Information Minimization.} 
Several methods aim to directly minimize mutual information between a latent representation for the target classification and the protected attributes, in order to learn a representation that is predictive of the target but independent of the attributes, hence removing attribute bias. These methods mainly differ in the way they estimate MI. Most notable examples include LNL~\cite{learn_not_to_learn_Colored_MNIST} which minimizes the classification loss together with a MI regularization loss estimated by an auxiliary distribution; BackMI~\cite{Back_MI} which minimizes classification loss and MI estimated by a neural estimator~\cite{MINE} through the statistics network; and, CSAD~\cite{CSAD} which minimizes MI estimated by~\cite{deepinfomax} between a latent representation to predict target and another latent representation to predict the protected attributes.

\noindent
\textbf{Implicit Mutual Information Minimization.} 
Another group of methods aims to remove attribute bias by constructing surrogate losses that implicitly reduce the mutual information between protected attributes and the target of classification. Most notably, LfF~\cite{LfF_CelebA_Bias_conflicting} proposes training two models simultaneously, where the first model will prioritize easy features for classification by amplifying the gradient of cross-entropy loss with the predictive confidence (softmax score), and the second model will down-weight the importance of samples that are confidently classified by the first model, therefore avoiding features that are learnt easily during training, which are likely to be spurious features leading to large MI with protected attributes; EnD~\cite{End} adds regularization terms to the typical cross-entropy loss that push apart the feature vectors of samples with the same protected attribute label to become orthogonal (thereby increasing the conditional entropy of them given the protected attribute); BlindEye~\cite{BlindEye_IMDB_eb} pushes the distribution obtained by the attribute classifier operating on latent features towards uniform distribution by minimizing the entropy between them; domain independent training (DI)~\cite{domain_independent_training} learns a shared representation with an ensemble of separate classifiers per domain to ensure that the prediction from the unified model is not biased towards any domain; and, domain-invariant learning~\cite{ganin2016domain, zhao2019learning, albuquerque2019generalizing,invariant1, invariant2,DRO} minimize classification performance gap across domains by mapping data to a space where distributions are indistinguishable while retaining task-relevant information.

\noindent
\textbf{Generative Dataset Augmentation.} 
A recent group of methods~\cite{CGN,Camel, BiaSwap, GAN_Debiasing_hat_glasses_correlation} aim to mitigate attribute bias by generating counterfactual synthetic samples that can augment the original biased training set to reduce its inherent bias strength.
These methods use Generative Adversarial Networks (GANs)~\cite{GAN} to synthesize images of a given biased dataset by randomly altering the protected attribute, a technique commonly denoted \textit{attribute flipping}.
Compared with MI-based methods, GAN-based methods address attribute bias by constructing a semi-synthetic dataset with reduced bias strength rather than minimizing mutual information between learned features and protected attributes.
Most notably, CAMEL~\cite{Camel} starts by employing a CycleGAN~\cite{cyclegan} to learn the semantic transformations between latent features with the same target attribute but different protected attribute, and then performs data augmentations by manipulating the latent features for classifier training; BiaSwap~\cite{BiaSwap} first employs a biased classifier to divide samples into bias-guiding and bias-contrary categories, and then incorporates the style-transferring module of the image translation model to produce bias-swapped images which retain bias-irrelevant features from bias-guiding samples while inheriting protected attributes from bias-contrary samples; GAN-Debiasing~\cite{GAN_Debiasing_hat_glasses_correlation} formulates two hyperplanes to represent both the target attribute and the protected attribute, and generates synthetic images that retain the appearance of the target attribute while flipping the protected attribute by perturbing latent vector in the protected attribute hyperplane; and, CGN~\cite{CGN} learns three predefined independent mechanisms for shape, texture, and background based on domain knowledge, and leverages them to generate images with desired attributes.

\noindent
\textbf{Trade-offs between Bias Removal and Model Utility.}
The trade-offs between fairness and accuracy in machine learning models have garnered significant discussion.
Most notably, Kleinberg \etal~\cite{three_fairness_conditions} prove that except in highly constrained cases, no method can simultaneously satisfy three fairness conditions: \emph{calibration within groups} which requires that the expected number of individuals predicted as positive should be proportional to a group-specific fraction of individuals in each group, \emph{balance for the negative class} which requires that the average score of individuals predicted as negative should be equal across groups, and \emph{balance for the positive class} which requires the balance for the positive class across groups; and, Dutta \etal~\cite{Eopps_and_accuracy} theoretically demonstrate that, under certain conditions, it is possible to simultaneously achieve optimal accuracy and fairness in terms of \emph{equal opportunity}~\cite{EO_define} which requires even false negative rates or even true positive rates across groups.
Different from the fairness criteria discussed in these works, we focus on another well-known fairness criterion, \emph{demographic parity}~\cite{counterfactual_fairness, fairness_through_awareness}, which requires even prediction probability across groups, \ie independence between model prediction and protected attributes.
Regarding this criterion, Zhao and Gordon~\cite{dp_to_ap} show that any method designed to learn fair representations, while ensuring model predictions are independent of protected attributes, faces an information-theoretic lower bound on the joint error across groups.
In contrast, we derive an information-theoretic upper bound on the best attainable performance $I(Z;Y)$, which is not limited to the case where model predictions are independent of protected attributes and considers different levels of the retained protected attribute information in the learnt features.

\section{Information Theoretical Bounds on the Performance of Attribute Bias Removal Methods}
\label{sec:theory}
The observations in~\cref{fig:EB} revealed that the existing methods are not effective when the attribute bias is too strong, \ie they all have a breaking point, and that there is a continuous connection between their effectiveness and the strength of the attribute bias. However, so far, these observations are limited to the particular Colored MNIST dataset. In this section, we show that this situation is in fact much more general. By deriving an upper bound on the classification performance of any attribute bias removal method in terms of the bias strength, regardless of the dataset and domain, we will elucidate the cause and extent of the limitation we observed in~\cref{fig:EB}.

\begin{figure*}[t]
    \centering 
      \subfloat[EnD~\cite{End}.]{\includegraphics[width=0.26\linewidth]{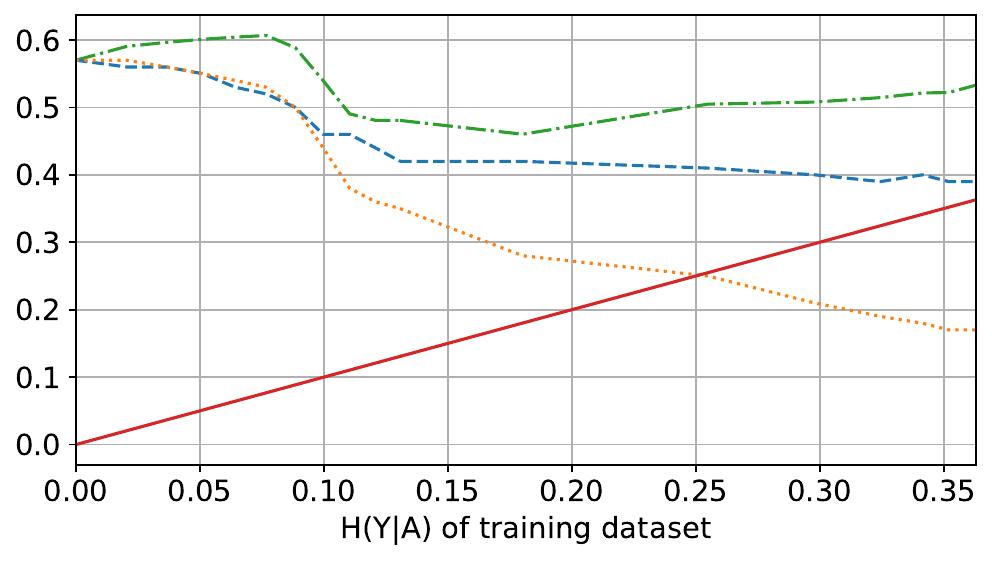}\label{fig:CelebA_EnD_bound}}\quad
      \subfloat[CSAD~\cite{CSAD}.]{\includegraphics[width=0.26\linewidth]{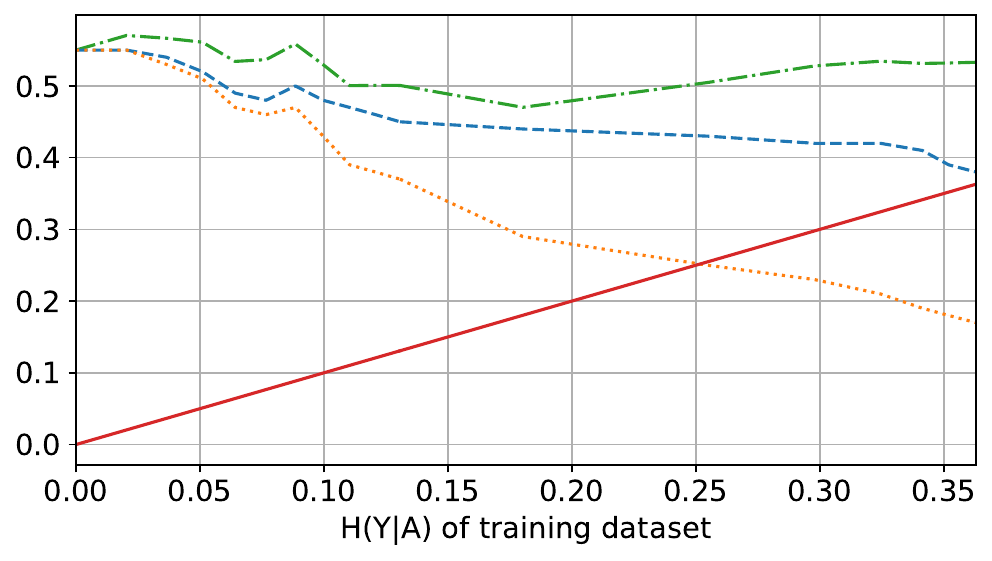}\label{fig:CelebA_CSAD_bound}}\quad
      \subfloat[BCL~\cite{BCL}.]{\includegraphics[width=0.42\linewidth]{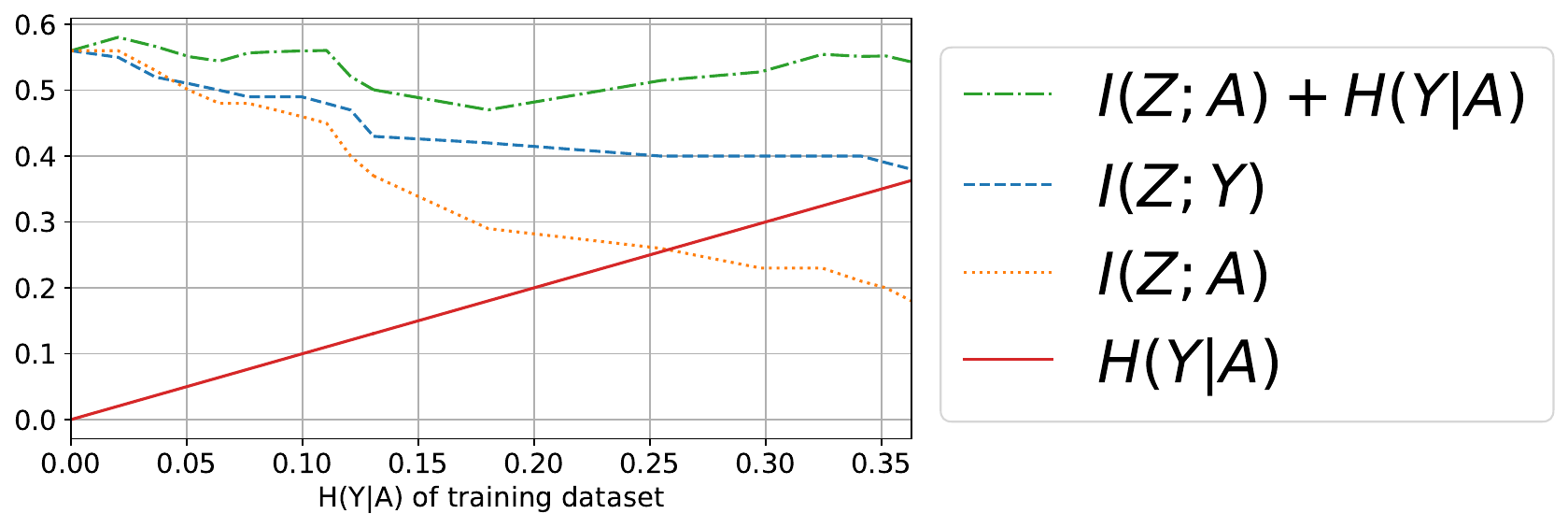}\label{fig:CelebA_BCL_bound}}
\caption{Empirically verifying the bound in~\cref{th:general} for several bias removal methods trained on CelebA. 
The x-axis shows $H(Y|A)$, which we vary directly by adjusting the fraction of bias-conflicting images while ensuring a constant number of biased images in the training set. We empirically compute $H(Y|A)$ based on the distribution of $Y$ and $A$ in the modified training set, and estimate mutual information using~\cite{MINE}. The bound $0 \leq I(Z;Y) \leq I(Z;A) + H(Y|A)$ holds for all methods (results of additional bias removal methods are provided in~\cref{appsec:bounds}).
}
\label{fig:CelebA_bounds}
\end{figure*}

We first need to formalize the notions of performance, attribute bias strength, and attribute bias removal. Let $X$ be a random variable representing the input (\eg images) with support $\mathcal{X}$, $Y$ be a random variable representing the prediction target (\eg hair color) with support $\mathcal{Y}$, and $A$ be a random variable representing the protected attribute (\eg sex). We define the attribute bias removal method as a function $f: \mathcal{X} \rightarrow \mathcal{Z}$ that maps input data to a latent bottleneck feature space $\mathcal{Z}$ inducing the random variable $Z$, and consider the prediction model as a function $g: \mathcal{Z} \rightarrow \mathcal{Y}$ inducing the random variable $\hat{Y}$. According to the information bottleneck theory~\cite{tishby2015deep-bottleneck-theory, shwartz2017opening-blackbox-deep-learning}, the goal of classification can be stated as maximizing the mutual information between prediction and target, namely $I(\hat{Y};Y)$, which is itself bounded by the mutual information between feature and target due to the data processing inequality~\cite{cover2006elements}, \ie $I(\hat{Y};Y) \leq I(Z;Y)$. Intuitively, $I(Z;Y)$ measures how informative the features learnt by the model are of the target, with $I(Z;Y)=0$ indicating completely uninformative learnt features: the best attainable prediction performance is no better than random guess. Therefore, the optimization objective of attribute bias removal methods can be formalized as learning $f$ parameterized by $\theta$ that minimizes mutual information between feature and attribute $I(Z_{\theta};A)$, while maximizing mutual information between feature and target $I(Z_{\theta};Y)$.

Given the above definitions, we can state our goal concretely: to derive a connection between $I(Z;Y)$ (the best attainable performance), $H(Y|A)$ (the attribute bias strength measured by the conditional entropy of target given attribute), and $I(Z;A)$ (the amount of remained attribute bias in the learnt feature). 
Note that smaller $H(Y|A)$ corresponds to stronger attribute bias (\ie the attribute can more certainly predict the target), and vice versa. So the extreme attribute bias happens when $H(Y|A)=0$. In this particular setting, the following proposition shows that no classifier can outperform random guess if the attribute is removed from the feature.

\begin{propos}
\label{th:extreme}
Given random variables $Z, Y, A$, in case of the extreme attribute bias $H(Y|A)=0$, if the attribute is removed from the feature $I(Z;A)=0$, then $I(Z;Y)=0$, \ie no classifier can outperform random guess\footnote{\label{fnote_proof}Proof in~\cref{appsec:proofs}.}.
\end{propos}

This proposition extends and explains the observation on the leftmost location of the x-axis in~\cref{fig:EB}: when the color variance is zero, color is completely predictive of the digit, $H(Y|A)=0$, and removing color from the latent feature, $I(Z;A)=0$, makes the prediction uninformative, $I(Z;Y)=0$. However, \cref{th:extreme} does not explain the rest of the curve beyond just the zero color variance. The following theorem closes this gap by deriving a bound on the performance of attribute bias removal methods in terms of the attribute bias strength, thus providing a more complete picture of the limitation of such methods, and elucidating the connection between performance and bias strength.

\begin{theorem}
\label{th:general}
Given random variables $Z, Y, A$, the following inequality holds without exception\cref{fnote_proof}:
\begin{align}
\label{eq:genreal}
    0 \leq I(Z;Y) \leq I(Z;A) + H(Y|A)
\end{align}
\end{theorem}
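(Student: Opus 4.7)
The lower bound $0 \leq I(Z;Y)$ is immediate from the non-negativity of mutual information, so the substance of the theorem is the upper bound $I(Z;Y) \leq I(Z;A) + H(Y|A)$. My plan is to prove it by combining the chain rule of mutual information with two elementary monotonicity facts, without needing any assumptions on the joint distribution of $(Z,Y,A)$.

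First I would expand the joint mutual information $I(Z;Y,A)$ via the chain rule in the form $I(Z;Y,A) = I(Z;A) + I(Z;Y \mid A)$. This is the natural decomposition because the right-hand side of the target inequality already contains a term involving $A$ and a term involving $Y$ conditioned on $A$, which mirrors this chain rule. Next, I would use the standard monotonicity inequality $I(Z;Y) \leq I(Z;Y,A)$, which follows from the chain rule $I(Z;Y,A) = I(Z;Y) + I(Z;A \mid Y)$ together with the non-negativity of the conditional mutual information $I(Z;A \mid Y) \geq 0$. Chaining these two steps yields $I(Z;Y) \leq I(Z;A) + I(Z;Y \mid A)$.

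It then remains to bound the conditional mutual information by the conditional entropy: $I(Z;Y \mid A) = H(Y \mid A) - H(Y \mid Z,A) \leq H(Y \mid A)$, which is immediate from the non-negativity of $H(Y \mid Z,A)$. Substituting this into the previous inequality gives exactly $I(Z;Y) \leq I(Z;A) + H(Y \mid A)$, completing the proof.

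There is no real obstacle here; every step is a textbook identity or a one-line non-negativity argument, and crucially the derivation never invokes any structural assumption (such as the Markov chain $A \to X \to Z$ that would be natural given the setup $Z = f(X)$). That unconditional nature is what makes the bound "without exception" as stated. The only thing I would double-check when writing the final version is that the direction of the chain rule and the choice of what to condition on line up so that the residual term thrown away ($I(Z;A \mid Y)$ on one side, $H(Y \mid Z,A)$ on the other) is unambiguously non-negative, which is the case here.
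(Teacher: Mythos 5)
Your proof is correct, and every step is valid exactly as written: the chain rule $I(Z;Y,A)=I(Z;A)+I(Z;Y\mid A)$, the monotonicity $I(Z;Y)\leq I(Z;Y,A)$ via $I(Z;A\mid Y)\geq 0$, and the bound $I(Z;Y\mid A)=H(Y\mid A)-H(Y\mid Z,A)\leq H(Y\mid A)$ for discrete variables. The paper reaches the same conclusion by a slightly different packaging: it introduces the interaction information $I(Z;Y;A)=I(Z;Y)-I(Z;Y\mid A)$, invokes its symmetry and the property $I(Z;Y;A)\leq\min\{I(Z;Y),I(Y;A),I(Z;A)\}$ (specifically $I(Z;Y;A)\leq I(Z;A)$), and then expands the mutual informations into entropies to arrive at the intermediate inequality $I(Z;Y)+H(Y\mid Z,A)\leq I(Z;A)+H(Y\mid A)$, from which the theorem follows by $H(Y\mid Z,A)\geq 0$. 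Mathematically the two arguments are the same — the paper's bound $I(Z;Y;A)\leq I(Z;A)$ is precisely your $I(Z;A\mid Y)\geq 0$, and both proofs finish with the non-negativity of $H(Y\mid Z,A)$ — but your route is the more elementary one: it stays entirely within the standard chain rule and non-negativity facts, avoiding the interaction-information formalism (which requires some care, since interaction information can be negative and the cited min-property must itself be justified by exactly the conditional-MI non-negativity you use directly). The paper's phrasing makes the three-way symmetry among $Z,Y,A$ more visible; yours is shorter and easier to verify line by line.
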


\begin{remark}
In the extreme bias case $H(Y|A)=0$, the bound in~\cref{eq:genreal} shows that the model performance is bounded by the amount of protected attribute information that is retained in the feature, namely $I(Z;Y) \leq I(Z;A)$. This puts the model in a trade-off: the more the attribute bias is removed, the lower the best attainable performance.
\end{remark}

\begin{remark}
When the protected attribute is successfully removed from the feature $I(Z;A)=0$, the bound in~\cref{eq:genreal} shows that the model's performance is bounded by the strength of the attribute bias, namely $I(Z;Y)\leq H(Y|A)$. This explains the gradual decline observed in~\cref{fig:EB} as we move from the moderate to the strong bias region (from right to left towards zero color variance).
\end{remark}

\begin{remark}
When $H(Y|A)=0$ and $I(Z;A)=0$, \cref{eq:genreal} reduces to the result of~\cref{th:extreme}, $I(Z;Y)=0$, hence no classifier can outperform random guess.
\end{remark}

\begin{remark}
\label{th:best_att_perf}
We emphasize that the provided bound is placed on the best attainable performance. So while decreasing the bound will decrease performance, increasing the bound will not necessarily result in an increased performance. For example, consider the baseline classifier: even though there is no attribute bias removal performed and therefore the bound can be arbitrarily large, $I(Z;A) \gg 0$, the model declines in the strong bias region since learning the highly predictive protected attribute is likely in the non-convex optimization.
\end{remark}

To empirically test our theory in a real-world dataset, we compute the terms in~\cref{th:general} for several existing methods in CelebA and plot the results in~\cref{fig:CelebA_bounds}. In these experiments, blond hair is the target $Y$, and sex is the protected attribute $A$. We vary the bias strength $H(Y|A)$ by increasing/decreasing the fraction of bias-conflicting images in the training set (images of females with non-blond hair and males with blond hair) while maintaining the number of biased images in the training set at 89754. Then, we compute $H(Y|A)$ directly and estimate the mutual information terms $I(Z;A)$ and $I(Z;Y)$ using mutual information neural estimator~\cite{MINE}. We observe that the bound holds in accordance with~\cref{th:general} for all methods (additional methods are provided in~\cref{appsec:bounds}). 

So far, we have mathematically and empirically shown the existence of the bound, and we have observed its consequence in the synthetic Colored MNIST dataset in~\cref{fig:EB}. We will further investigate the extent of its consequences for attribute bias removal methods in real-world image and census datasets in~\cref{subsec:comp_extreme,subsec:comp_strong}. Prior to that, in the following two sections, we will investigate whether we can design a method that can mitigate the limitation of removing strong bias.

\section{Necessary Condition to Remove Extreme Bias}
\begin{figure*}[htbp]
\begin{center}
  \includegraphics[width=0.8\linewidth]{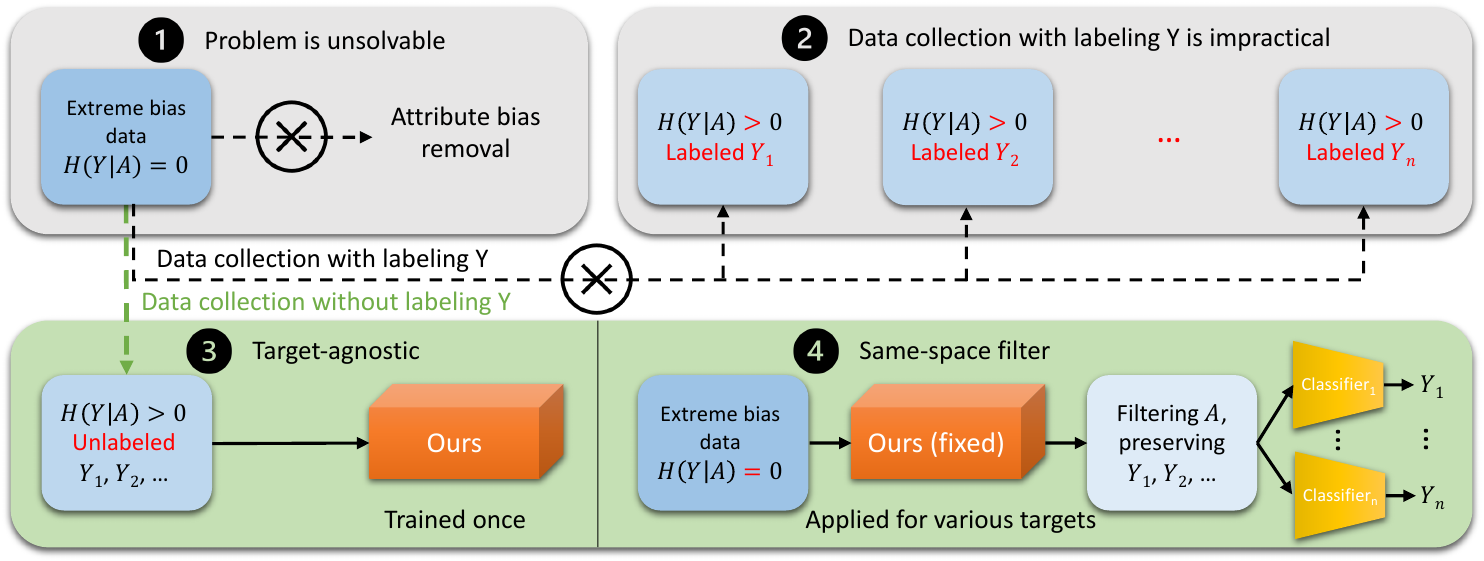}
\end{center}
  \caption{Illustration of extreme bias and the proposed method.
  (1) In extreme bias where $H(Y|A)=0$, no effective attribute bias removal method exists unless it can access a \Univ{} distribution where $H(Y|A)>0$.
(2) It is impractical to collect samples from a \Univ{} distribution with target labels for all potential downstream tasks.
(3) Thus, we propose a target-agnostic method that can utilize a \Univ{} distribution without target labels, \ie a partially observable distribution.
(4) Due to its same-space design, our method can be easily applied as preprocessing in various downstream tasks for removing attribute bias.}
\label{fig:idea}
\end{figure*}

\label{sec:necessary}

According to~\cref{th:general}, if a dataset has extreme bias ($H(Y|A)=0$), then the best performance of any attribute bias removal method in learning the latent feature $Z_\theta$ becomes bounded by the amount of attribute bias that remains in the learnt latent feature, \ie $I(Z_\theta;Y) \leq I(Z_\theta;A)$. Therefore, the more attribute bias the method removes, the lower the best performance on predicting the target from learnt feature becomes. In this section, we will derive a necessary condition under which it is possible to avoid this trade-off. Consider again the random variables $X$ (input), $Y$ (target), and $A$ (protected attribute), as defined in~\cref{sec:theory}, with the respective distributions $p_X(x)$, $p_Y(y)$, and $p_A(a)$. Note that while the observed joint distribution $p(x,y,a)$ over these random variables in a given dataset can be such that $H_{p}(Y|A)=0$, \ie having extreme bias, this is not necessarily the only observable joint distribution over these random variables. In other words, there could exist another joint distribution $q(x,y,a)$ over the same three random variables (with the correct marginal distributions) in which $H_{q}(Y|A)>0$, which we denote the \textit{\Univ{} distribution}. If such a distribution exists -- even if only partially observable -- it could help mitigate the limitation in removing extreme bias in the observed distribution. The following corollary of~\cref{th:general} shows that the existence of a \Univ{} distribution is necessary for the existence of a successful attribute bias removal method:
\begin{definition}
\label{th:def} (\Univ{} Distribution). $q: \mathcal{X}\times\mathcal{Y}\times\mathcal{A} \rightarrow \mathbb{R}^{\geq 0}$ such that all the following conditions hold\footnote{We consider all random variables to be discrete, as these are represented by the finite set of floating point numbers in practice.}:
\begin{enumerate}
    \item $\sum_{x,y,a}q(x,y,a) = 1$
    \item $\sum_{y,a}q(x,y,a) = p_X(x)$
    \item $\sum_{x,a}q(x,y,a) = p_Y(y)$
    \item $\sum_{x,y}q(x,y,a) = p_A(a)$
    \item $H_q(Y|A) > 0$
\end{enumerate}
\end{definition}

\begin{corollary}
\label{th:necessary} (Necessary Condition). Consider any family of bias removal methods $\Theta$, then there exists a method $\phi \in \Theta$ that simultaneously removes the bias and achieves the best performance, \ie $\phi = \argmin_{\theta \in \Theta} I(Z_{\theta};A) = \argmax_{\theta \in \Theta} I(Z_{\theta};Y)$ only if $~\exists~q(x,y,a):  H_{q}(Y|A) > 0$. 
\end{corollary}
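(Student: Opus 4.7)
The plan is to prove the contrapositive of~\cref{th:necessary}: assuming no \Univ{} distribution exists, I would show that no $\phi \in \Theta$ can simultaneously attain $\argmin_\theta I(Z_\theta; A)$ and $\argmax_\theta I(Z_\theta; Y)$. The starting observation is that the observed joint $p(x,y,a)$ always satisfies conditions 1--4 of~\cref{th:def}, since its marginals are by construction $p_X, p_Y, p_A$. Hence the hypothesis that no \Univ{} distribution exists forces $p$ itself to fail condition 5, so $H_p(Y|A) = 0$. Plugging this into~\cref{th:general} collapses the upper bound to
\begin{equation}
I(Z_\theta; Y) \leq I(Z_\theta; A) \quad \text{for all } \theta \in \Theta,
\end{equation}
which is the engine of the remainder of the argument.

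Next, I would argue directly from this collapsed bound. Suppose such a $\phi$ exists. Because $\phi$ achieves the argmin, $I(Z_\phi; A) = \min_\theta I(Z_\theta; A)$; taking $\Theta$ to contain a trivial constant-feature baseline (achieving $I(Z;A) = 0$), the minimum equals zero, so $I(Z_\phi; A) = 0$. By~\cref{th:extreme}, this pins $I(Z_\phi; Y) = 0$. But $\phi$ is also the argmax, forcing $\max_\theta I(Z_\theta; Y) = 0$, i.e.\ no method in $\Theta$ attains nontrivial predictive performance. This contradicts the existence of any informative classifier in $\Theta$ (for instance, a baseline that predicts $Y$ by routing the highly predictive $A$ through the learnt feature, which by the bound has $I(Z;Y)>0$ only when $I(Z;A)>0$). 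The contradiction closes the contrapositive.

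The step I expect to be the main obstacle is the mild richness condition on $\Theta$: the argument bites only when $\Theta$ contains both a feature-collapse witness for $\min_\theta I(Z_\theta;A) = 0$ and a nontrivial classifier witness for $\max_\theta I(Z_\theta;Y) > 0$. Without these two witnesses, the argmin and argmax can coincide vacuously — for instance, a singleton $\Theta$ makes the corollary uninformative. I would address this either by explicitly including the constant-feature baseline in $\Theta$ as a convention, or by interpreting ``simultaneously removes the bias and achieves the best performance'' as the stronger conjunction $I(Z_\phi;A) = 0$ and $I(Z_\phi;Y) > 0$, in which case the collapsed bound directly forbids the conjunction and the corollary follows immediately without any richness clause.
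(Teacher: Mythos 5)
Your proposal is correct and follows essentially the same route as the paper's proof: argue by contraposition, note that non-existence of a \Univ{} distribution forces $H_p(Y|A)=0$, collapse the bound of~\cref{th:general} to $I(Z_\theta;Y)\leq I(Z_\theta;A)$, and conclude that any $\phi$ attaining $\min_\theta I(Z_\theta;A)=0$ must have $I(Z_\phi;Y)=0$ and hence cannot also be the argmax. Your explicit treatment of the richness assumptions on $\Theta$ (a zero-$I(Z;A)$ witness and a nontrivially informative witness) merely makes precise what the paper leaves implicit in ``removes the bias'' and ``achieves the best performance,'' so it is a clarification rather than a different argument.
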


\begin{proof}
By contraposition:
from~\cref{th:general}, we know that when $H(Y|A)=0$, $0 \leq I(Z;Y) \leq I(Z;A)$. Thus, for any method $\phi\in\Theta$ that achieves $I(Z_\phi;A) = \min_{\theta \in \Theta} I(Z_{\theta};A) = 0$, we have $I(Z_\phi;Y) = 0 \neq \argmax_{\theta \in \Theta}I(Z_{\theta};Y)$.
\end{proof}

The existence of a \Univ{} distribution is essentially formalizing the knowledge that the two concepts $A$ and $Y$ are not exactly the same, \ie there exists a distribution where they can be distinguished. However, note that~\cref{th:necessary} does not require this distribution to be fully observable to break the trade-off between performance and bias removal. Therefore, assuming this dataset exists (\ie we can collect samples of $X$ from it), we consider three possibilities regarding the observability of target $Y$ and protected attribute $A$: 1)~\textit{Fully~Observable}: where we can collect both target and protected attribute labels, in which case the existing methods can directly use this distribution to remove bias; 2)~\textit{Partially~Observable}: where we cannot collect target labels, in which case the existing methods cannot use this dataset since they require both target and attribute labels; and 3)~\textit{Non~Observable}: where we can collect neither target nor attribute labels, in which case the existing methods cannot be used.

In practice, as we will show in~\cref{sec:exp}, one can collect samples of $X$ from a \Univ{} distribution by either directly using samples from large-scale web-scraped datasets, or using pretrained generative models. However, in both cases, collecting target labels is particularly expensive, since there are many downstream tasks each requiring its own target label; additionally, we have to relabel a large-scale dataset for every new downstream task that emerges over time. In contrast, collecting protected attribute labels are more feasible since there are only a small number of protected attributes, and once the labels are collected, they can be used with any downstream task\footnote{We further discuss the feasibility of collecting protected attribute labels in~\cref{appsec:feasibility}.}. This motivates the development of attribute bias removal methods that only require access to attribute labels, \ie can utilize a partially observable \Univ{} distribution, which we will consider in the next section. 
We will also consider developing methods that can utilize a non-observable \Univ{} distribution in~\cref{appsec:SSL}.

\section{Adversarial Filtering of Attribute Bias}
\begin{figure*}
  \centering
  \includegraphics[width=0.95\linewidth]{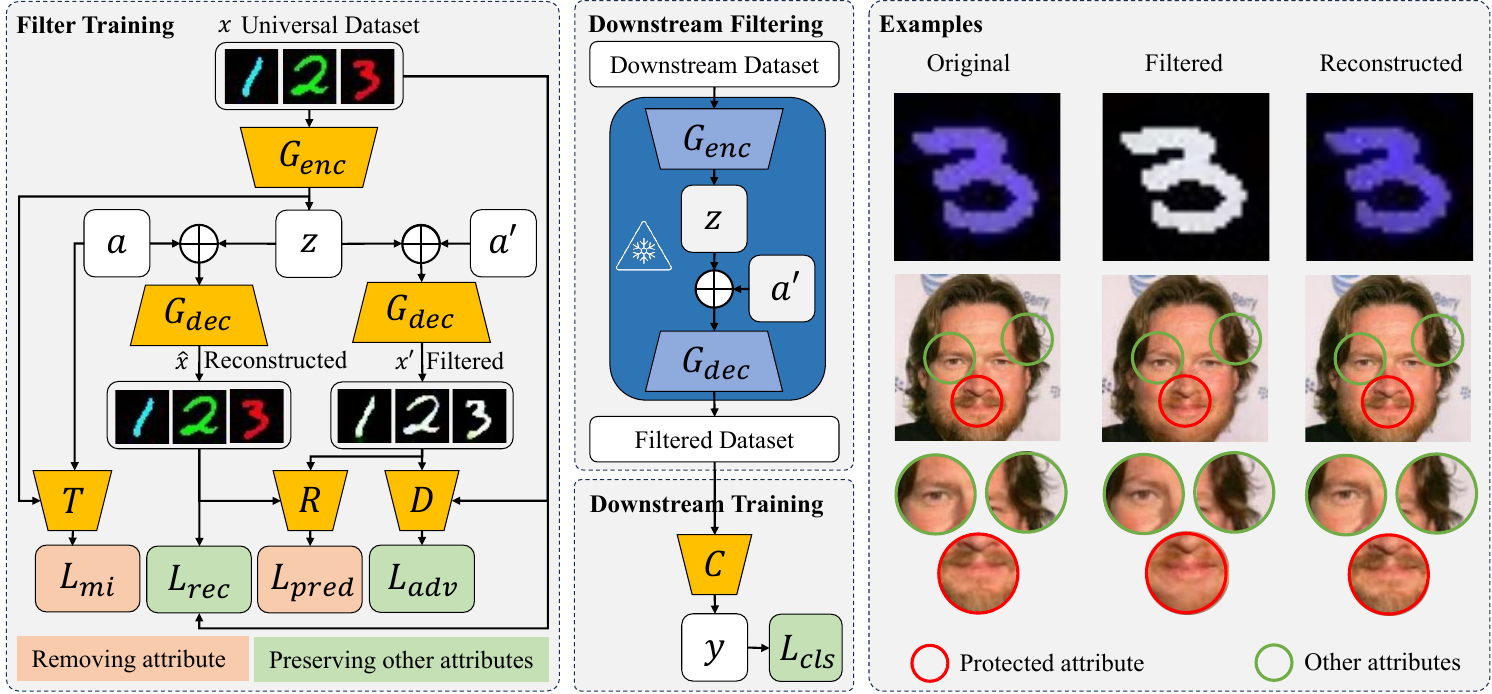}
  \caption{Summary of the proposed method. (Left) shows the training mechanism of the filter ($G_{dec} \circ G_{enc}$) with samples from a~\Univ{} distribution, where the protected attribute is removed and other attributes are preserved. (Middle) shows the use of the filter in a downstream task, where the frozen pretrained filter is first used to remove the protected attribute from the downstream dataset (top), and then the resulting filtered dataset is used to train a classifier $C$ with cross-entropy loss $L_{cls}$ (bottom). (Right) shows the application of the trained filtered on two examples from Colored MNIST and CelebA.}
  \label{fig:method}
\end{figure*}
\label{sec:method}

As discussed in~\cref{sec:necessary}, having access to a \Univ{} distribution can help mitigate the limitation of existing methods in removing strong attribute bias. However, when this distribution is only partially observable, \ie target labels are missing and only protected attribute labels are available, the existing bias removal methods cannot directly utilize the \Univ{} distribution. In this section, we aim to propose a method that can utilize a partially observable \Univ{} distribution to better remove strong attribute bias. The intuition behind our method is to develop a same-space filter that removes a protected attribute, while maximally preserving all other attributes, thus being target-agnostic. Being a same-space filter (\ie transforming images into images) allows various downstream tasks to use it as a simple preprocessing mechanism to remove protected attributes without having to adapt their architecture to accept inputs from a particular latent space.

\cref{fig:method} provides a summary of our proposed method. We will assume the inputs are images in explaining our proposed method, which can be readily generalized to other modalities, as shown in the census experiments in~\cref{sec:exp}. Given an image $x\in\mathcal{X}$ with protected attribute $a\in\mathcal{A}$, we use an encoder $G_{enc}: \mathcal{X} \rightarrow \mathcal{Z}$ to map $x$ to latent representation $z = G_{enc}(x)$, and an attribute-conditioned decoder $G_{dec}: \mathcal{Z}\times\mathcal{A} \rightarrow \mathcal{X}$ to reconstruct the original image $\hat{x} = G_{dec}(z, a)$ and produce a corresponding filtered image $x' = G_{dec}(z, a')$, where $a'\in\mathcal{A}$ is a constant value for all input images, representing a \textit{neutral value} of the attribute. For example, in Colored MNIST, we choose $a'$ to be a constant RGB color, and in the case of the discrete attribute in CelebA, $a'$ is the uniform categorical distribution. The target of our optimization is $x'$, where we need protected attribute information to be removed (\ie constant attribute), while all other information about $x$ is preserved.

\noindent
\textbf{Removing Protected Attribute.}
First, to be able to generate the filtered image $x'$ by swapping the attribute $a$ with its neutral value $a'$, we need to ensure the representation $z$ and $a$ are disentangled. To achieve this, we minimize the mutual information between their corresponding random variables $Z$ and $A$, where we adopt MINE~\cite{MINE} and use an auxiliary neural network $T: \mathcal{Z}\times\mathcal{A} \rightarrow \mathbb{R}$ for estimating $I(Z;A)$:
\begin{align}
\label{eq:mi}
	\mathcal{L}^G_{mi} =\max_T \mathbb{E}_{Z,A} T(z,a) - \log\mathbb{E}_{{Z \otimes A}} e^{T(z,a)}
\end{align}
\noindent
where $\mathbb{E}_{Z, A}$ and $\mathbb{E}_{{Z \otimes A}}$ represent the joint and product distribution of latent features and attributes, respectively.
Next, to ensure that the reconstruction $\hat{x}$ and the filtered image $x'$ contain the respective attributes, $a$ and $a'$, we introduce a regressor $R: \mathcal{X} \rightarrow \mathcal{A}$ trained to achieve:
\begin{align}
\label{eq:reg}
    R^* = \argmin_R \mathcal{L}_{reg}(R(\hat{x}), a)
\end{align}
where $\mathcal{L}_{reg}$ is an appropriate regression loss (L2 loss for continuous attributes and cross-entropy loss for discrete attributes). Then, the following loss ensures the generated images contain the respective attributes:
\begin{align}
    \mathcal{L}^G_{pred} = \mathcal{L}_{reg}(R^*(\hat{x}), a) + \mathcal{L}_{reg}(R^*(x'), a')
\end{align}
 
\noindent
\textbf{Preserving Other Attributes.}
To ensure the minimal loss of information and preserve other attributes in the filtered image $x^{\prime}$, we introduce two reconstruction losses inspired by~\cite{attgan}. First, an $L_1$ reconstruction loss on the reconstructed image:
\begin{align}
\label{rec:rec}
	\mathcal{L}^G_{rec} = \mathbb{E}_{X,\hat{X}}\norm{x - \hat{x}}_1
\end{align}

\noindent
Second, since $L_1$ reconstruction is too strict on the filtered image $x'$, we introduce an adversarial loss for matching it with the original image $x$. We follow WGAN~\cite{wgan} with a critic neural network $D: \mathcal{X} \rightarrow \mathbb{R}$ for the following loss:
\begin{align}
\label{eq:adv}
    \mathcal{L}^G_{adv} = \max_{\norm{D}_L = 1} \mathbb{E}_{X} D(x) - \mathbb{E}_{X'} D(x')
\end{align}
\noindent
where the Lipschitz constraint $\norm{D}_L = 1$ on $D$ is enforced through gradient penalty~\cite{wgan_gp}. 
 
\noindent
\textbf{Overall.} The overall loss that is minimized over $G:\{G_{enc}, G_{dec}\}$ to train the filter is as follows:
\begin{align}
\label{eq:g_loss}
	\mathcal{L}^{G}_{total} = \mathcal{L}^{G}_{adv} + \lambda_{mi} \mathcal{L}^G_{mi} + \lambda_{pred} \mathcal{L}^G_{pred} + \lambda_{rec} \mathcal{L}^G_{rec}
\end{align}

\noindent
where $\lambda_{mi}$, $\lambda_{pred}$ and $\lambda_{rec}$ are hyper-parameters that balance losses to optimize $G$. In practice, we alternate between optimizing $T, R, D$ under \cref{eq:mi,eq:reg,eq:adv}, respectively, and optimizing $G$ under \cref{eq:g_loss} every other step. After training, the filter $G_{dec}\circ G_{enc}: \mathcal{X} \rightarrow \mathcal{X}$ is directly applied in various downstream tasks to remove the protected attribute.\footnote{\label{fnote_NN}Details of our neutral networks are provided in~\cref{appsec:training_details}.}
In all experiments, we use a two-stage training scheme: First, we train our filter on either the biased training set itself or samples from a \Univ{} distribution (when its availability is assumed); second, we apply the filter to the biased training set and then train the baseline neural network classifier on the filtered samples with cross-entropy loss. 
We provide several ablation studies on the hyperparameters of our method in~\cref{subsec:abalation}.

\section{Experimental Evaluation}
\begin{table*}[htbp]
\caption{Performance of attribute bias removal methods under extreme bias in CelebA dataset (\emph{TrainEx} training set) to predict \textit{blond hair}. $\Delta$ indicates the difference from baseline, and \textbf{Bold} highlights best results. For our method, we report inside parentheses the partially observable \Univ{} distribution used in addition to \emph{TrainEx} for training its filter. Without a \Univ{} distribution, none of the methods can effectively remove the bias $I(Z;A)$ compared to baseline.
}
\label{tab:CelebA_BlondHair}
\centering
\resizebox{0.66\textwidth}{!}{%

\begin{tabular}{lcccc}
\toprule
\multirow{2}{*}{Method}          & \multicolumn{2}{c}{Test Accuracy}         & \multicolumn{2}{c}{Mutual Information} \\
\cmidrule(lr){2-3}  \cmidrule(lr){4-5} 
                                 & Unbiased ↑          & Bias-conflicting ↑  & $I(Z;A)$ ↓          & $\Delta$ (\%) ↑  \\
                                 \midrule
                             Random guess                          & 50.00  & 50.00 & 0.57 & 0.00             \\    
Baseline                          & 66.11{\scriptsize $\pm$0.32} & 33.89{\scriptsize $\pm$0.45} & 0.57{\scriptsize $\pm$0.01} & 0.00             \\
\midrule
LNL~\cite{learn_not_to_learn_Colored_MNIST}                               & 64.81{\scriptsize $\pm$0.17} & 29.72{\scriptsize $\pm$0.26} & 0.56{\scriptsize $\pm$0.06} & 1.75             \\
DI~\cite{domain_independent_training}                                & 66.83{\scriptsize $\pm$0.44} & 33.94{\scriptsize $\pm$0.65} & 0.55{\scriptsize $\pm$0.02} & 3.51             \\
LfF~\cite{LfF_CelebA_Bias_conflicting}                               & 64.43{\scriptsize $\pm$0.43} & 30.45{\scriptsize $\pm$1.63} & 0.57{\scriptsize $\pm$0.03} & 0.00             \\
EnD~\cite{End}                               & 66.53{\scriptsize $\pm$0.23} & 31.34{\scriptsize $\pm$0.89} & 0.57{\scriptsize $\pm$0.05} & 0.00             \\
CSAD~\cite{CSAD}                              & 63.24{\scriptsize $\pm$2.36} & 29.13{\scriptsize $\pm$1.26} & 0.55{\scriptsize $\pm$0.04} & 3.51             \\
BCL~\cite{BCL}                               & 65.30{\scriptsize $\pm$0.51} & 33.44{\scriptsize $\pm$1.31} & 0.56{\scriptsize $\pm$0.07} & 1.75             \\
Ours                     & 66.31{\scriptsize $\pm$0.26} & 32.22{\scriptsize $\pm$0.43} & 0.55{\scriptsize $\pm$0.01} & 3.51            \\
\midrule
Ours (FFHQ)       & \textbf{71.53{\scriptsize $\pm$0.67}} & 47.17{\scriptsize $\pm$0.72} & 0.47{\scriptsize $\pm$0.01} & 17.54            \\
Ours (Synthetic)  & 71.37{\scriptsize $\pm$0.64} & \textbf{48.06{\scriptsize $\pm$0.82}} & \textbf{0.45{\scriptsize $\pm$0.01}}  & \textbf{21.05}  \\
\bottomrule
\end{tabular}

}%
\end{table*}

\begin{table*}[t]
\caption{Performance of attribute bias removal methods under extreme bias in Adult dataset (\emph{TrainEx} training set) to predict \textit{income}. $\Delta$ indicates the difference from baseline, and \textbf{Bold} highlights best results. For our method, we report inside parentheses the partially observable \Univ{} distribution used in addition to \emph{TrainEx} for training its filter. Without a \Univ{} distribution, none of the methods can effectively remove the bias $I(Z;A)$ compared to baseline.
}
       
        \label{tab:Adult_mostEx}
        \centering
        \resizebox{0.66\textwidth}{!}{%
\begin{tabular}{lcccc}
\toprule
\multirow{2}{*}{Method}  & \multicolumn{2}{c}{Test Accuracy}         & \multicolumn{2}{c}{Mutual Information} \\
\cmidrule(lr){2-3}  \cmidrule(lr){4-5} 
                         & Unbiased ↑          & Bias-conflicting ↑  & $I(Z;A)$ ↓       & $\Delta$ (\%) ↑      \\
                         \midrule
Random guess             & 50.00                  & 50.00                   & 0.69             & 0.00                \\
Baseline                 & 50.59{\scriptsize $\pm$0.54}          & 1.19{\scriptsize $\pm$0.83}           & 0.69{\scriptsize $\pm$0.00}        & 0.00                \\
\midrule
LNL~\cite{learn_not_to_learn_Colored_MNIST}                      & 50.10{\scriptsize $\pm$0.18}          & 0.43{\scriptsize $\pm$0.46}           & 0.69{\scriptsize $\pm$0.01}        & 0.00                \\
DI~\cite{domain_independent_training}                       & 50.61{\scriptsize $\pm$0.28}          & 0.65{\scriptsize $\pm$0.64}           & 0.69{\scriptsize $\pm$0.01}        & 0.00                \\
LfF~\cite{LfF_CelebA_Bias_conflicting}                      & 50.33{\scriptsize $\pm$0.34}          & 0.78{\scriptsize $\pm$0.65}           & 0.69{\scriptsize $\pm$0.01}        & 0.00                \\
EnD~\cite{End}                      & 50.59{\scriptsize $\pm$0.75}          & 1.18{\scriptsize $\pm$0.96}           & 0.69{\scriptsize $\pm$0.00}        & 0.00                \\
CSAD~\cite{CSAD}                     & 50.76{\scriptsize $\pm$2.22}          & 1.43{\scriptsize $\pm$2.46}           & 0.69{\scriptsize $\pm$0.01}        & 0.00                \\
BCL~\cite{BCL}                      & 50.83{\scriptsize $\pm$1.34}          & 0.52{\scriptsize $\pm$0.83}           & 0.69{\scriptsize $\pm$0.00}        & 0.00                \\
Ours                     & 50.09{\scriptsize $\pm$0.81}          & 0.64{\scriptsize $\pm$1.01}           & 0.69{\scriptsize $\pm$0.01}            & 0.00            \\
\midrule
Ours (Universal) & \textbf{74.93{\scriptsize $\pm$0.95}} & \textbf{57.63{\scriptsize $\pm$1.30}} & \textbf{0.45{\scriptsize $\pm$0.00}}   & \textbf{34.78}  \\
\bottomrule
\end{tabular}
        }
\end{table*}
\begin{figure*}[b!]

    \begin{minipage}{1\textwidth}
        \captionsetup{type=table}
        \caption{Area under the curve (AUC) in the strong bias region of CelebA dataset.}
        \label{tab:CelebA_AUC}
        \centering
\begin{tabular}{lcccc}
\toprule
\multirow{2}{*}{Method} & \multicolumn{2}{c}{AUC of Test Accuracy}  & \multicolumn{2}{c}{AUC of Mutual Information} \\
\cmidrule(lr){2-3}  \cmidrule(lr){4-5} 
                        & Unbiased ↑          & Bias-conflicting ↑  & $I(Z;A)$ ↓              & $\Delta$ (\%) ↑     \\
                        \midrule
                        Random guess & 17.50 & 17.50 & 0.15 & 0.00 \\
Baseline                & 24.67{\scriptsize $\pm$0.72}          & 17.18{\scriptsize $\pm$1.62}          & 0.15{\scriptsize $\pm$0.01}               & 0.00                \\
\midrule
LNL~\cite{learn_not_to_learn_Colored_MNIST}                     & 26.81{\scriptsize $\pm$0.97}          & 21.58{\scriptsize $\pm$0.95}          & 0.12{\scriptsize $\pm$0.03}               & 20.00               \\
DI~\cite{domain_independent_training}                      & 27.53{\scriptsize $\pm$0.92}          & 23.81{\scriptsize $\pm$0.76}          & 0.12{\scriptsize $\pm$0.01}               & 20.00               \\
LfF~\cite{LfF_CelebA_Bias_conflicting}                     & 26.79{\scriptsize $\pm$1.16}          & 23.78{\scriptsize $\pm$1.24}          & 0.11{\scriptsize $\pm$0.01}               & 26.67               \\
EnD~\cite{End}                     & 27.31{\scriptsize $\pm$0.96}          & 21.42{\scriptsize $\pm$0.88}          & 0.12{\scriptsize $\pm$0.03}               & 20.00               \\
CSAD~\cite{CSAD}                    & 27.43{\scriptsize $\pm$1.57}          & 22.06{\scriptsize $\pm$0.97}          & 0.12{\scriptsize $\pm$0.02}               & 20.00               \\
BCL~\cite{BCL}                     & 27.82{\scriptsize $\pm$0.66}          & 23.53{\scriptsize $\pm$1.32}          & 0.12{\scriptsize $\pm$0.03}               & 20.00               \\
\midrule
Ours                    & 28.90{\scriptsize $\pm$0.94}          & 24.61{\scriptsize $\pm$0.79}          & 0.11{\scriptsize $\pm$0.01}               & 26.67               \\
Ours (FFHQ)             & \textbf{30.29{\scriptsize $\pm$0.68}} & 25.83{\scriptsize $\pm$1.00}          & \textbf{0.10{\scriptsize $\pm$0.01}}      & \textbf{33.33}      \\
Ours (Synthetic)        & 30.20{\scriptsize $\pm$0.85}          & \textbf{26.04{\scriptsize $\pm$1.22}} & \textbf{0.10{\scriptsize $\pm$0.01}}      & \textbf{33.33}     \\
\bottomrule
\end{tabular}
\end{minipage}


    \begin{minipage}{1\textwidth}
        \centering
      \subfloat[\emph{Unbiased} testing set.]{\includegraphics[width=0.39\linewidth]{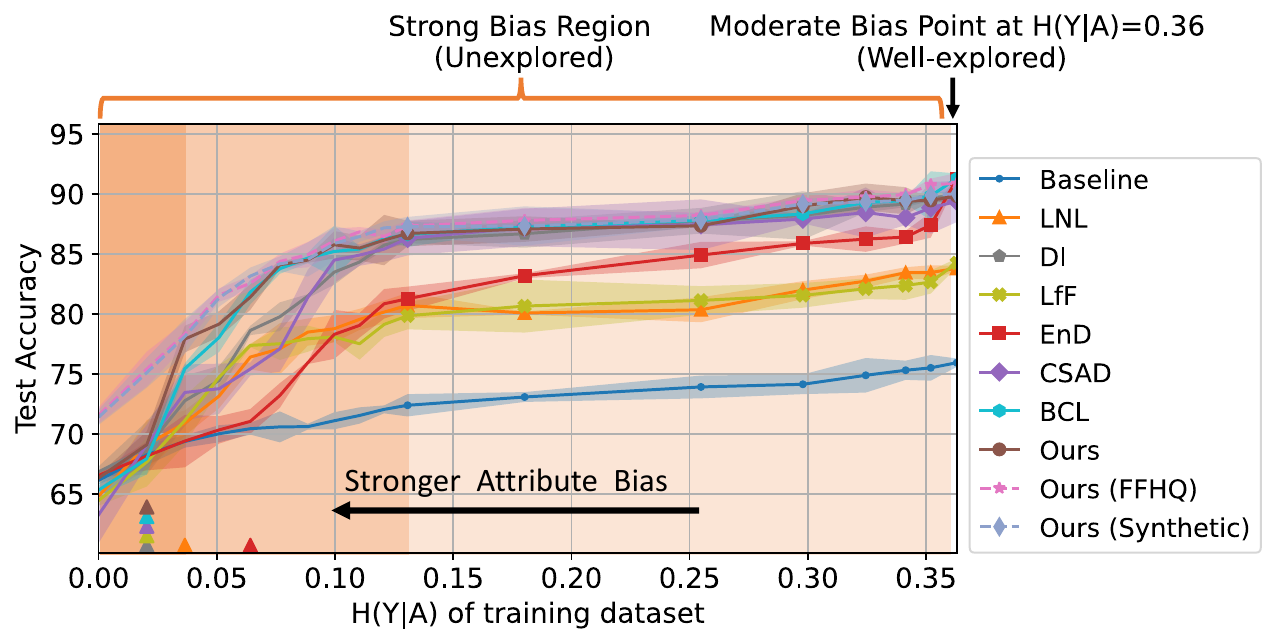}\label{fig:CelebA_Unbiased_Acc}}
      \subfloat[\emph{Bias-conflicting} testing set.]{\includegraphics[width=0.3\linewidth]{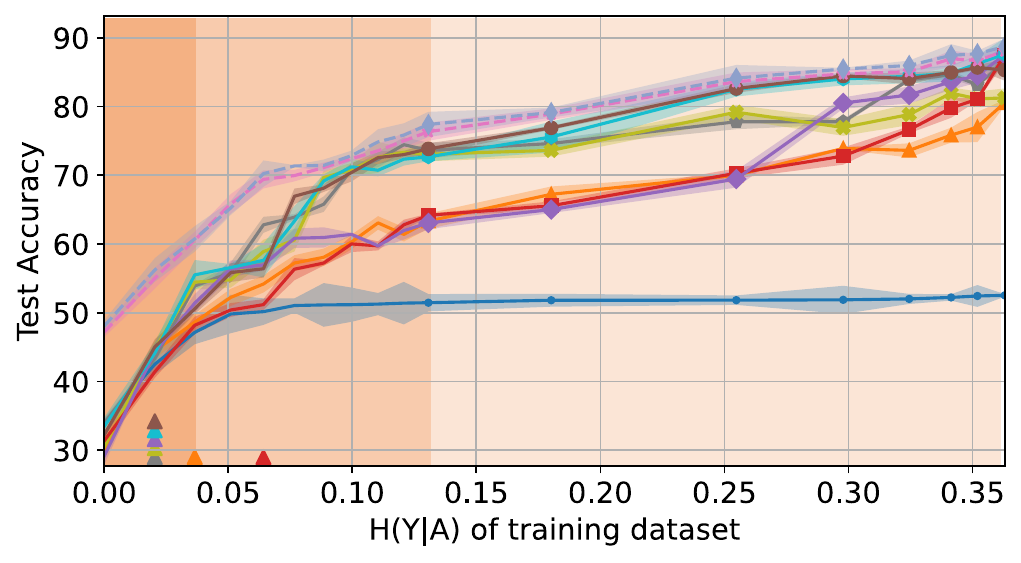}\label{fig:CelebA_conflict_Acc}}
      \subfloat[Attribute bias $I(Z;A)$.]{\includegraphics[width=0.3\linewidth]{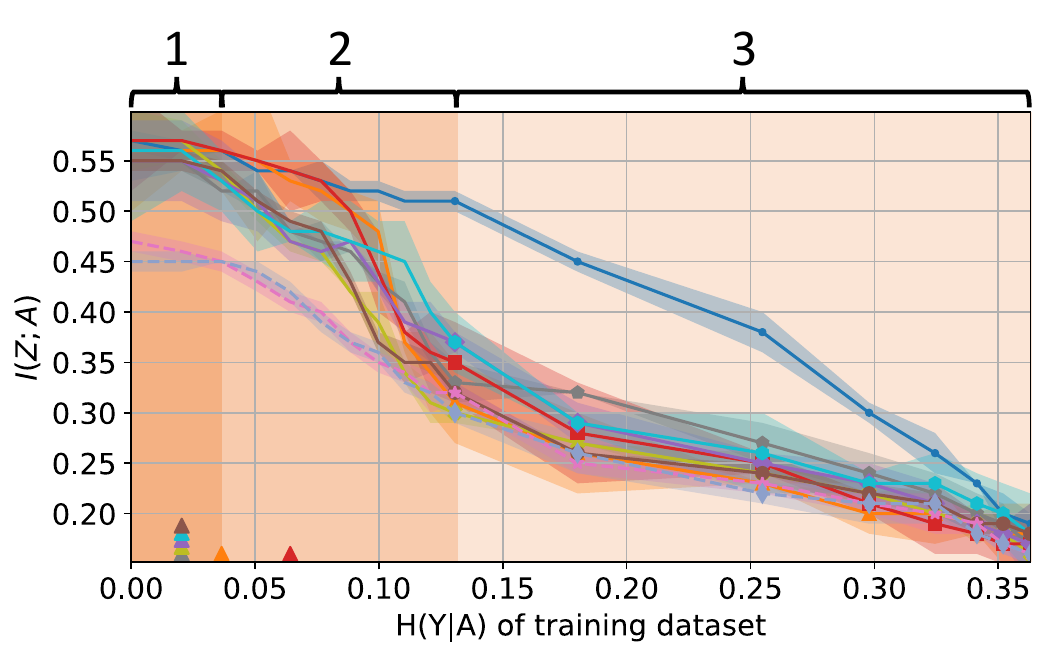}\label{fig:CelebA_IZA}}\quad
  
  \caption{Accuracy and mutual information under different bias strengths in CelebA. As bias strength increases (moving from right to left), the performance of all methods degrades and sharply declines to baseline at the breaking point (labeled by $\blacktriangle$).}
  \label{fig:EB_CelebA}
    \end{minipage}%

\end{figure*}

\begin{figure*}[htbp]

    \begin{minipage}{1\textwidth}
        \captionsetup{type=table}
        \caption{Area under the curve (AUC) in the strong bias region of Adult dataset.}
        \label{tab:Adult_AUC}
        \centering
\begin{tabular}{lcccc}
\toprule
\multirow{2}{*}{Method} & \multicolumn{2}{c}{AUC of Test Accuracy}  & \multicolumn{2}{c}{AUC of Mutual Information} \\
\cmidrule(lr){2-3}  \cmidrule(lr){4-5} 
                        & Unbiased ↑          & Bias-conflicting ↑  & $I(Z;A)$ ↓              & $\Delta$ (\%) ↑      \\
\midrule
Random guess & 34.00 & 34.00 & 0.38 & 0.00 \\
Baseline                & 46.36{\scriptsize $\pm$1.54}          & 27.38{\scriptsize $\pm$4.64}          & 0.38{\scriptsize $\pm$0.02}               & 0.00                \\
\midrule
LNL~\cite{learn_not_to_learn_Colored_MNIST}                     & 48.36{\scriptsize $\pm$0.49}          & 31.41{\scriptsize $\pm$1.25}          & 0.32{\scriptsize $\pm$0.02}               & 15.79               \\
DI~\cite{domain_independent_training}                      & 48.38{\scriptsize $\pm$0.53}          & 31.30{\scriptsize $\pm$1.09}          & 0.34{\scriptsize $\pm$0.01}               & 10.53               \\
LfF~\cite{LfF_CelebA_Bias_conflicting}                     & 48.57{\scriptsize $\pm$0.50}          & 31.64{\scriptsize $\pm$1.17}          & 0.33{\scriptsize $\pm$0.02}               & 13.16               \\
EnD~\cite{End}                     & 48.42{\scriptsize $\pm$0.71}          & 30.10{\scriptsize $\pm$1.08}          & 0.32{\scriptsize $\pm$0.02}               & 15.79               \\
CSAD~\cite{CSAD}                    & 47.58{\scriptsize $\pm$0.69}          & 31.11{\scriptsize $\pm$1.54}          & 0.34{\scriptsize $\pm$0.02}               & 10.53               \\
BCL~\cite{BCL}                     & 48.54{\scriptsize $\pm$0.73}          & 31.20{\scriptsize $\pm$1.17}          & 0.33{\scriptsize $\pm$0.01}               & 13.16               \\
\midrule
Ours                    & 50.29{\scriptsize $\pm$0.44}          & 33.63{\scriptsize $\pm$0.99}          & 0.31{\scriptsize $\pm$0.01}               & 18.42               \\
Ours (Universal)        & \textbf{53.54{\scriptsize $\pm$0.59}} & \textbf{45.16{\scriptsize $\pm$1.18}} & \textbf{0.25{\scriptsize $\pm$0.01}}      & \textbf{34.21}     \\
\bottomrule
\end{tabular}
\end{minipage}


    \begin{minipage}{1\textwidth}
        \centering
      \subfloat[\emph{Unbiased} testing set.]{\includegraphics[width=0.39\linewidth]{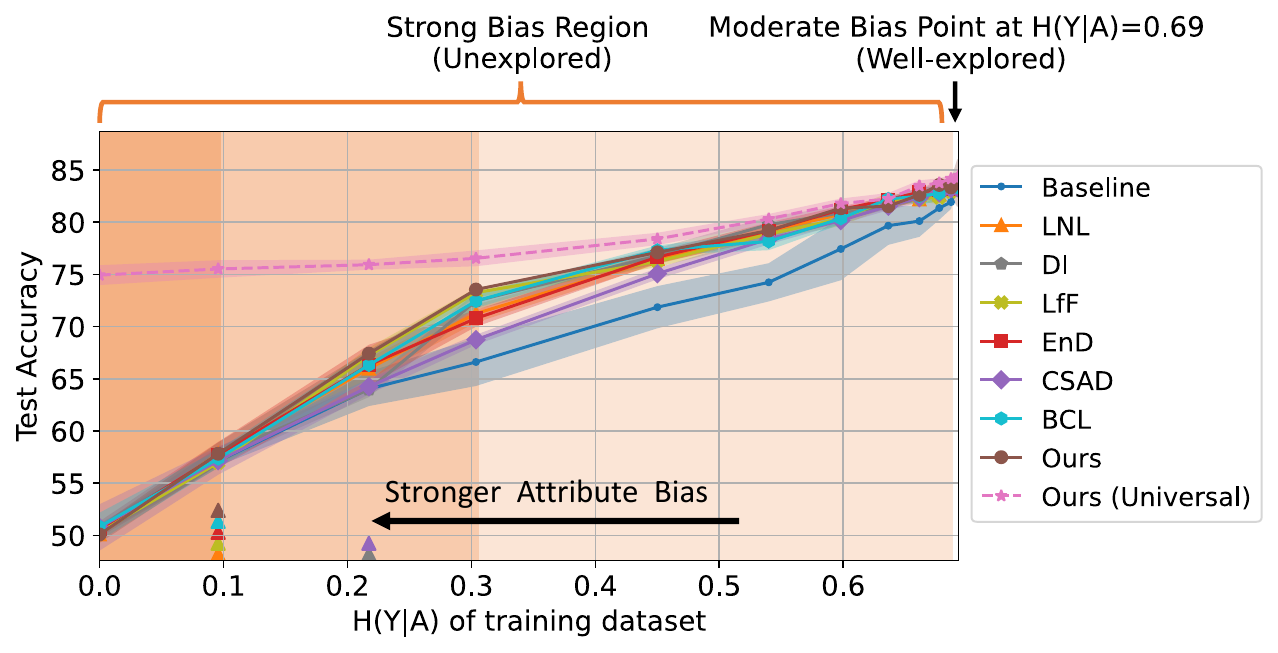}\label{fig:Adult_Unbiased_Acc}}\quad
      \subfloat[\emph{Bias-conflicting} testing set.]{\includegraphics[width=0.285\linewidth]{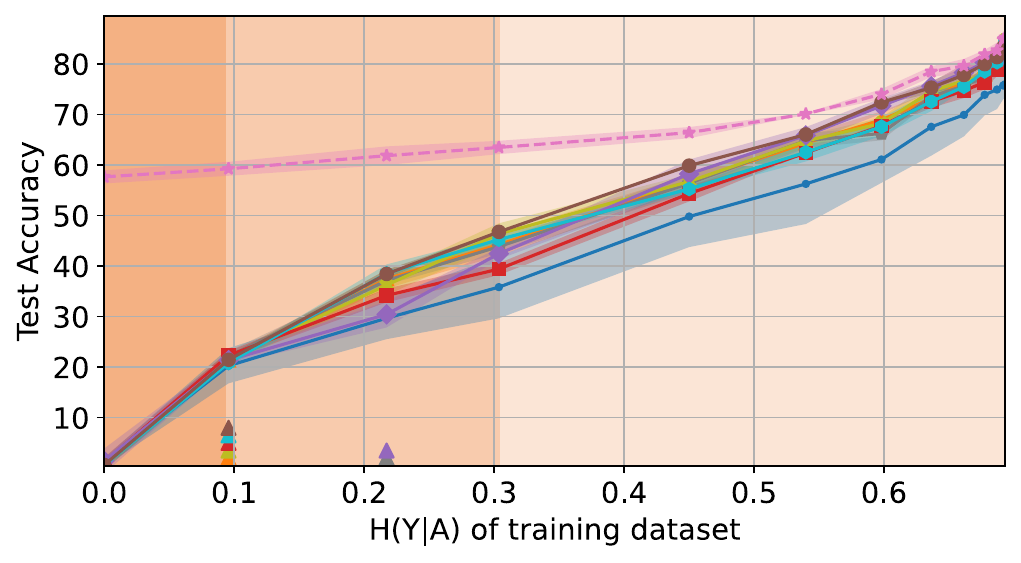}\label{fig:Adult_conflict_Acc}}\quad
      \subfloat[Attribute bias $I(Z;A)$.]{\includegraphics[width=0.285\linewidth]{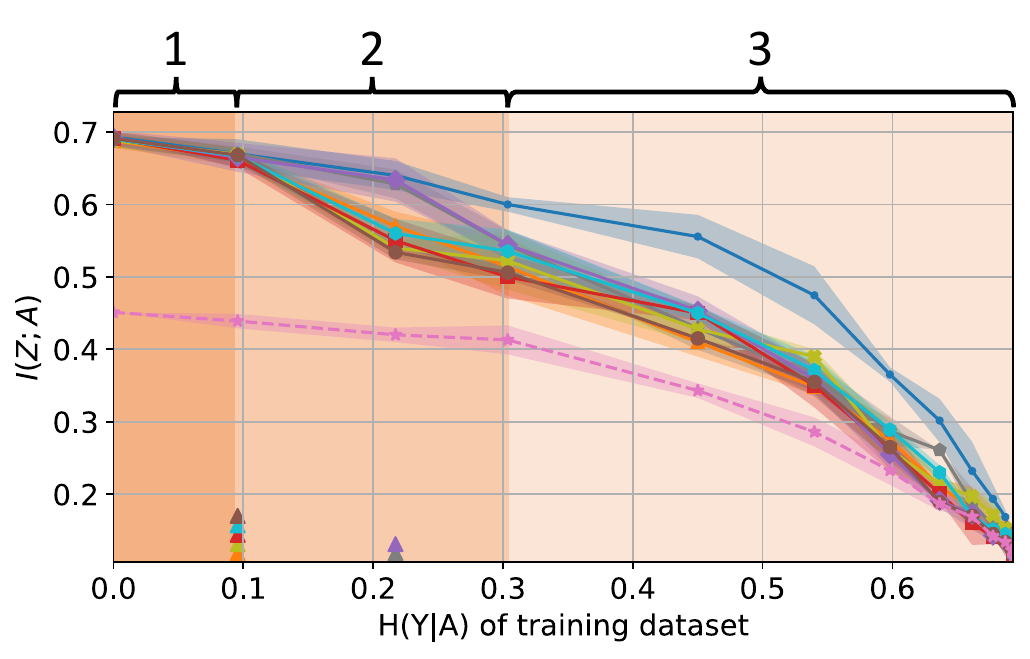}\label{fig:Adult_IZA}}\quad
      

        \caption{Accuracy and mutual information under different bias strengths in Adult. As bias strength increases (moving from right to left), the performance of all methods degrades and sharply declines to baseline at the breaking point (labeled by $\blacktriangle$).}
  \label{fig:EB_Adult}
    \end{minipage}%

\end{figure*}
\label{sec:exp}

In this section, we conduct experiments with an extensive list of existing state-of-the-art attribute bias removal methods~\cite{learn_not_to_learn_Colored_MNIST,domain_independent_training, LfF_CelebA_Bias_conflicting, End, CSAD, BCL} in Colored MNIST as well as two real-world datasets: CelebA~\cite{CelebA} as an image dataset and Adult~\cite{UCI_ML_Repo} as a census dataset.
We also compare our method with several GAN-based approaches~\cite{CGN,Camel,BiaSwap,GAN_Debiasing_hat_glasses_correlation}. In all experiments, we report average results with one standard deviation over multiple trials (15 trials in Colored MNIST, 5 in CelebA, 25 in Adult). Please see~\cref{appsec:IMDB,appsec:Waterbirds,appsec:modalities} for results on additional datasets (IMDB~\cite{IMDB}, Waterbirds~\cite{DRO}, and CivilComments-WILDS~\cite{CivilComments,Wilds}), and training details. Code for all experiments and models will be publicly released.

\noindent
\textbf{Colored MNIST Dataset}
is an image dataset of handwritten digits, where each digit is assigned a unique RGB color with a certain variance, studied by these methods~\cite{learn_not_to_learn_Colored_MNIST,Back_MI,End,CSAD}. The training set consists of 50000 images and the testing set of 10000 images with uniformly random color assignment. The color is considered the protected attribute $A$ and the digit is the target $Y$. The variance of color in the training set determines the strength of the bias $H(Y|A)$. 
\Univ{} dataset is constructed in a synthetic manner by assigning random colors to digits.
The results on this dataset are reported in~\cref{fig:EB} and explained in~\cref{sec:introduction}.

\noindent
\textbf{CelebA Dataset}~\cite{CelebA} is an image dataset of human faces studied by these methods~\cite{learn_not_to_learn_Colored_MNIST,domain_independent_training,LfF_CelebA_Bias_conflicting,End,CSAD,BCL}. Facial attributes are considered the prediction target $Y$ (\eg blond hair), and sex is the protected attribute $A$. For each target, there is a notion of \textit{biased samples} -- images in which $Y$ is positively correlated with $A$, \eg images of females with blond hair and males without blond hair -- and a notion of \textit{bias-conflicting samples} -- images in which $Y$ is negatively correlated with $A$, \eg images of females without blond hair and males with blond hair. The fraction of bias-conflicting images in the training set determines the strength of the bias $H(Y|A)$. For training, we consider the original training set of CelebA denoted \textit{TrainOri} consisted of 162770 images with $H(Y|A)=0.36$, and an extreme bias version in which the bias-conflicting samples are removed from the original training set denoted \textit{TrainEx} consisted of 89754 images with $H(Y|A)=0$. Additionally, we construct 16 training sets between TrainOri and TrainEx by maintaining the number of biased samples and varying the fraction of bias-conflicting samples. For testing, we consider two versions of the original testing set: 1) \emph{Unbiased} consists of 720 images in which all pairs of target and protected attribute labels have the same number of samples, and (2) \emph{Bias-conflicting} consists of 360 images in which biased samples are excluded from the \emph{Unbiased} dataset (only bias-conflicting samples remain). 
We consider two choices for a \Univ{} dataset: 1) appending \emph{TrainEx} with the FFHQ dataset~\cite{ffhq}, and 2) appending \emph{TrainEx} with a same-sized synthetic dataset where images are randomly generated using~\cite{CAT}.

\noindent
\textbf{Adult Dataset}~\cite{UCI_ML_Repo} is a census dataset of income which is a well-known fairness benchmark. Income is considered the target $Y$ and sex is the protected attribute $A$. To construct training and testing sets, we follow the setup of CelebA explained above, but we further mitigate the effect of data imbalance and the variation in the total number of training samples. For training, we consider the balanced version of the original training set of Adult denoted \textit{TrainOri} consisted of 7076 records with $H(Y|A)=0.69$, and an extreme bias version in which the bias-conflicting samples are removed from TrainOri and the same number of biased samples are appended denoted \textit{TrainEx} with $H(Y|A)=0$ consisted of the same total number (7076) of records as TrainOri. Additionally, we construct 11 training sets in between TrainOri and TrainEx by varying the fraction of biased samples in TrainEx while maintaining the total size of training set. For testing, we consider two versions of the original testing set: 1) \emph{Unbiased} consists of 7076 records in which all pairs of target and protected attribute labels have the same number of samples, and (2) \emph{Bias-conflicting} consists of 3538 records in which biased samples are excluded from the \emph{Unbiased} dataset (only bias-conflicting samples remain). We utilize TrainOri training set, excluding target labels, as a \Univ{} dataset.

\subsection{Analysis of the Extreme Bias Point $H(Y|A)=0$}
\label{subsec:comp_extreme}
In this section, we investigate the consequences of applying attribute bias removal methods at the extreme bias point $H(Y|A)=0$.
We study two aspects of each method, its classification performance (measured by accuracy on Unbiased and Bias-conflicting settings) and its ability to remove bias (measured by estimating $I(Z;A)$ using~\cite{MINE} on the training set). Ideally, a method must achieve on-par or better accuracy than the baseline while learning a representation $Z$ that does not reflect the attribute bias present in the training set, hence successfully removing the bias, \ie $I(Z;A)=0$.
However, as shown in~\cref{tab:CelebA_BlondHair,tab:Adult_mostEx}, without a \Univ{} distribution, none of the bias removal methods can significantly reduce the bias $I(Z;A)$ in the extreme bias setting in either CelebA or Adult datasets. These observations are explained by~\cref{th:extreme} which states that maintaining classification performance above random guess while achieving $I(Z;A)=0$ at $H(Y|A)=0$ is impossible. Note that the methods achieve better than random accuracy because they do not completely remove the bias.

When given access to a \Univ{} distribution, we observe that our proposed method can now significantly improve the performance and the amount of removed bias in both synthetic (Colored MNIST in~\cref{fig:EB}) and real-world datasets (CelebA and Adult in~\cref{tab:CelebA_BlondHair,tab:Adult_mostEx}). Note that none of the existing methods can directly utilize the access to the \Univ{} distribution due to the lack of target labels which is required by these methods. Nonetheless, it is possible to enable all methods to utilize the additional distribution by using pseudo-labeling, which we will explore in~\cref{subsec:all_universal}.

\subsection{Analysis of the Strong Bias Region $H(Y|A)>0$}
\label{subsec:comp_strong}
In this section, we go beyond the extreme bias point, and more generally investigate the consequences of applying bias removal methods on the entire range of bias strength, \ie connecting the extreme bias training setting (TrainEx) we studied in~\cref{subsec:comp_extreme} to the moderate bias in the original training setting (TrainOri) commonly studied in existing methods. We again study two aspects of each method, its classification performance (measured by accuracy on Unbiased and Bias-conflicting settings) and its ability to remove bias (measured by estimating $I(Z;A)$ using~\cite{MINE} on the training set).

Without access to a ~\Univ{} distribution, in~\cref{fig:EB_CelebA,fig:EB_Adult}, we observe a decline in the performance of all methods as the bias becomes stronger, in both CelebA and Adult datasets, similar to our prior observation in Colored MNIST in~\cref{fig:EB}. This observation is consistent with~\cref{th:general}, which states that the bias strength determines an upper bound on the best performance of bias removal methods regardless of the dataset and method. Furthermore, in~\cref{fig:CelebA_IZA,fig:Adult_IZA}, we use breaking points (as defined in~\cref{sec:introduction}) to approximately divide the strong bias region into three phases and explain the observed changes in the performance of methods from the perspective of~\cref{th:general}. In phase 1, as $H(Y|A)$ increases from zero to the breaking point (bias strength decreases), we observe that the attribute bias $I(Z;A)$ is not minimized because of the trade-off between best attainable performance $I(Z;Y)$ and attribute bias removal when bias is very strong: the methods choose to increase accuracy towards the best attainable accuracy $I(Z;Y)$ rather than removing attribute bias (this choice is most likely due to the larger weight on the accuracy term in their objectives). Then, in phase 2, as $H(Y|A)$ increases through the breaking point (bias strength decreases further), the methods start to minimize attribute bias $I(Z;A)$ because the upper bound on best attainable performance $I(Z;Y)$ is now large enough to avoid the trade-off between accuracy and attribute bias removal.
Finally, in phase 3, as $H(Y|A)$ further departs from the breaking point, accuracy gradually approaches its best attainable performance, while attribute bias $I(Z;A)$ is minimized further below that of the baseline because the weaker bias strength now allows the model to distinguish $Y$ from $A$ so that minimizing attribute bias and maximizing accuracy do not compete.

To better quantify the performance and compare different methods across the entire strong bias region, in~\cref{tab:CelebA_AUC,tab:Adult_AUC}, we report the area under the curves in~\cref{fig:EB_CelebA,fig:EB_Adult}, respectively. We observe that our proposed method achieves the best performance in both datasets and in all metrics (accuracy and bias removal). In addition, it achieves better or on-par breaking points with existing methods. The same observation holds in the Colored MNIST dataset in~\cref{fig:EB}. This shows that even though we designed our method to be able to utilize a \Univ{} distribution, it can outperform existing methods even without access to such a dataset as well, suggesting that it can be used as a state-of-the-art bias removal method in all settings. We conjecture that this advantage is because we explicitly encourage the filter to maximally preserve information, whereas in other methods the mutual information minimization can remove any information that is not used by the jointly trained classifier, potentially removing too much information early in training when the classifier is relying on only a few features, thus trapping it in local minima. 

When given access to a \Univ{} distribution, we observe that our proposed method can now significantly improve the performance and the amount of removed bias in both synthetic (Colored MNIST in~\cref{fig:EB}) and real-world datasets (CelebA and Adult in~\cref{tab:CelebA_AUC,tab:Adult_AUC}). Note that none of the existing methods can directly utilize the access to the \Univ{} distribution due to the lack of target labels which is required by these methods. This shows that our proposed method can effectively utilize a partially observable ~\Univ{} distribution to improve attribute bias removal.

\begin{table*}[ht!]
\caption{
Effect of pseudo-labeling on attribute bias removal methods in CelebA dataset under extreme bias.
The baseline method trained on the extreme bias dataset (\emph{TrainEx}) is listed for reference.
All other methods are trained on the combination of \emph{TrainEx} and FFHQ pseudo-labeled by a classifier pretrained on \emph{TrainEx}. With pseudo-labeling, all methods outperform the baseline, with our proposed method achieving the best performance.}


\label{tab:CelebA_all_universal}
\centering

\resizebox{0.66\textwidth}{!}{%

\begin{tabular}{lcccc}
\toprule
\multirow{2}{*}{Method} & \multicolumn{2}{c}{Test Accuracy}         & \multicolumn{2}{c}{Mutual Information} \\
\cmidrule(lr){2-3}  \cmidrule(lr){4-5} 
                        & Unbiased ↑          & Bias-conflicting ↑  & $I(Z;A)$ ↓          & $\Delta$ (\%) ↑  \\
                        \midrule
Baseline (TrainEx)      & 66.11{\scriptsize $\pm$0.32}          & 33.89{\scriptsize $\pm$0.45}          & 0.57{\scriptsize $\pm$0.01}           & 0.00             \\
Baseline                & 67.02{\scriptsize $\pm$0.78}          & 35.25{\scriptsize $\pm$1.32}          & 0.48{\scriptsize $\pm$0.01}           & 15.79            \\
\midrule
LNL~\cite{learn_not_to_learn_Colored_MNIST}                     & 67.47{\scriptsize $\pm$0.34}          & 40.56{\scriptsize $\pm$1.24}          & 0.43{\scriptsize $\pm$0.04}           & 24.56            \\
DI~\cite{domain_independent_training}                      & 70.61{\scriptsize $\pm$0.58}          & 46.89{\scriptsize $\pm$0.83}          & 0.39{\scriptsize $\pm$0.03}           & 31.58            \\
LfF~\cite{LfF_CelebA_Bias_conflicting}                     & 69.42{\scriptsize $\pm$0.61}          & 45.54{\scriptsize $\pm$1.26}          & 0.41{\scriptsize $\pm$0.04}           & 28.07            \\
EnD~\cite{End}                     & 67.65{\scriptsize $\pm$0.34}          & 42.85{\scriptsize $\pm$0.65}          & 0.42{\scriptsize $\pm$0.01}           & 26.32            \\
CSAD~\cite{CSAD}                    & 68.18{\scriptsize $\pm$0.16}          & 46.51{\scriptsize $\pm$0.81}          & 0.39{\scriptsize $\pm$0.02}           & 31.58            \\
BCL~\cite{BCL}                     & 70.43{\scriptsize $\pm$0.71}          & 46.86{\scriptsize $\pm$1.61}          & 0.39{\scriptsize $\pm$0.03}           & 31.58            \\
\midrule
Ours                    & \textbf{72.05{\scriptsize $\pm$0.86}} & \textbf{48.72{\scriptsize $\pm$0.56}} & \textbf{0.38{\scriptsize $\pm$0.01}}  & \textbf{33.33}  \\
\bottomrule
\end{tabular}

}
\end{table*}

\begin{table*}[ht!]
\caption{
Effect of pseudo-labeling on attribute bias removal methods in Adult dataset under extreme bias.
The baseline method trained on the extreme bias dataset (\emph{TrainEx}) is listed for reference.
All other methods are trained on the combination of \emph{TrainEx} and bias-conflicting samples pseudo-labeled by a classifier pretrained on \emph{TrainEx}. With pseudo-labeling, all methods outperform the baseline, with our proposed method achieving the best performance.}

\label{tab:Adult_all_universal}
\centering

\resizebox{0.66\textwidth}{!}{%

\begin{tabular}{lcccc}
\toprule
\multirow{2}{*}{Method} & \multicolumn{2}{c}{Test Accuracy}         & \multicolumn{2}{c}{Mutual Information} \\
\cmidrule(lr){2-3}  \cmidrule(lr){4-5} 
                        & Unbiased ↑          & Bias-conflicting ↑  & $I(Z;A)$ ↓          & $\Delta$ (\%) ↑  \\
                        \midrule
Baseline (TrainEx)      & 50.59{\scriptsize $\pm$0.54}          & 1.19{\scriptsize $\pm$0.83}           & 0.69{\scriptsize $\pm$0.00}           & 0.00             \\
Baseline                & 60.86{\scriptsize $\pm$0.13}          & 22.21{\scriptsize $\pm$0.42}          & 0.54{\scriptsize $\pm$0.04}           & 21.74            \\
\midrule
LNL~\cite{learn_not_to_learn_Colored_MNIST}                     & 68.46{\scriptsize $\pm$0.43}          & 46.75{\scriptsize $\pm$0.41}          & 0.46{\scriptsize $\pm$0.03}           & 33.33            \\
DI~\cite{domain_independent_training}                      & 73.25{\scriptsize $\pm$0.32}          & 54.14{\scriptsize $\pm$0.62}          & 0.42{\scriptsize $\pm$0.02}           & 39.13            \\
LfF~\cite{LfF_CelebA_Bias_conflicting}                     & 70.86{\scriptsize $\pm$0.72}          & 51.25{\scriptsize $\pm$0.56}          & 0.44{\scriptsize $\pm$0.02}           & 36.23            \\
EnD~\cite{End}                     & 73.78{\scriptsize $\pm$1.21}          & 56.75{\scriptsize $\pm$1.13}          & 0.43{\scriptsize $\pm$0.03}           & 37.68            \\
CSAD~\cite{CSAD}                    & 72.93{\scriptsize $\pm$1.62}          & 56.82{\scriptsize $\pm$1.95}          & 0.42{\scriptsize $\pm$0.03}           & 39.13            \\
BCL~\cite{BCL}                     & 73.75{\scriptsize $\pm$0.63}          & 57.52{\scriptsize $\pm$1.43}          & 0.41{\scriptsize $\pm$0.02}           & 40.58            \\
\midrule
Ours                    & \textbf{76.35{\scriptsize $\pm$0.31}} & \textbf{60.56{\scriptsize $\pm$1.82}} & \textbf{0.39{\scriptsize $\pm$0.00}}  & \textbf{43.48}  \\
\bottomrule
\end{tabular}

}
\end{table*}

\subsection{Pseudo-Labeling of the \Univ{} Distribution}
\label{subsec:all_universal}
While existing methods cannot directly utilize a partially observable \Univ{} distribution with missing target labels because they require both the protected attribute and the target labels to compute their objectives, it is possible to convert the partially observable \Univ{} distribution to an approximately fully observable distribution using pseudo labels: labels collected using a pretrained classifier. This enables all methods to utilize the additional data available in \Univ{}. To investigate the effectiveness of pseudo-labeling, we first
train a baseline classifier on the observed biased dataset TrainEx -- ResNet18~\cite{ResNet} for CelebA and a three-layer MLP for Adult -- then we use this trained classifier to label samples of the \Univ{} distribution, and finally provide all methods with the original biased dataset extended with the pseudo-labeled samples of the \Univ{} distribution. The results are reported in~\cref{tab:CelebA_all_universal,tab:Adult_all_universal}. We observe that pseudo-labeling improves the performance of all methods (compared to~\cref{tab:CelebA_BlondHair,tab:Adult_mostEx} respectively), and that our proposed method still achieves the best performance in both CelebA and Adult datasets, showing that our proposed method can be used together with pseudo-labeling to provide orthogonal performance gains. We attribute this to the target-agnostic design of our method, which diminishes the reliance on the quality of pseudo-labels.

\begin{table*}
\caption{Accuracy of attribute bias removal methods under extreme bias and moderate bias in all 23 non-sex-related downstream tasks of CelebA dataset. See~\cref{appsec:CelebA} for separate results. Our proposed method achieves the best performance, both with and without access to the \Univ{} distribution, showing that its trained filter has preserved the information of the other 23 attributes while removing the protected attribute (\ie sex in CelebA).}
\label{tab:CelebA_nonsex}
\centering

\resizebox{0.7\textwidth}{!}{%

\begin{tabular}{lcccc}
\toprule
\multirow{2}{*}{Method}  & \multicolumn{2}{c}{Extreme Bias Training (\emph{TrainEx})} & \multicolumn{2}{c}{Moderate Bias Training (\emph{TrainOri})} \\
\cmidrule(lr){2-3}  \cmidrule(lr){4-5} 
                        & Unbiased ↑            & Bias-conflicting ↑   & Unbiased ↑             & Bias-conflicting ↑     \\
                         \midrule
Baseline                 & 59.03{\scriptsize $\pm$0.96}           & 21.53{\scriptsize $\pm$1.42}          & 78.08{\scriptsize $\pm$0.82}            & 71.85{\scriptsize $\pm$1.04}           \\
LNL~\cite{learn_not_to_learn_Colored_MNIST}                      & 55.84{\scriptsize $\pm$0.31}           & 18.81{\scriptsize $\pm$0.53}          & 78.43{\scriptsize $\pm$0.75}            & 75.03{\scriptsize $\pm$1.27}           \\
DI~\cite{domain_independent_training}                       & 59.73{\scriptsize $\pm$0.43}           & 22.03{\scriptsize $\pm$0.42}          & 80.83{\scriptsize $\pm$0.54}            & 76.45{\scriptsize $\pm$0.42}           \\
LfF~\cite{LfF_CelebA_Bias_conflicting}                      & 56.12{\scriptsize $\pm$0.35}           & 20.45{\scriptsize $\pm$1.54}          & 79.31{\scriptsize $\pm$0.68}            & 75.82{\scriptsize $\pm$1.73}           \\
EnD~\cite{End}                      & 58.32{\scriptsize $\pm$0.47}           & 20.48{\scriptsize $\pm$0.89}          & 81.14{\scriptsize $\pm$1.61}            & 77.03{\scriptsize $\pm$2.73}           \\
CSAD~\cite{CSAD}                     & 54.65{\scriptsize $\pm$1.43}           & 18.93{\scriptsize $\pm$2.07}          & 80.45{\scriptsize $\pm$1.82}            & 76.20{\scriptsize $\pm$2.94}           \\
BCL~\cite{BCL}                      & 59.28{\scriptsize $\pm$0.58}           & 22.16{\scriptsize $\pm$0.53}          & 81.02{\scriptsize $\pm$0.12}            & 77.81{\scriptsize $\pm$1.83}           \\
Ours                     & 60.13{\scriptsize $\pm$0.27}           & 22.45{\scriptsize $\pm$1.52}          & 81.62{\scriptsize $\pm$1.46}            & 78.76{\scriptsize $\pm$2.84}           \\
\midrule
Ours (FFHQ)      & 63.43{\scriptsize $\pm$0.98}           & 34.98{\scriptsize $\pm$1.93}          & 82.62{\scriptsize $\pm$1.12}            & 79.78{\scriptsize $\pm$1.54}           \\
Ours (Synthetic) & \textbf{63.76{\scriptsize $\pm$1.03}}  & \textbf{36.29{\scriptsize $\pm$1.24}} & \textbf{83.24{\scriptsize $\pm$1.03}}   & \textbf{80.23{\scriptsize $\pm$1.84}} \\
\midrule
\end{tabular}

}

\end{table*}

\subsection{Application to Various Downstream Tasks}
In this section, we investigate whether our trained filter can be applied to various downstream target prediction tasks, \ie whether it can in fact maximally preserve information while removing the attribute bias. To this end, in~\cref{tab:CelebA_nonsex}, we report the average performance of our method on all 23 non-sex-related downstream tasks in CelebA, in both the extreme and moderate attribute bias settings (sex is considered the protected attribute). Note that the filtering mechanism in the proposed method is only trained once, and then reused in all downstream tasks without retraining. We observe that our proposed method achieves the best performance, even without access to the \Univ{} distribution. The results for individual tasks are reported in~\cref{appsec:CelebA}. This observation suggests that our proposed method can maintain information regarding all other attributes when removing the protected attribute.

\begin{table*}[htbp]
\caption{Accuracy of GAN-based methods under extreme bias and moderate bias in CelebA to predict \textit{blond hair}. For our method, we report inside parentheses the partially observable \Univ{} distribution used in addition to TrainEx for training its filter. Our method performs better or on-par with GAN-based methods with only half the size of classifier training set.}

\label{tab:CelebA_GAN_BlondHair}
\centering
\resizebox{1\textwidth}{!}{%
\begin{tabular}{lcccccc}
\toprule
\multirow{2}{*}{Method}          & \multicolumn{3}{c}{Extreme Bias Training (\emph{TrainEx})}                      & \multicolumn{3}{c}{Moderate Bias Training (\emph{TrainOri})}                    \\
\cmidrule(lr){2-4}  \cmidrule(lr){5-7} 
                                 & Size of Classifier Training Set ↓ & Unbiased ↑          & Bias-conflicting ↑  & Size of Classifier Training Set ↓ & Unbiased ↑          & Bias-conflicting ↑  \\
                                 \midrule
Baseline                         & 89754                      & 66.11{\scriptsize $\pm$0.32}          & 33.89{\scriptsize $\pm$0.45}          & 162770                     & 75.92{\scriptsize $\pm$0.35}          & 52.52{\scriptsize $\pm$0.19}          \\
CGN~\cite{CGN}                              & 89754$\times$2                    & 63.38{\scriptsize $\pm$1.34}          & 31.46{\scriptsize $\pm$1.42}          & 162770$\times$2                   & 82.65{\scriptsize $\pm$1.82}          & 79.81{\scriptsize $\pm$1.80}          \\
CAMEL~\cite{Camel}                            & 89754$\times$2                    & 64.23{\scriptsize $\pm$1.82}          & 32.81{\scriptsize $\pm$1.18}          & 162770$\times$2                   & 86.45{\scriptsize $\pm$1.17}          & 82.67{\scriptsize $\pm$1.47}          \\
BiaSwap~\cite{BiaSwap}                          & 89754$\times$2                    & 65.97{\scriptsize $\pm$1.12}          & 33.67{\scriptsize $\pm$1.65}          & 162770$\times$2                   & 88.83{\scriptsize $\pm$1.61}          & 85.45{\scriptsize $\pm$1.42}          \\
GAN-Debiasing~\cite{GAN_Debiasing_hat_glasses_correlation}                    & 89754$\times$2                    & 66.83{\scriptsize $\pm$1.73}          & 32.18{\scriptsize $\pm$1.38}          & 162770$\times$2                   & 88.34{\scriptsize $\pm$2.05}          & 85.27{\scriptsize $\pm$1.13}          \\
Ours                             & 89754                      & 66.31{\scriptsize $\pm$0.26}          & 32.22{\scriptsize $\pm$0.43}          & 162770                     & 89.81{\scriptsize $\pm$0.45}          & 85.29{\scriptsize $\pm$1.54}          \\
\midrule
Ours (FFHQ)                      & 89754                      & \textbf{71.53{\scriptsize $\pm$0.67}} & 47.17{\scriptsize $\pm$0.72}          & 162770                     & \textbf{90.86{\scriptsize $\pm$0.87}}          & 88.06{\scriptsize $\pm$0.91}          \\
Ours (Synthetic)                 & 89754                      & 71.37{\scriptsize $\pm$0.64}          & \textbf{48.06{\scriptsize $\pm$0.82}} & 162770                     & 90.01{\scriptsize $\pm$0.65} & \textbf{88.72{\scriptsize $\pm$1.16}} \\
\bottomrule
\end{tabular}
}
\end{table*}

\subsection{Comparison with Generative Dataset Augmentation}
\label{subsec:GANs}

To remove the attribute bias, an alternative to our proposed method of filtering the samples in a biased dataset, is to augment the dataset with attribute-flipped samples. In this section, we investigate how our proposed method performs compared to the state-of-the-art GAN-based methods for attribute flipping~\cite{CGN,Camel,BiaSwap,GAN_Debiasing_hat_glasses_correlation}. These methods differ from our method in two main aspects: 1) similar to MI-based methods, they require both the target and attribute labels to apply attribute flipping, thus cannot utilize a partially observable \Univ{} distribution where target labels are missing; 2) they mitigate bias by augmenting the dataset with attribute-flipped samples (rather than filtering the samples), which requires more augmented samples depending on how many different values the protected attribute can take, for example, in CelebA dataset the protected attribute is binary (sex) so they need to increase the dataset size by a factor of two, whereas in Colored MNIST the protected attribute can take ten RGB colors so they need to increase the dataset size 10 times. In~\cref{tab:CelebA_GAN_BlondHair} we report the performance of GAN-based methods. In the moderate bias setting, our method achieves better average accuracy than GAN-based methods, with and without using \Univ{} distribution. In the extreme bias setting, without access to a \Univ{} distribution, none of the methods can outperform the baseline, consistent with our prior observations in~\cref{tab:CelebA_BlondHair,tab:Adult_mostEx}. Given access to a \Univ{} distribution, our method achieves the best average accuracy. These observations provide further evidence that our method is the most effective overall solution for mitigating attribute bias of various strengths, both with and without access to a \Univ{} distribution.

\begin{figure*}[ht!]
    \begin{minipage}{0.48\textwidth}
        \captionsetup{type=table}

\caption{Accuracy of protected attribute prediction (lower is better) on the \emph{Unbiased} testing set on sex classification of CelebA. Our filter is trained on the original training set. The vanilla baseline performance is 98.25{\scriptsize$\pm$0.13}. Bold shows the fixed hyper-parameters while others vary.}
\label{tab:CelebA_sex_hp}
\centering
\resizebox{1\textwidth}{!}{%

\begin{tabular}{lccccc}
\toprule
$\lambda_{mi}$    & 0                             & 10                            & 25                            & \textbf{50}                          & 60                            \\
\midrule
Ours              & 95.36{\scriptsize $\pm$0.43} & 90.78{\scriptsize $\pm$0.74} & 86.42{\scriptsize $\pm$0.54} & 84.74{\scriptsize $\pm$0.38} & 84.02{\scriptsize $\pm$0.23} \\
\midrule
\midrule
$\lambda_{pred}$  & 0                             & 10                            & 25                            & \textbf{50}                          & 60                            \\
\midrule
Ours              & 97.27{\scriptsize $\pm$0.36} & 91.13{\scriptsize $\pm$0.54} & 87.81{\scriptsize $\pm$0.87} & 84.74{\scriptsize $\pm$0.38} & 83.45{\scriptsize $\pm$0.41} \\
\midrule
\midrule
$\lambda_{rec}$ & 0                             & 10                            & 50                            & \textbf{100}                         & 110                           \\
\midrule
Ours              & 70.89{\scriptsize $\pm$0.27} & 76.21{\scriptsize $\pm$0.83} & 81.48{\scriptsize $\pm$0.61} & 84.74{\scriptsize $\pm$0.38} & 85.09{\scriptsize $\pm$0.86} \\
\bottomrule
\end{tabular}

}
        
    \end{minipage}
    \hfill
    \begin{minipage}{0.48\textwidth}
        \captionsetup{type=table}
\caption{Accuracy of target prediction (higher is better) on the \emph{Unbiased} testing set of all 23 non-sex-related downstream tasks of CelebA. Our filter is trained on the original training set. The vanilla baseline performance is 78.08{\scriptsize$\pm$0.82}. Bold shows the fixed hyper-parameters while others vary.}
\label{tab:CelebA_nonsex_hp}
\centering
\resizebox{1\textwidth}{!}{%

\begin{tabular}{lccccc}
\toprule
$\lambda_{mi}$    & 0                                   & 10                                  & 25                                  & \textbf{50}                             & 60                                  \\
\midrule
Ours              & 76.91{\scriptsize $\pm$0.43} & 78.21{\scriptsize $\pm$0.81} & 79.98{\scriptsize $\pm$1.21} & 81.62{\scriptsize $\pm$1.46}    & 80.72{\scriptsize $\pm$0.71} \\
\midrule
\midrule
$\lambda_{pred}$  & 0                                   & 10                                  & 25                                  & \textbf{50}                             & 60                                  \\
\midrule
Ours              & 73.54{\scriptsize $\pm$0.17} & 76.83{\scriptsize $\pm$0.55} & 78.39{\scriptsize $\pm$0.49} & 81.62{\scriptsize $\pm$1.46}    & 79.82{\scriptsize $\pm$0.62} \\
\midrule
\midrule
$\lambda_{rec}$ & 0                                   & 10                                  & 50                                  & \textbf{100}                            & 110                                 \\
\midrule
Ours              & 43.83{\scriptsize $\pm$0.46} & 60.81{\scriptsize $\pm$0.51} & 71.43{\scriptsize $\pm$0.83} & 81.62{\scriptsize $\pm$1.46}    & 81.48{\scriptsize $\pm$0.23} \\
\bottomrule
\end{tabular}
}

    \end{minipage}%
\end{figure*}

\begin{figure}[t]
\begin{center}
  \includegraphics[width=0.77\linewidth]{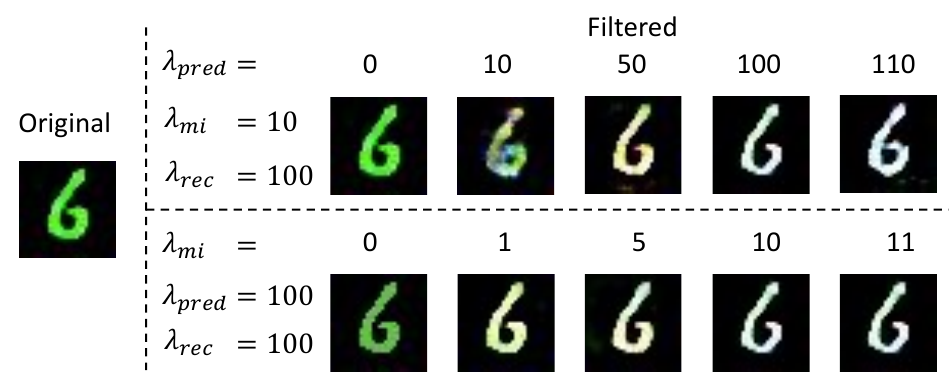}
\end{center}
  \caption{Effectiveness of hyper-parameters to remove protected attribute on Colored MNIST.}
\label{fig:CMNIST_lambda_visualization}
\end{figure}
\begin{figure}[htbp]
\begin{center}
  \includegraphics[width=0.8\linewidth]{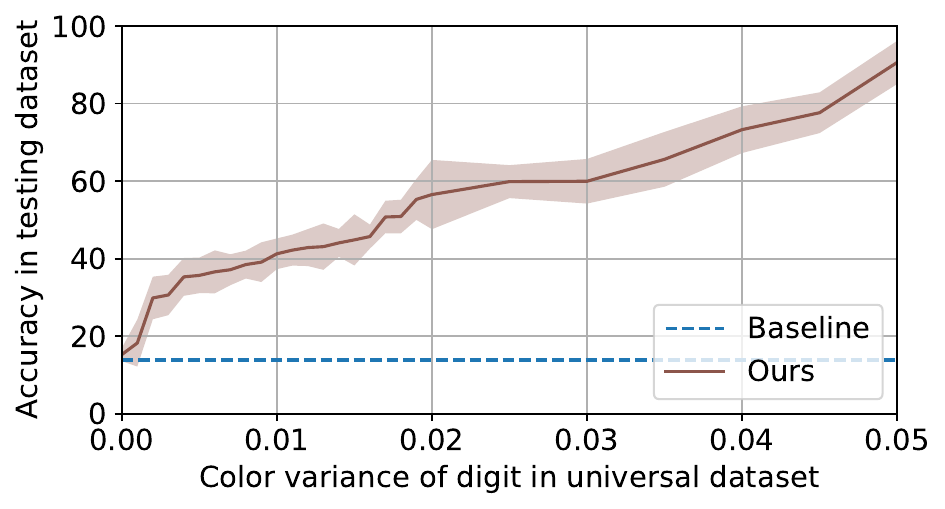}
\end{center}
  \caption{
  The effect of bias strength in the \Univ{} distribution on our method in the extreme bias setting (corresponding to the zero location on the horizontal axis in~\cref{fig:EB}).
  Since the \Univ{} distribution is partially observable (lacks target labels), the baseline classifier and other bias removal methods have a constant performance.
  }
\label{fig:EB_only}
\end{figure}

\subsection{Ablation Studies}
\label{subsec:abalation}

\noindent
\textbf{Removing Protected Attribute.}
Our method achieves this using the mutual information loss ($\mathcal{L}_{mi}$) and attribute prediction loss ($\mathcal{L}_{pred}$), with weight coefficients $\lambda_{mi}$ and $\lambda_{pred}$, respectively. To qualitatively study the importance of each loss, in~\cref{fig:CMNIST_lambda_visualization}, we train our filter on the Colored MNIST dataset with varying coefficients, and observe that if either coefficient is zero, the color is not successfully removed from the digit, thus both $\mathcal{L}_{mi}$ and $\mathcal{L}_{pred}$ are necessary to eliminate the information of protected attributes. Furthermore, to quantitatively measure the importance of each loss in removing the protected attribute, we first train our filter on the CelebA original training set (TrainOri) with varying coefficients, then use it to filter the dataset, and finally measure the attribute prediction accuracy of the baseline classifier trained on the filtered dataset to predict the protected attribute: the lower the attribute prediction accuracy, the better the attribute bias removal. In~\cref{tab:CelebA_sex_hp}, we observe that increasing the coefficients of these two losses reduces the attribute prediction accuracy, thus improving attribute bias removal. Additionally, we observe that increasing the coefficient of the reconstruction loss ($\mathcal{L}_{rec}$) results in weaker attribute bias removal (higher attribute prediction accuracy). The recommended coefficients used in all experiments are displayed in bold.

\noindent
\textbf{Preserving Other Attributes.}
Our method achieves this using the reconstruction loss ($\mathcal{L}_{rec}$) with weight coefficient $\lambda_{mi}$, and the adversarial loss ($\mathcal{L}_{pred}$) with a constant weight coefficient of 1. To quantitatively measure the importance of the reconstruction loss in preserving other attributes, we first train our filter on the CelebA original training set (TrainOri) with varying coefficients, then use it to filter the dataset, and finally measure the average target prediction accuracy of 23 classifiers for each non-sex-related attribute trained on the filtered dataset to predict the 23 non-sex-related targets in CelebA: the higher the target prediction accuracy, the better preserved the other attributes when removing sex. In~\cref{tab:CelebA_sex_hp}, we observe that with a proper choice of $\lambda_{rec}$ their performance on filtered images is consistent with original images, which indicates all relevant facial attributes are preserved. Additionally, we observe that increasing the coefficients of the bias removal losses $\lambda_{mi}, \lambda_{pred}$ improves the target prediction accuracy; we hypothesize that this is because the classifier trained on a biased dataset might employ the protected attribute (\eg sex) as a proxy during training, leading to lower accuracy on datasets without such correlation (Unbiased), and therefore, upon successful removal of sex-related information, an improvement in non-sex-related attribute classification accuracy is observed in~\cref{tab:CelebA_nonsex_hp}. The recommended coefficients used in all experiments are displayed in bold.

\noindent
\textbf{Bias in Universal Distribution.}
We aim to investigate the sensitivity of our filter training to the amount of attribute bias in the \Univ{} distribution itself, namely $H_{q}(Y|A)$. To that end, we consider the extreme bias setting in Colored MNIST dataset -- where no existing method can outperform the baseline except for our method when using the \Univ{} distribution -- and measure how the performance of our method varies when we gradually increase the strength of attribute bias in the \Univ{} distribution (\ie decrease color variance). In~\cref{fig:EB_only}, we observe that our method can outperform the baseline as long as the bias in the \Univ{} distribution ($H_{q}(Y|A)$) is moderately larger than zero. Consistent with this observation, we also observed in CelebA experiments that using an outside image dataset (FFHQ) as samples from a \Univ{} distribution is effective in boosting performance even though we have not explicitly made the dataset unbiased.

\section{Conclusion}
\label{sec:discussion}
In this work, we mathematically and empirically showed the sensitivity of the state-of-the-art attribute bias removal methods to the bias strength, revealing a previously overlooked limitation of these methods. 
In particular, we empirically demonstrated that when a protected attribute is strongly predictive of a target, these methods become ineffective. 
To understand the cause and extent of these findings, we derived an information-theoretical upper bound on the performance of any attribute bias removal method, and verified it in experiments on synthetic, image, and census datasets. These findings caution against the use of existing attribute bias removal methods in datasets with potentially strong bias (\eg small datasets).
Next, based on our theoretical analysis, we stated a necessary condition for the existence of any method that can remove the extreme attribute bias (\ie \Univ{} distribution), and then proposed a new method which not only overcomes extreme bias under the necessary condition but also performs better or on-par with the existing state-of-the-art methods even without access to a \Univ{} distribution.

\noindent
\textbf{Limitations and Future Directions.} 
While our method shows promising results, in the ablation studies we found that it is sensitive to the amount of bias in the \Univ{} distribution itself. Thus, an interesting future direction is constructing methods that are less sensitive to the bias in the \Univ{} distribution. 
Another interesting direction is to explore how to construct a \Univ{} distribution more efficiently.
Also, it is important to consider more challenging scenarios where attribute labels are absent~\cite{without_pa1, EIIL} or unknown biases emerge~\cite{unknown_bias}. 
Finally, it is noteworthy that while the ability to effectively remove protected attributes is a valuable tool, removing them will not always result in a fairer decision, as in some cases rewarding a demographic group might be desirable, a complex subject discussed more elaborately in~\cite{protected_attributes}.

\noindent
\textbf{Societal Impact.} We expect our work to promote positive societal impact by addressing strong attribute bias from neural networks which hinders the fairness of machine intelligence.

\section*{Acknowledgement}

This research is based upon work supported in part by the Office of the Director of National Intelligence (ODNI), Intelligence Advanced Research Projects Activity (IARPA), via [2022-21102100007]. The views and conclusions contained herein are those of the authors and should not be interpreted as necessarily representing the official policies, either expressed or implied, of ODNI, IARPA, or the U.S. Government. The U.S. Government is authorized to reproduce and distribute reprints for governmental purposes notwithstanding any copyright annotation therein.

\bibliography{reference}
\bibliographystyle{IEEEtran}



\clearpage
\appendix
\begin{appendices}

\section{Proofs}
\label{appsec:proofs}
We consider all random variables to be discrete, as these are represented by floating point numbers in practice.
 
\noindent
\textbf{Proposition 1.} \emph{Given random variables $Z, Y, A$, in case of the extreme attribute bias $H(Y|A)=0$, if the attribute is removed from the feature $I(Z;A)=0$, then $I(Z;Y)=0$, \ie no classifier can outperform random guess.}

\begin{proof}
 
Generally, we have,
\begin{equation}
\begin{split}
	p(z,y) &= \sum_{a} p(z,y,a) \\
	&= \sum_{a} p(y|z,a) p(z,a).
\end{split}
\end{equation}
As $I(Z;A)=0$ and by the property of mutual information that $I(Z;A)=0$ if and only if random variable $Z$ and random variable $A$ are mutually independent, we have,
 
\begin{equation}
\begin{split}
\label{2}
	p(z,y) &= \sum_{a} p(y|z,a) p(z) p(a). \\
\end{split}
\end{equation}
By the property of conditional entropy that $H(Y|A)=0$ if and only if $\exists~\text{a function}~g: \mathcal{A} \to \mathcal{Y}, Y=g(A)$, and the property that if $Y$ is determined by $A$ with a function $g$, $\forall~\text{random variable}~Z, p(y|a,z) = p(y|a)$, \cref{2} goes to,

\begin{equation}
\begin{split}
	p(z,y) &= \sum_{a} p(y|a) p(z) p(a) \\
	&= \sum_{a} p(y,a) p(z) \\
	&= p(y) p(z).
\end{split}
\end{equation}
Thus, $Z$ and $Y$ are mutually independent, and we have,
\begin{equation}
\begin{split}
	I(Z;Y) = 0.
\end{split}
\end{equation}
\end{proof}

\newpage
 
\noindent
\textbf{Theorem 1.}
\emph{Given random variables $Z, Y, A$, the following inequality holds without exception:}
\begin{align}
	0 \leq I(Z;Y) \leq I(Z;A) + H(Y|A)
\end{align}

\begin{proof}
For interaction information $I(Z;Y;A)$, we have,
\begin{equation}
\begin{split}
I(Z;Y;A) &= I(Z;Y) - I(Z;Y|A).
\end{split}
\end{equation}
By the symmetry property of mutual information,
\begin{equation}
\begin{split}
I(Z;Y;A) = I(Z;Y) - I(Y;Z|A).
\end{split}
\end{equation}
By the chain rule of conditional mutual information,
\begin{equation}
\begin{split}
I(Z;Y;A) &= I(Z;Y) - [I(Y;Z,A) - I(Y;A)] \\
&= I(Z;Y) - I(Y;Z,A) + I(Y;A).
\end{split}
\end{equation}
By the property of interaction information $I(Z;Y;A) \leq \min\{I(Z;Y), I(Y;A), I(Z;A)\}$, we have,
\begin{equation}
\begin{split}
I(Z;Y;A) &\leq I(Z;A) \\
I(Z;Y) - I(Y;Z,A) + I(Y;A) &\leq I(Z;A).
\label{4}
\end{split}
\end{equation}
According to the relation of mutual information to conditional and joint entropy, Left Hand Side (LHS) of~\cref{4} goes to,
\begin{equation}
\begin{split}
LHS = &I(Z;Y) - [H(Y)-H(Y|Z,A)] \\
&+ [H(Y) - H(Y|A)] \\
= &I(Z;Y) + H(Y|Z,A) - H(Y|A).
\end{split}
\end{equation}
Then, \cref{4} goes to,
\begin{equation}
\begin{split}
I(Z;Y) + H(Y|Z,A) - H(Y|A) &\leq I(Z;A), \\
I(Z;Y) + H(Y|Z,A) &\leq I(Z;A) + H(Y|A).
\end{split}
\end{equation}
As $H(Y|X,A) \geq 0$ and mutual information is non-negative, we have,
\begin{equation}
\begin{split}
0 \leq I(Z;Y) &\leq I(Z;A) + H(Y|A).
\end{split}
\end{equation}
\end{proof}

\clearpage

\section{Approach to Utilize Non-Observable \Univ{} Distribution} 
\label{appsec:SSL}

In~\cref{sec:necessary} of the main paper, we show that in extreme bias $H(Y|A)=0$, the existence of a \Univ{} distribution is necessary to overcome the trade-off between attribute bias removal $I(Z;A)$ and the best attainable performance $I(Z;Y)$.
Further, given the existence, we present three possible scenarios regarding the observability of target $Y$ and protected attribute $A$.
These scenarios include: (1) \emph{Fully Observable} containing both target labels and protected attribute labels, (2) \emph{Partially Observable} lacking target labels but containing protected attribute, and (3) \emph{Non Observable} lacking any labels.
We have discussed the first two scenarios in the main paper. 
In this section, we mainly discuss the third scenario where \Univ{} distribution does not yield any labels.
Under this weakest annotation possibility, we seek a simple approach based on self-supervised learning (SSL) to demonstrate the potential of overcoming the trade-off between attribute bias removal and the best attainable performance.

The idea of this approach is to utilize \Univ{} dataset to train an encoder that can bring features from the same input closer together while pushing them away from features belonging to other inputs. 
By doing so, the encoder learns to capture the intrinsic information of the input without being influenced by the data with attribute bias.
To this end, we deploy a two-stage training scheme, as shown in~\cref{fig:method_SSL}, 
First, we fine-tune the pre-trained baseline encoder~\cite{SEER,pretrained} on \Univ{} dataset with contrastive loss~\cite{simclr}. Then, we apply the frozen encoder followed by a shallow regressor trained from scratch in the downstream task to address attribute bias. 

Specifically, during the encoder training, given the input $x^U$ from a minibatch of size $N$ in \Univ{} dataset, a set of data augmentation techniques randomly augment $x^U$ to be $x_i$ and $x_j$.
Following~\cite{simclr}, the set of augmentation includes RandomResizedCrop, RandomHorizontalFlip, RandomApply(ColorJitter), and RandomGrayscale.
Then, a pre-trained baseline encoder $E: \mathcal{X} \to \mathcal{Z}$ outputs learned representation $z_i$ and $z_j$, followed by a mapping network $M: \mathcal{Z} \to \mathcal{H}$ to output $h_i$ and $h_j$, respectively. 
To ensure $z_i$ and $z_j$ are closer to each other than other features, we optimize $E$ and $M$ over the following contrastive loss~\cite{simclr} for pair $(i,j)$:
\begin{align}
\label{eq:contrastive}
 \mathcal{L}^{E,M}_{\text{contrastive}}(i,j) = -\log(\frac{\exp(\text{sim}(h_i, h_j)/\tau)}{\sum_{k=1}^{2N} \mathbb{I}(k \neq i) \exp(\text{sim}(h_i, h_k)/\tau)})
\end{align}
where $\text{sim}(h_i, h_j) = \frac{h_i \cdot h_j}{{\|h_i\| \|h_j\|}}$, $\tau$ is the temperature parameter, and $\mathbb{I}(k \neq i) \in \{0,1\}$ is an indicator function that equals 1 if and only if $k \neq i$. The loss is computed on both $(i,j)$ and $(j,i)$ for each input $x^U$.

Further, during applying the trained encoder to the biased dataset of downstream task $\{\mathcal{X}^B, \mathcal{Y}\}$, given the input $x^B \in \mathcal{X}^B$, the frozen encoder is used to output learned representation $z$. 
Then, a regressor $R: \mathcal{Z} \to \mathcal{Y}$ is introduced with the objective:
\begin{align}
\label{eq:reg_supp}
    R^* = \argmin_R \mathcal{L}_{reg}(R(z), y)
\end{align}
\noindent
where $\mathcal{L}_{reg}$ is an appropriate regression loss (L2 loss for continuous attributes and cross-entropy loss for discrete attributes). 

The whole framework can be found in~\cref{fig:method_SSL}. The network architecture is shown in~\cref{tab:CelebA_SSL_network}, and the hyper-parameters are shown in~\cref{tab:hp_SSL}.

\noindent
\textbf{Results.}
Following the experiment setup of CelebA in~\cref{sec:exp} of the main paper to predict \textit{BlondHair}, we conduct the same experiment on the SSL-based approach.
As shown in~\cref{tab:CelebA_SSL}, compared with its relatively worse performance than baseline without \Univ{} dataset, this approach outperforms baseline when getting access to \Univ{} dataset, which highlights the possibility to utilize \Univ{} dataset to escape the trade-off between attribute bias removal and achieving good accuracy even in the absence of any strengthened supervision signals.

\begin{figure}[t]
\begin{center}
  \includegraphics[width=0.65\linewidth]{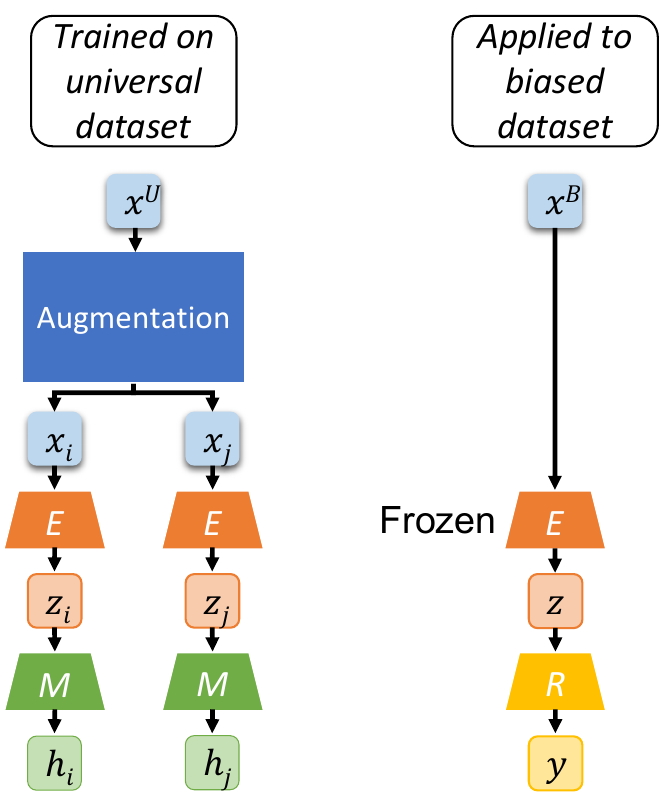}
\end{center}
  \caption{Framework of the proposed approach based on self-supervised learning to utilize universal dataset without any labels. First, the encoder is trained on universal dataset without any labels using contrastive loss. Then, the trained encoder can be applied to the biased dataset for many downstream tasks.} 
  \label{fig:method_SSL}
\end{figure}
\begin{table}[htbp]
\caption{Performance of the proposed approach based on self-supervised learning (SSL) under extreme bias in CelebA (\emph{TrainEx} training set) to predict \textit{blond hair}. SSL-based approach is trained on \emph{TrainEx} appending the dataset shown in parenthesis as universal dataset. Without universal dataset, the performance of this approach is worse than baseline. With universal dataset, its performance is boosted. \textbf{Bold} for the best results.}
\label{tab:CelebA_SSL}
\centering
\resizebox{0.45\textwidth}{!}{%
\begin{tabular}{lcccc}
\toprule
\multirow{2}{*}{Method} & \multicolumn{2}{c}{Test Accuracy}         & \multicolumn{2}{c}{Mutual Information} \\
\cmidrule(lr){2-3}  \cmidrule(lr){4-5} 
                        & Unbiased ↑          & Bias-conflicting ↑  & $I(Z;A)$ ↓          & $\Delta$ (\%) ↑  \\
                        \midrule
Baseline                & 66.11{\scriptsize $\pm$0.32}          & 33.89{\scriptsize $\pm$0.45}          & 0.57{\scriptsize $\pm$0.01}           & 0.00             \\
SSL                     & 64.24{\scriptsize $\pm$0.54}          & 32.59{\scriptsize $\pm$0.61}          & 0.56{\scriptsize $\pm$0.02}           & 1.75             \\
\midrule
SSL (FFHQ)              & 69.02{\scriptsize $\pm$0.47}          & 42.75{\scriptsize $\pm$0.83}          & 0.51{\scriptsize $\pm$0.02}           & 10.50            \\
SSL (Synthetic)         & \textbf{70.19{\scriptsize $\pm$0.58}} & \textbf{44.23{\scriptsize $\pm$0.92}} & \textbf{0.50{\scriptsize $\pm$0.02}}  & \textbf{12.28}  \\
\bottomrule
\end{tabular}
}
\end{table}

\section{Statistics of Attribute Bias in Various Real-World Datasets}
\label{appsec:stats}

In this section, we present detailed statistics of attribute bias in various real-world datasets to illustrate that the presence of strong bias is a common phenomenon in real-world datasets. 
For instance, in the healthcare domain, a publicly available diabetes dataset, Pima Indians Diabetes Dataset~\cite{diabetes_dataset}, collected by the National Institute of Diabetes and Digestive and Kidney Diseases, contains a large portion of negative samples for young individuals (352) and a small portion of negative samples for old individuals (148), which leads age to be strongly predictive of diabetes diagnoses~\cite{diabetes_chapter}.\footnote{Following~\cite{BlindEye_IMDB_eb}, we choose age around 30 (33) as the boundary to transform the origin continuous age label to be discrete (\ie old versus young).}
Moreover, diagnosing Human Immunodeficiency Virus (HIV) from Magnetic Resonance Imaging (MRI) images, by statistics in~\cite{dataset_vs_task}, age of control group is 45{\scriptsize$\pm$17.0} while the age of HIV subjects is 51{\scriptsize$\pm$8.3}, making age a very strong attribute bias for this task.
Furthermore, in CelebA dataset, \textit{HeavyMakeup}, which is a widely used attribute to study attribute bias in~\cite{LfF_CelebA_Bias_conflicting, CSAD, End} is strongly predictive of sex.  
This is evident from the fact that the positive rate of HeavyMakeup in males is only $0.28\%$ (234 out of 84434), which is close to zero, whereas the positive rate of HeavyMakeup in females is $66\%$ (78156 out of 118165). Therefore, the classification of HeavyMakeup is highly influenced by the spurious association between HeavyMakeup and sex. Similarly, other non-sex-related attributes in CelebA also exhibit a significant disparity in the positive rate between males and females. For example, \textit{BlondHair} (2\%, 24\%), \textit{WavyHair} (14\%, 45\%), \textit{HighCheekbones} (31\%, 56\%), \textit{BagsUnderEyes} (35\%, 10\%), \textit{BigNose} (42\%, 10\%), and \textit{PointyNose} (16\%, 36\%).
All of these statistics illustrate that strong bias is a common problem in the real world.

\section{Feasibility of Collecting Protected Attribute Labels}
\label{appsec:feasibility}
In~\cref{sec:necessary} of the paper, we assume that \Univ{} distribution can yield no target labels and only protected attribute labels in the partially observable scenario since collecting protected attributes is generally more practical than collecting multiple target attributes due to differences in volume.
In this section, we further clarify the feasibility of collecting protected attribute labels.
Notably, protected attribute is commonly collected as demographic statistics for various purposes.
For instance, in medical applications such as drug design, protected attributes are commonly collected and utilized to design personalized medications that are more effective for particular patient cohorts. 
Furthermore, despite the challenges posed by privacy and legal issues in collecting protected attributes, it remains feasible to collect them in a responsible and ethical manner. 
For instance, for well-defined and limited protected attributes (\eg sex), leveraging pre-trained classifiers to label face datasets has shown promise such as sex classification applied through transfer learning using commercial services trained on balanced datasets (FairFace)~\cite{Fairface}.

\section{Empirical Bounds}
\label{appsec:bounds}
In~\cref{sec:theory} of the main paper, to empirically validate our theory, we compute the terms in~\cref{th:general} for several attribute bias removal methods on CelebA and plot partial results in~\cref{fig:CelebA_bounds} of the main paper.
Specifically, we use mutual information neural estimator~\cite{MINE} to estimate $I(Z;A)$ and $I(Z;Y)$, and empirically compute $H(Y|A)$.
To complement~\cref{fig:CelebA_bounds} of the main paper, we present results for all methods in~\cref{fig:CelebA_bounds_supp}. 
We observe that the bounds remain valid for all models.

\begin{figure*}[htbp]
    \centering 

      \subfloat[Baseline.]{\includegraphics[width=0.32\linewidth]{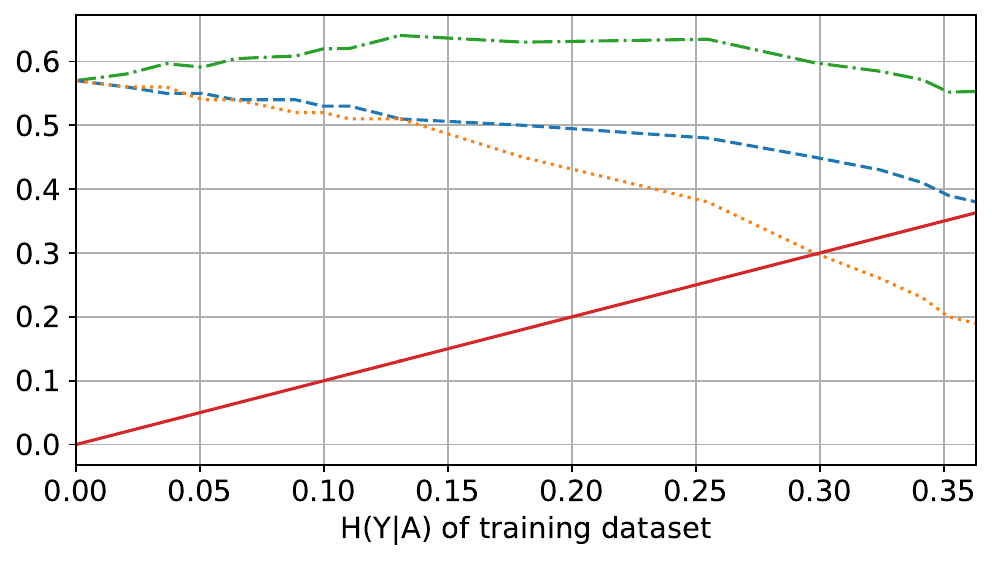}\label{fig:CelebA_baseline_bound_supp}}\quad
      \subfloat[LNL~\cite{learn_not_to_learn_Colored_MNIST}.]{\includegraphics[width=0.32\linewidth]{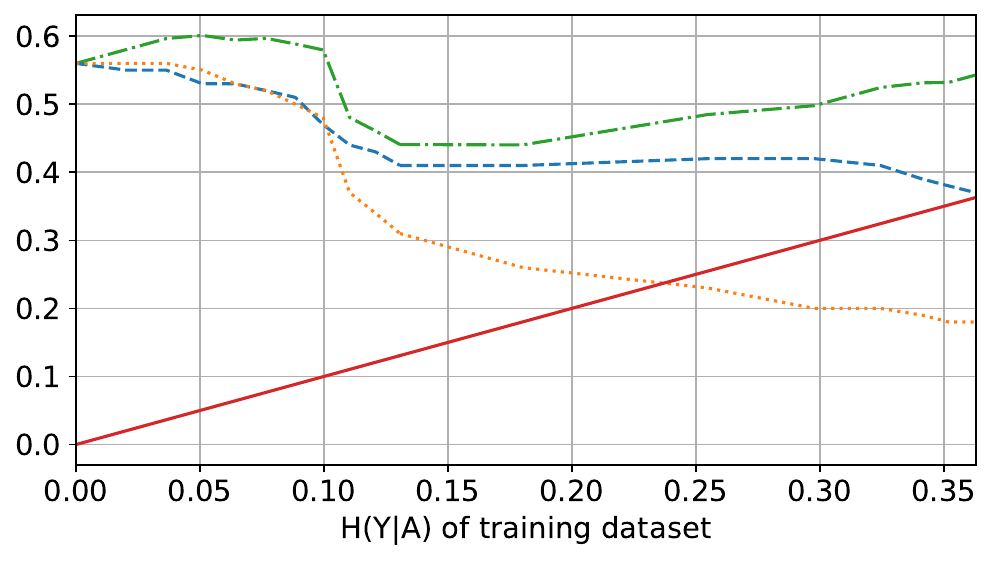}\label{fig:CelebA_LNL_bound_supp}}\quad
      \subfloat[DI~\cite{domain_independent_training}.]{\includegraphics[width=0.32\linewidth]{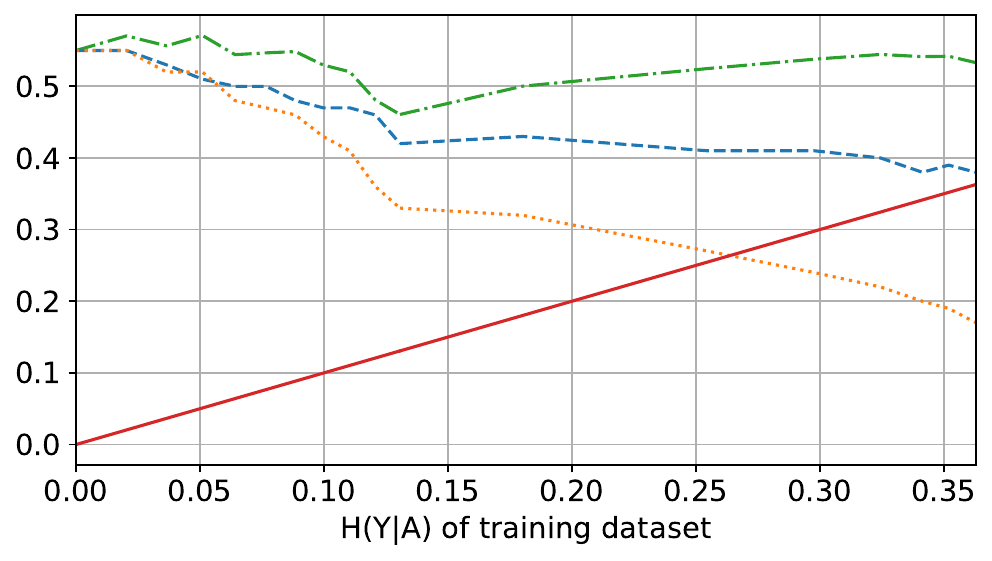}\label{fig:CelebA_DI_bound_supp}}

      \subfloat[LfF~\cite{LfF_CelebA_Bias_conflicting}.]{\includegraphics[width=0.32\linewidth]{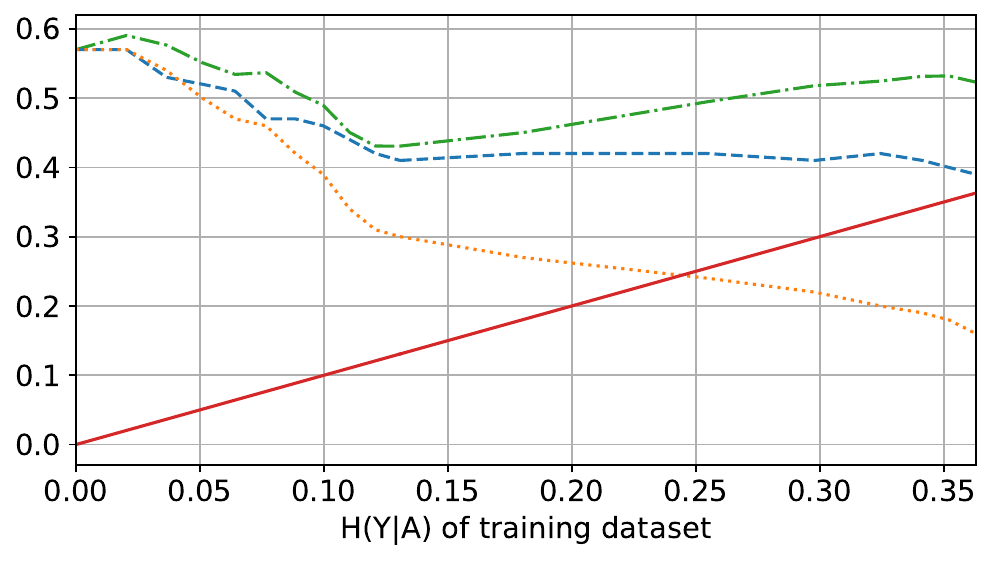}\label{fig:CelebA_LfF_bound_supp}}\quad
      \subfloat[EnD~\cite{End}.]{\includegraphics[width=0.32\linewidth]{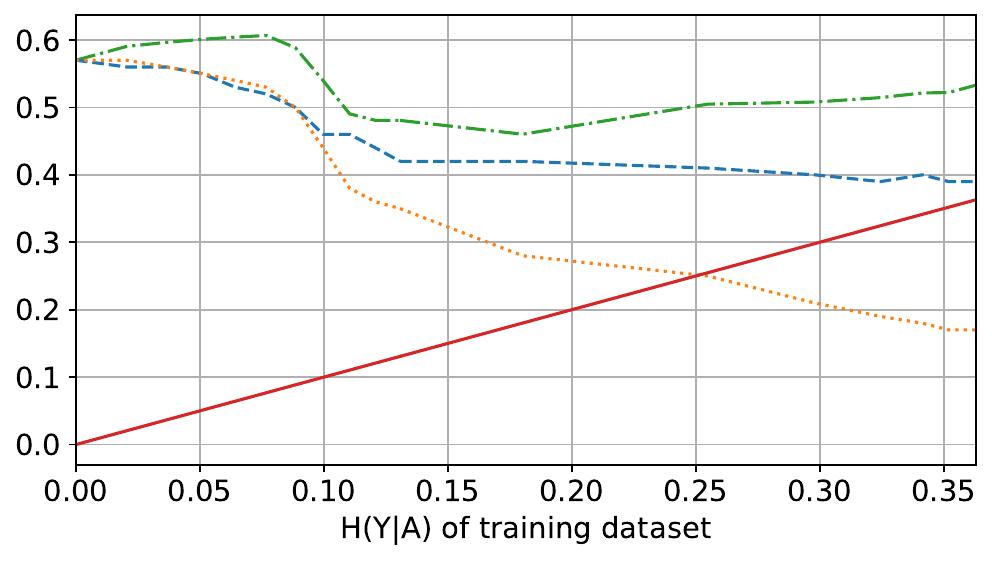}\label{fig:CelebA_EnD_bound_supp}}\quad
      \subfloat[CSAD~\cite{CSAD}.]{\includegraphics[width=0.32\linewidth]{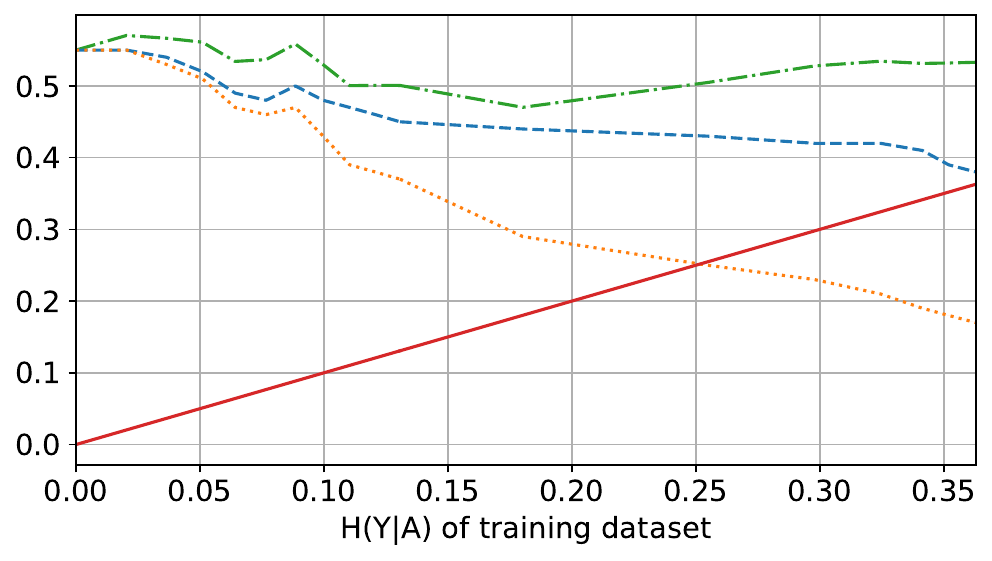}\label{fig:CelebA_CSAD_bound_supp}}

      \subfloat[BCL~\cite{BCL}.]{\includegraphics[width=0.32\linewidth]{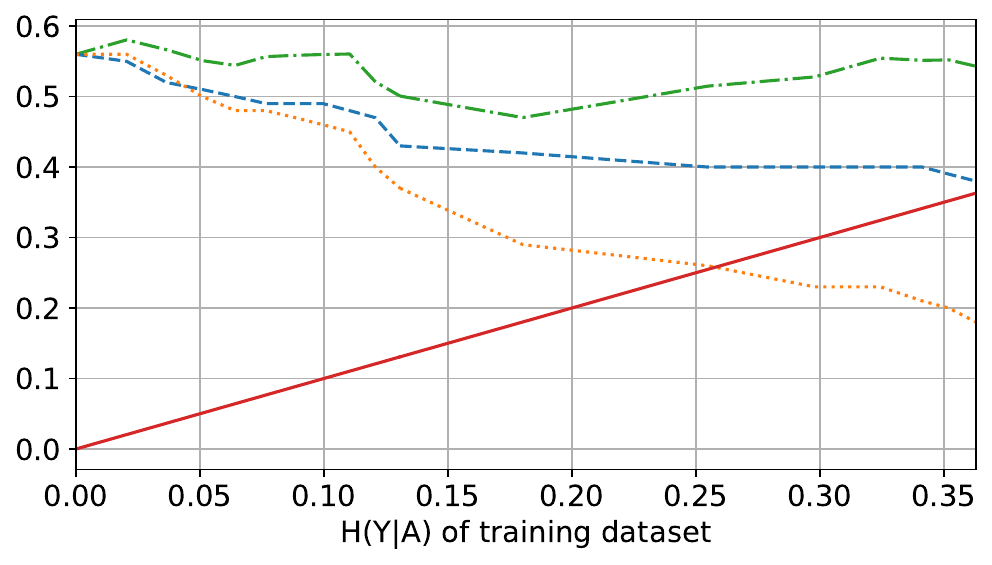}\label{fig:CelebA_BCL_bound_supp}}\quad
      \subfloat[Ours.]{\includegraphics[width=0.32\linewidth]{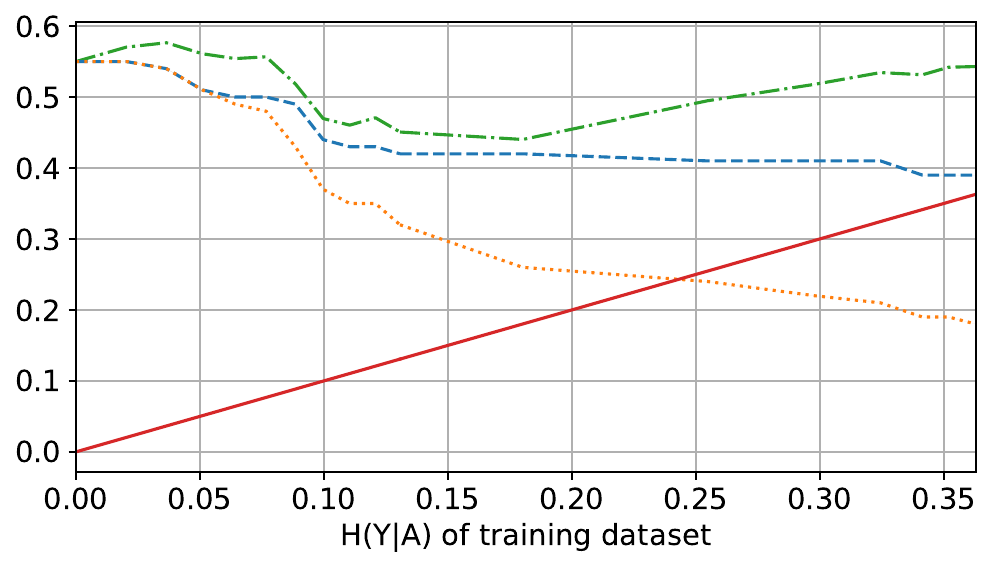}\label{fig:CelebA_Ours_bound_supp}}\quad
      \subfloat[Ours (FFHQ).]{\includegraphics[width=0.32\linewidth]{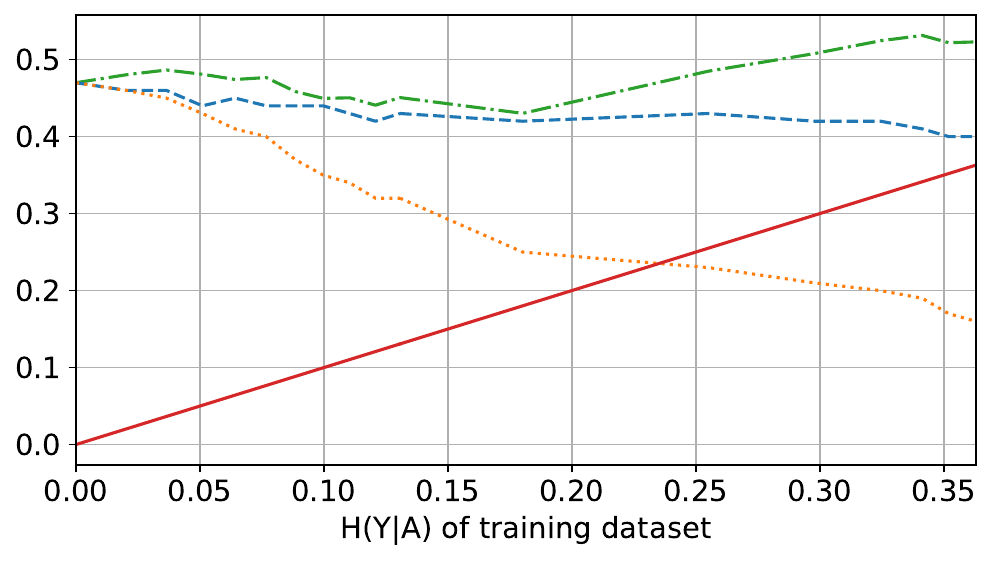}\label{fig:CelebA_OursF_bound_supp}}

      \subfloat[Ours (Synthetic).]{\includegraphics[width=0.54\linewidth]{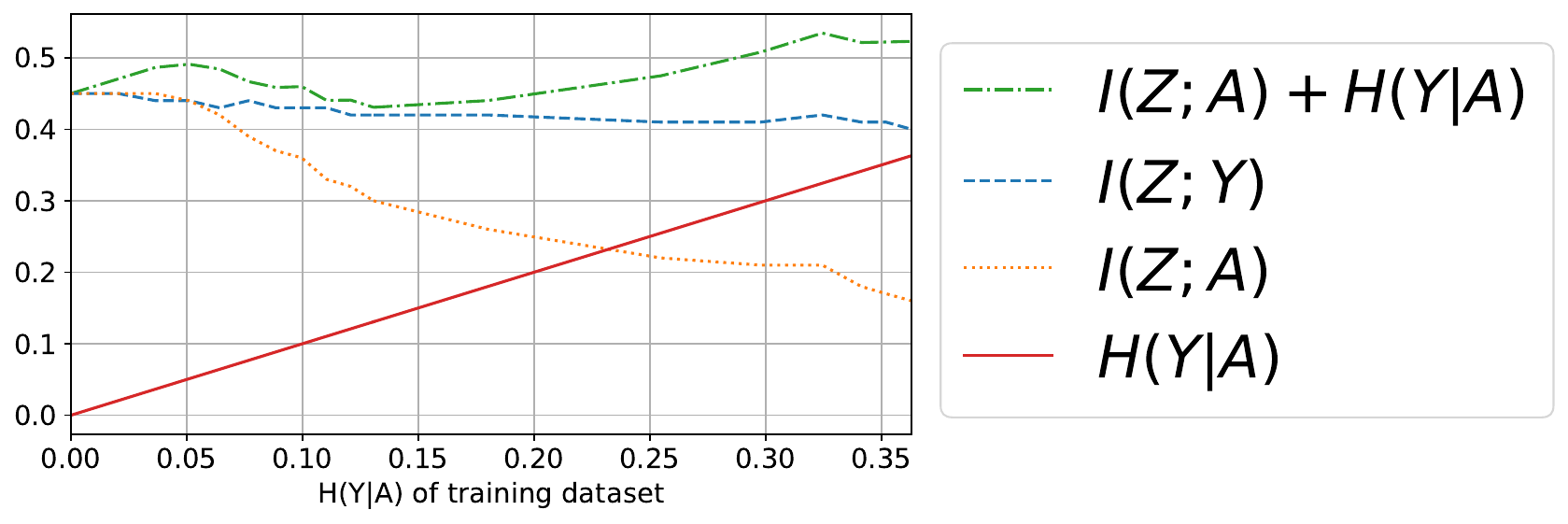}\label{fig:CelebA_OursS_bound_supp}}

\caption{Verifying the bounds in Theorem 1 on CelebA, using mutual information estimation.}
\label{fig:CelebA_bounds_supp}
\end{figure*}

\section{Distinction and Connection with ``Impossibility results for fair representations''}
\label{appsec:impossibility}
Lechner \etal~\cite{impossibility_for_fair_repesentations} claim that the non-trivial fair representations in terms of Demographic Parity and Odds Equality are unattainable.
Demographic Parity (DP), which requires independence between model predictions $\hat{Y}$ and attributes $A$ ($I(\hat{Y};A)=0)$, aligns with the goal of our work.
Thus, we mainly clarify the difference with DP.

Specifically, Claim 1 in~\cite{impossibility_for_fair_repesentations} states that any representation that accommodates a non-constant classifier cannot ensure DP-fairness for all potential tasks involving the same set of individuals.
This claim can be proved by finding a task for which no representation can guarantee DP-fairness.
For example, they consider the task to predict attribute $A$ itself, rendering the fair representation with respect to DP for this particular task cannot exist.
However, it is important to note that the claim argues that DP-fair representation cannot exist for all potential tasks but it does not negate the feasibility of the specific task in general attribute bias scenarios.

Furthermore, the task they provide, where the target $Y$ is the attribute $A$ itself, aligns with the extreme bias case where $H(Y|A)=0$ in our paper.
In this scenario, as elaborated in~\cref{sec:necessary}, we also claim that if $H(Y|A)=0$, the successful attribute bias removal method does not exist.
However, they do not specify the condition that the existence of a fair representation necessitates.
In contrast, our investigation further extends to state that a successful attribute bias removal method exists only if the \Univ{} dataset $q$ where $H_{q}(Y|A)>0$ exists.
In other words, fairness with respect to DP can be achieved only when this particular condition is satisfied.

In summary, the claim in~\cite{impossibility_for_fair_repesentations} does not contradict our claim.

\section{Breaking Point of Attribute Bias Removal Methods on Colored MNIST}
\label{appsec:breaking_point}

In this section, we present the details regarding breaking point in~\cref{fig:EB} of the main paper and present additional visualizations of Colored MNIST in~\cref{fig:CMNIST_visualization}.

As mentioned in~\cref{sec:introduction} of the main paper, digit classification on Colored MNIST, assuming color as a protected attribute to be removed, is a popular controlled experiment used by several attribute bias removal methods~\cite{learn_not_to_learn_Colored_MNIST, Back_MI,CSAD} to measure their effectiveness. More concretely, this is a ten-class classification problem given an image as input, where in the training set each digit is assigned a unique RGB color with a fixed variance across all digits, and then to measure how much the trained model is relying on color to predict digit, its accuracy is reported on a testing set where colors are assigned to each sample at random. The higher the accuracy, the less the model has learned to rely on the color -- the protected attribute in training -- for predicting digits. The experiment is repeated for different values of the color variance, and the results are plotted against color variance to show the sensitivity of a method to different levels of bias strength. While existing methods are effective in this experiment -- with a minor sensitivity to bias strength -- the performance is always only reported in the color variance range of $[0.02, 0.05]$, without any explicit justification on the choice of this range, thus leaving the question of whether these methods are effective outside of this range open. 

To empirically address this question, we extend the Colored MNIST experiment in two ways. 
First, we extend the horizontal axis to zero to include the color variance region $[0,0.02]$ where color becomes very predictive of the digit, hence the attribute bias strength is increased to its maximum. We denote this range the \textit{strong bias} region, with extreme bias happening at zero. \cref{fig:EB} of the main paper shows that unlike in the moderate bias region, the performance of all existing methods becomes very sensitive to the bias strength in the strong bias region. Moreover, we see that in extreme bias, all methods perform close to chance. Second, we repeat the experiment at each color variance 15 times and compute error bars for the whole range. This enables us to not only visually judge the significance of differences in performance, but perhaps more interestingly, be able to compute a \textit{breaking point} for each method. Formally, we define the breaking point of a method as the largest color variance (smallest bias strength) where its performance is no longer significantly better than a naive baseline. The naive baseline only uses the typical cross-entropy loss, without any implicit or explicit mutual information minimization term, and otherwise has the exact same classification network structure as all the other methods as presented in~\cref{appsec:training_details}. To compute the breaking point, we conduct hypothesis tests based on a two-sample one-way Kolmogorov–Smirnov test at different color variances and set the null hypothesis to that model performance is better than baseline performance: the largest color variance at which the null hypothesis is rejected with a $p$-value that is less than or equal to the significance level of 0.05 is considered as the breaking point. We provide the $p$-values of each method over different color variances in~\cref{tab:Pvalue}.
In~\cref{fig:EB} of the main paper, the breaking point of each method is illustrated with a $\blacktriangle$ on the x-axis. Interestingly, we observe that different methods have different breaking points, showing that methods that might appear to perform very closely in the moderate bias region (\eg compare LNL and EnD), can perform very differently in the strong bias region.

These empirical findings reveal that the existing methods are only effective under the previously unstated assumption that the attribute bias is not too strong, an important consideration for the use of these methods in practice. But equally important, we observe a clear connection between the effectiveness of existing methods and the attribute bias strength. 
Then, in~\cref{sec:theory} of the main paper, we formulate this connection.

\begin{table*}[htbp]
\caption{$p$-value of hypothesis tests over different color variances on Colored MNIST. The significant level of hypothesis tests is 0.05. Color variance is consecutive in its three rows. \textbf{Bold} for the $p$-value on the breaking point of each method. While existing methods are effective over baseline in moderate bias region where color variance exceeds 0.02, they perform close to or worse than baseline in strong bias region and own different breaking points.}

\label{tab:Pvalue}
\centering

\begin{tabular}{lrrrrrrrrr}
\toprule
Color variance           & 0.000                & 0.001                & 0.002                & 0.003                & 0.004                & 0.005                & 0.006                & 0.007                & 0.008                \\
\midrule
LNL~\cite{learn_not_to_learn_Colored_MNIST}          & 0.025                & 0.044                & 0.000                & 0.000                & 0.048                & 0.040                & 0.032                & 0.025                & 0.040                \\
BackMI~\cite{Back_MI}        & 0.034                & 0.020                & 0.043                & 0.010                & 0.023                & 0.037                & 0.042                & \textbf{0.036}       & 0.540                \\
EnD~\cite{End}         & 0.043                & 0.000                & 0.042                & 0.016                & 0.017                & 0.048                & 0.025                & 0.016                & 0.018                \\
CSAD~\cite{CSAD}          & 0.038                & 0.048                & \textbf{0.040}       & 0.723                & 1.000                & 1.000                & 1.000                & 1.000                & 1.000                \\
Ours  & 0.044                & \textbf{0.030}       & 0.684                & 1.000                & 1.000                & 1.000                & 1.000                & 1.000                & 1.000                \\
\midrule
\midrule
Color variance           & 0.009                & 0.010                & 0.011                & 0.012                & 0.013                & 0.014                & 0.015                & 0.016                & 0.017                \\
\midrule
LNL~\cite{learn_not_to_learn_Colored_MNIST}          & \textbf{0.045}       & 0.436                & 0.990                & 1.000                & 0.970                & 0.970                & 1.000                & 1.000                & 1.000                \\
BackMI~\cite{Back_MI}       & 0.960                & 1.000                & 1.000                & 0.990                & 0.960                & 0.990                & 1.000                & 0.980                & 1.000                \\
EnD~\cite{End}          & 0.044                & 0.034                & 0.018                & \textbf{0.040}       & 0.999                & 0.999                & 0.996                & 0.998                & 0.999                \\
CSAD~\cite{CSAD}          & 1.000                & 1.000                & 1.000                & 1.000                & 1.000                & 1.000                & 1.000                & 1.000                & 1.000                \\
Ours & 1.000                & 1.000                & 1.000                & 1.000                & 1.000                & 1.000                & 1.000                & 1.000                & 1.000                \\
\midrule
\midrule
Color variance           & 0.018                & 0.019                & 0.020                & 0.025                & 0.030                & 0.035                & 0.040                & 0.045                & 0.050                \\
\midrule
LNL~\cite{learn_not_to_learn_Colored_MNIST}          & 0.999                & 1.000                & 1.000                & 0.999                & 1.000                & 1.000                & 1.000                & 1.000                & 0.998                \\
BackMI~\cite{Back_MI}       & 0.996                & 1.000                & 1.000                & 1.000                & 1.000                & 1.000                & 1.000                & 1.000                & 1.000                \\
EnD~\cite{End}          & 0.994                & 1.000                & 1.000                & 1.000                & 1.000                & 1.000                & 1.000                & 1.000                & 0.999                \\
CSAD~\cite{CSAD}         & 1.000                & 1.000                & 1.000                & 1.000                & 1.000                & 1.000                & 1.000                & 1.000                & 1.000                \\
Ours  & 1.000                & 1.000                & 1.000                & 1.000                & 1.000                & 1.000                & 1.000                & 1.000                & 0.998  \\             
\bottomrule
\end{tabular}

\end{table*}

\begin{figure*}[htbp]
\begin{center}
  \includegraphics[width=1\linewidth]{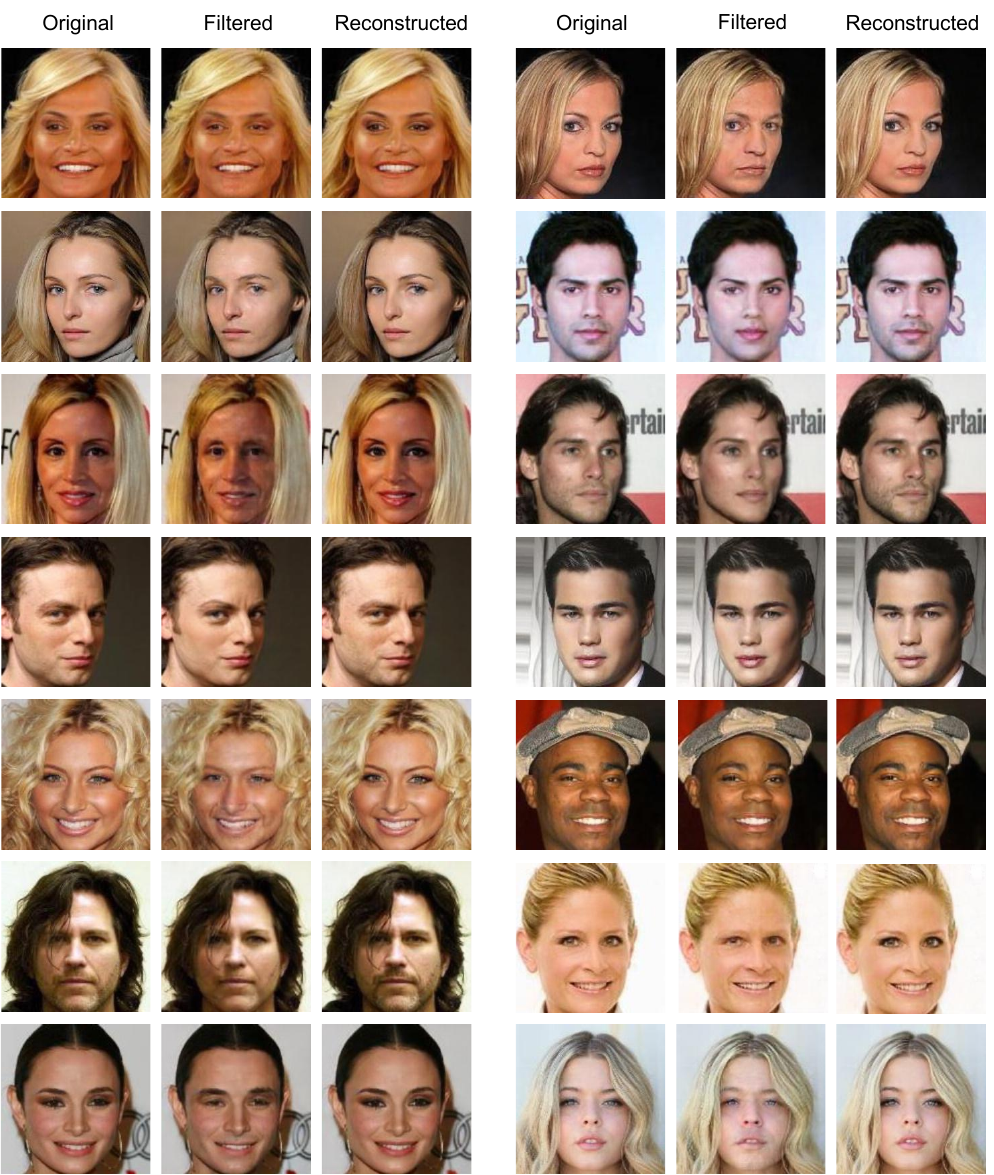}
\end{center}
  \caption{Examples of the filtered images from CelebA. Filtered images, where \emph{sex} is eliminated while other attributes are preserved, can be applied for many downstream tasks.}
\label{fig:CelebA_visualization_supp}
\end{figure*}
\clearpage
\begin{figure*}[htbp]
\begin{center}
  \includegraphics[width=1\linewidth]{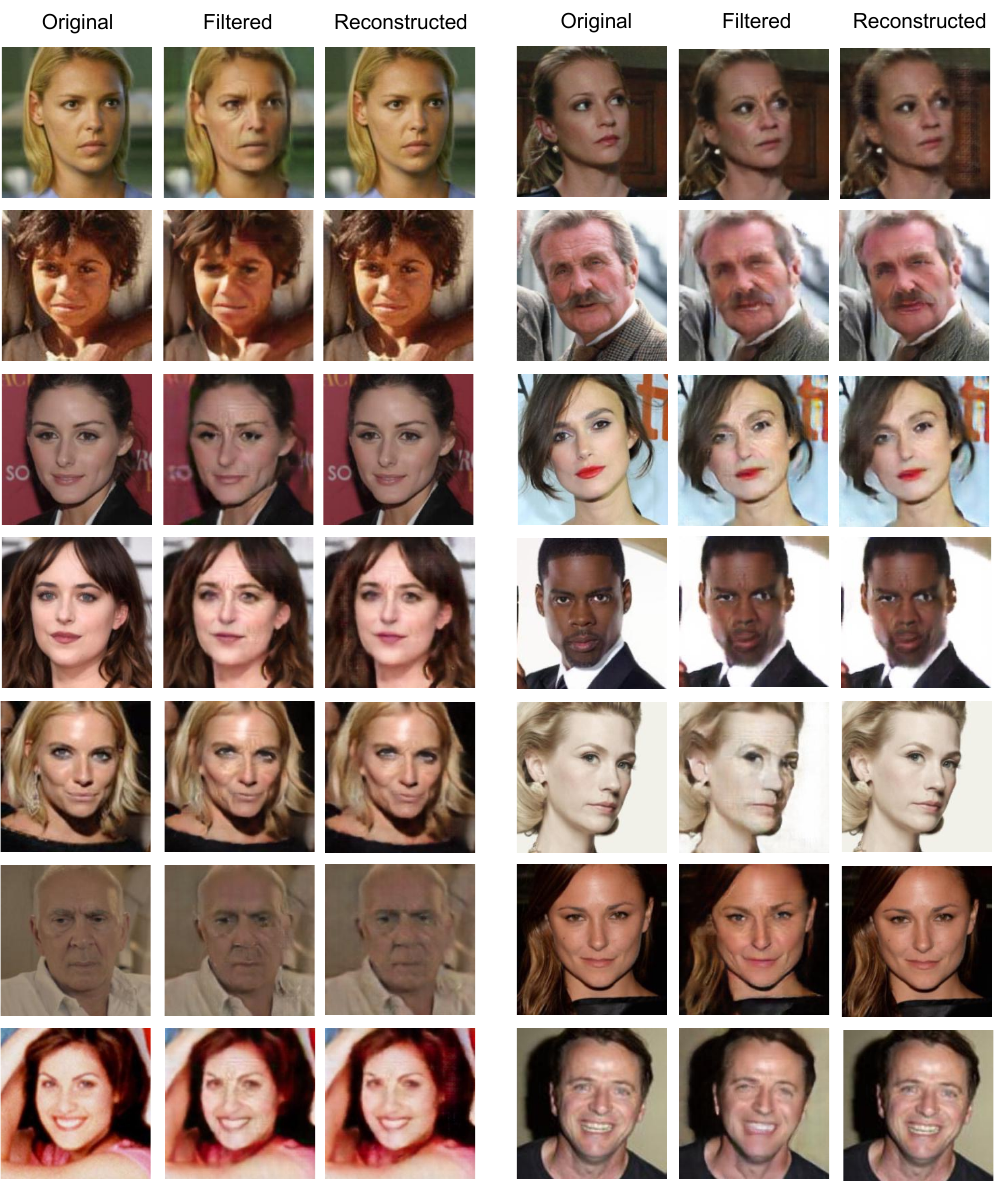}
\end{center}
  \caption{Examples of the filtered images from IMDB. Filtered images, where \emph{age} is eliminated while other attributes are preserved, can be applied for many downstream tasks.}
\label{fig:IMDB_visualization_supp}
\end{figure*}

\clearpage
\begin{figure*}[t]
\begin{center}
  \includegraphics[width=0.35\linewidth]{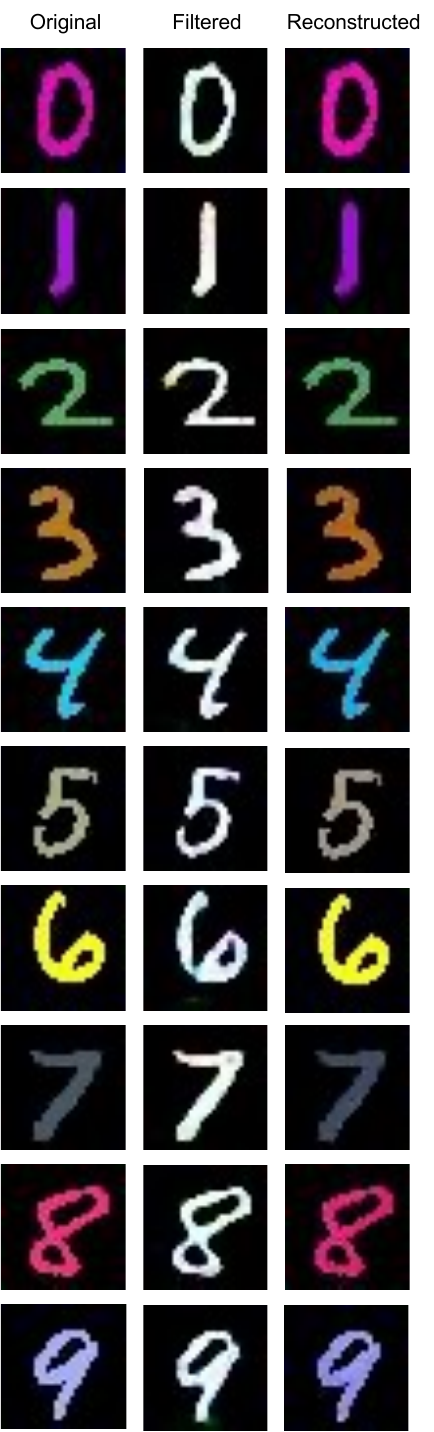}
\end{center}
  \caption{Examples of the filtered images from Colored MNIST.}
\label{fig:CMNIST_visualization}
\end{figure*}
\clearpage

\section{Details of Experiments on CelebA}
\label{appsec:CelebA}
In this section, we provide the detailed setup used to construct strong bias region. 
Besides, we conduct a comparative analysis of training our filter using real data and semi-synthetic data. 
Furthermore, we supplement the separate results on all 23 non-sex-related facial attributes of CelebA in addition to the average results in~\cref{tab:CelebA_nonsex} of the main paper.
Moreover, we include additional visualizations in~\cref{fig:CelebA_visualization_supp}.

\subsection{Strong Bias Region}

To study strong bias region of CelebA, we select 18 different points in bias strength (represented by $H(Y|A)$) to plot~\cref{fig:EB_CelebA} in the main paper.
These points can be categorized into two groups based on different setups with $H(Y|A)=0.13$ as the dividing point.
Initially, the first 11 points cover the range of $H(Y|A)$ from extreme bias 0 to 0.13. In this range, we ensure an even distribution of bias-conflicting samples across sex while gradually varying the percentage of such samples from 0 to 1, using a step size of 0.1.
Afterward, the next 7 points span a range of $H(Y|A)$ from 0.13 to moderate bias point 0.36. In this range, we incorporate all bias-conflicting samples from the original training set without preserving the even number of these samples across sex, and adjust the percentage of bias-conflicting samples from 0.1 to 1 with a step size of 0.1.
Moreover, within this range, the percentages (0.7, 0.8, and 0.9) are omitted from consideration due to their proximity in terms of $H(Y|A)$.
Thus, in general, we construct 18 training sets with different bias strengths.

\subsection{Comparison between Real \Univ{} Dataset and Semi-Synthetic \Univ{} Dataset}
In general, there are two ways one could gain access to \Univ{} dataset in practice (if it exists): either by collecting real data from different tasks or generating synthetic data through generative models. 
In~\cref{fig:EB_CelebA} of the main paper, for clarification, \Univ{} dataset is constructed by appending \emph{TrainEx} with FFHQ dataset~\cite{ffhq}. 
Moreover, we also construct the semi-synthetic \Univ{} dataset by combining \emph{TrainEx} with a same-sized synthetic dataset where images are randomly generated from random seeds using~\cite{CAT}.
In this section, we empirically compare these two ways following the setup of CelebA in~\cref{sec:exp} of the main paper.

In~\cref{fig:CelebA_ucomparison}, we can observe that when comparing the filter trained on the real \Univ{} dataset with the filter trained on the semi-synthetic \Univ{} dataset, the filter trained on real data exhibits slightly better performance in terms of test accuracy of \emph{Unbiased} testing set (the average accuracy on both bias-aligned samples and bias-conflicting samples). However, it performs slightly worse in terms of test accuracy of \emph{Bias-conflicting} testing set.
We conjecture that this difference arises from the fact that the real-world dataset FFHQ may exhibit similar attribute bias as CelebA by long-tail distribution~\cite{long_tail}, thereby leading to an accuracy improvement among bias-aligned samples. 
Conversely, the synthetic data contains few samples involving the same attribute bias due to its randomness in the generation process, thereby further enhancing the accuracy of bias-conflicting samples. 
Nevertheless, due to the comparatively lower image quality of synthetic data than real data, the filter trained on the synthetic data yields a comparatively lower average accuracy.

In~\cref{tab:CelebA_supp_append_only}, we present the performance of our filter trained exclusively on FFHQ and synthetic dataset for reference to supplement~\cref{tab:CelebA_nonsex} of the main paper.
It can be observed that the performance of the filter trained solely on FFHQ or synthetic dataset is not as good as the filter trained on \Univ{} dataset consisting of the original CelebA dataset and appended dataset in~\cref{tab:CelebA_nonsex} of the main paper. 
We conjecture that this is because of domain bias~\cite{domain_bias} in FFHQ and the comparatively lower image quality of synthetic data compared with the original CelebA dataset.

\begin{figure*}[htbp]
  \centering

      \subfloat[Accuracy of \emph{Unbiased} testing set.]{\includegraphics[width=0.55\linewidth]{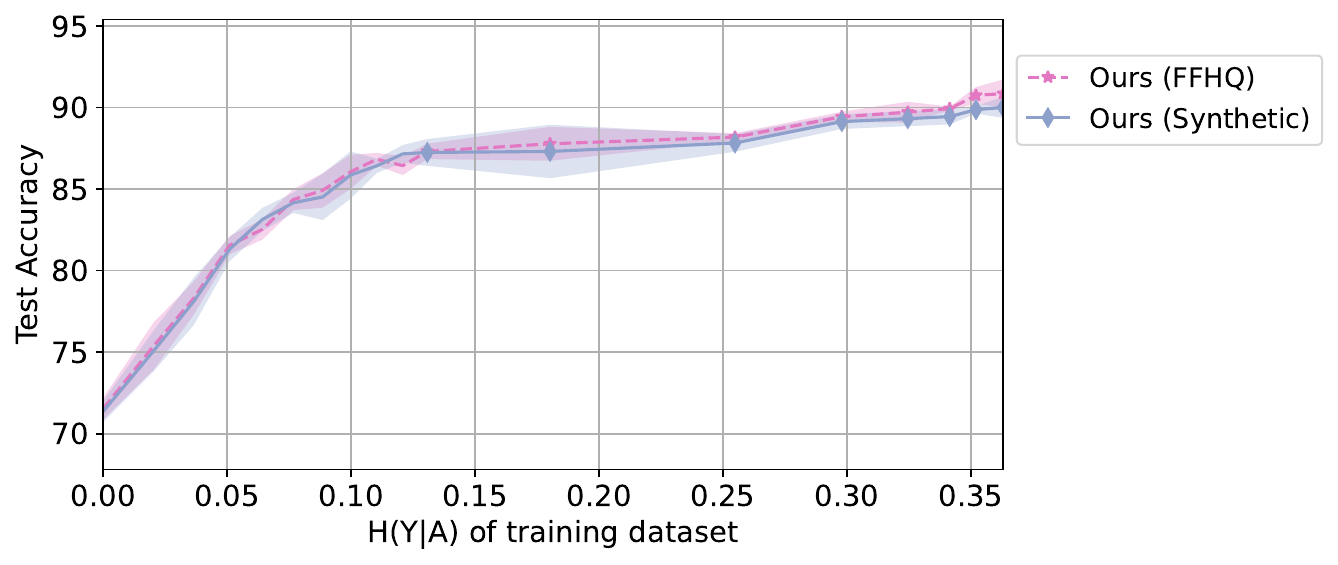} \label{fig:CelebA_Unbiased_Acc_ucomparison}}\quad
      \subfloat[Accuracy of \emph{Bias-conflicting} testing set.]{\includegraphics[width=0.42\linewidth]{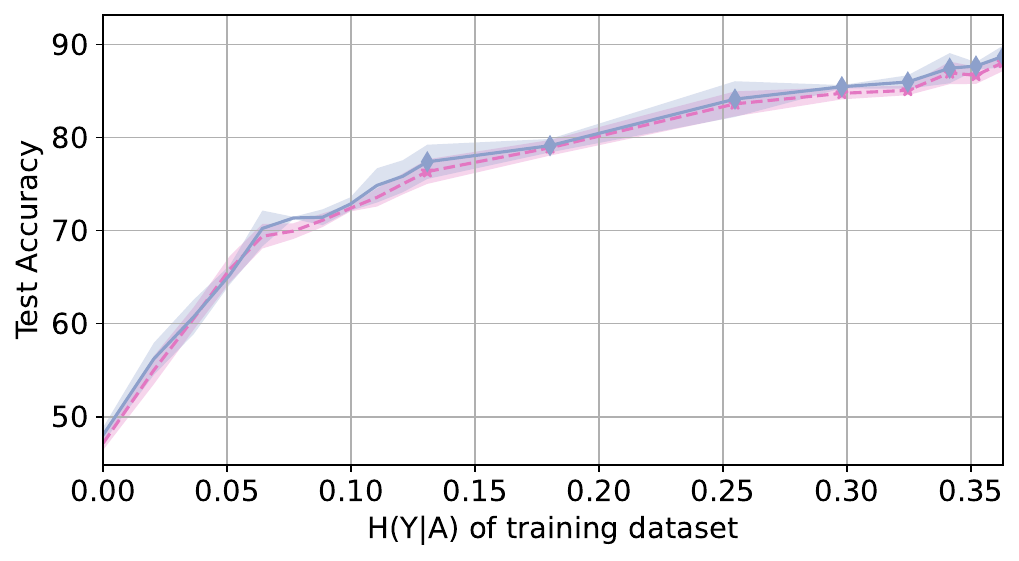}\label{fig:CelebA_Conflicting_Acc_ucomparison}}\quad
      
      \subfloat[Attribute bias $I(Z;A)$ of training set.]{\includegraphics[width=0.42\linewidth]{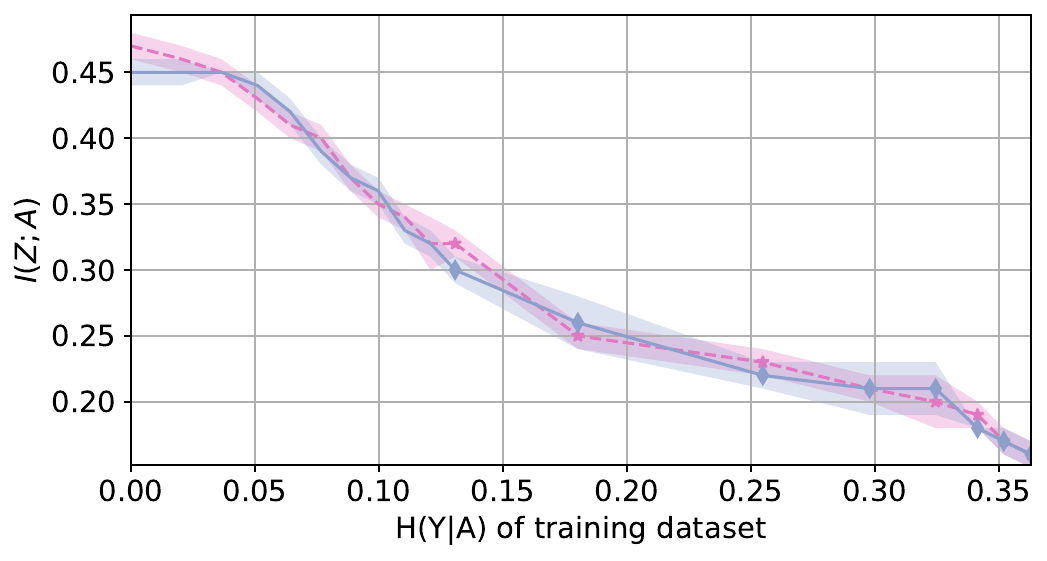}\label{fig:CelebA_IZA_ucomparison}}

  \caption{Performance comparison between our filter trained on the real universal dataset and semi-synthetic universal dataset. Our filter is trained on \emph{TrainEx} appending the dataset shown in parenthesis as universal dataset.}
  \label{fig:CelebA_ucomparison}
\end{figure*}
\begin{table*}
\caption{Accuracy of our filter trained exclusively on FFHQ and synthetic dataset under extreme bias and moderate bias on all 23 non-sex-related downstream tasks of CelebA.}
\label{tab:CelebA_supp_append_only}
\centering


\begin{tabular}{lcccc}
\toprule
Trained on  & \multicolumn{2}{c}{Extreme Bias (\emph{TrainEx})} & \multicolumn{2}{c}{Moderate Bias (\emph{TrainOri})} \\
\cmidrule(lr){2-3}  \cmidrule(lr){4-5} 
         Tested on                & Unbiased             & Bias-conflicting    & Unbiased              & Bias-conflicting     \\
                         \midrule
Baseline                & 59.03{\scriptsize $\pm$0.96}        & 21.53{\scriptsize $\pm$1.42}             & 78.08{\scriptsize $\pm$0.82}         & 71.85{\scriptsize $\pm$1.04}              \\
Ours (FFHQ only)        & 61.65{\scriptsize $\pm$1.84}        & 26.84{\scriptsize $\pm$1.76}             & 80.01{\scriptsize $\pm$2.79}         & 78.31{\scriptsize $\pm$3.14}              \\
Ours (Synthetic only)   & 59.93{\scriptsize $\pm$1.43}        & 24.65{\scriptsize $\pm$1.87}             & 79.12{\scriptsize $\pm$1.98}         & 75.72{\scriptsize $\pm$1.62}    \\
\bottomrule
\end{tabular}


\end{table*}


\subsection{Separate Results on All Non-Sex-Related Attributes under Extreme Bias and Moderate Bias}
\label{appsec:CelebA_separate}

To demonstrate the generality of our filter on different downstream tasks, we provide separate results on all 23 non-sex-related downstream tasks of CelebA in~\cref{tab:CelebA_supp,tab:CelebA_supp_moderate}, in addition to the average results in~\cref{tab:CelebA_nonsex} of the main paper.
All 23 non-sex-related attributes include \textit{BagsUnderEyes}, \textit{Bangs}, \textit{BigLips}, \textit{BigNose}, \textit{BlackHair}, \textit{BlondHair}, \textit{Blurry}, \textit{BrownHair}, \textit{Chubby}, \textit{DoubleChin}, \textit{Eyeglasses}, \textit{GrayHair}, \textit{HeavyMakeup}, \textit{HighCheekbones}, \textit{MouthSlightlyOpen}, \textit{NarrowEyes}, \textit{PaleSkin}, \textit{PointyNose}, \textit{Smiling}, \textit{StraightHair}, \textit{WavyHair}, \textit{WearingHat} and \textit{Young}. 

As shown in~\cref{tab:CelebA_supp}, our filter consistently outperforms other methods in extreme bias.
Furthermore, in~\cref{tab:CelebA_supp_moderate}, we also provide detailed results in moderate bias point to supplement the CelebA experiments and fully compare with other methods~\cite{LfF_CelebA_Bias_conflicting, CSAD, End}.
These experiments are conducted following the setup of CelebA in~\cref{sec:exp} of the main paper and the classification models are trained on \emph{TrainOri} training set.
As shown in~\cref{tab:CelebA_supp_moderate}, our model yields better average results than other methods in moderate bias point. 
We conjecture that this is because other methods focus on achieving better classification performance \wrt target labels while simultaneously addressing attribute bias \wrt protected attribute labels.
However, maximizing classification accuracy competes with minimizing attribute bias due to the association between target labels and protected attribute labels.
In contrast, our approach initially trains a target-agnostic filter that specializes in debiasing by solely removing protected attributes while preserving other attributes. 
Subsequently, these filtered images are utilized for prediction, ensuring that all target information remains unaltered. 
As a result, our filter performs comparably or even outperforms other methods in various downstream tasks.

\subsection{Pseudo-Labeling of the \Univ{} Distribution}

In~\cref{subsec:all_universal} of the main paper, we conduct the comparison across different methods when training all methods on \Univ{} dataset with pseudo-labels of targets. In that section, we assume that the dataset with both target and protected attribute labels yields extreme bias so that the model used for pseudo-labeling is also trained on the extreme bias data.
For a more comprehensive comparison, in this section, we provide the results on CelebA dataset where the model used for pseudo-labeling is trained on the original CelebA dataset (\emph{TrainOri}). As shown in~\cref{tab:Adult_all_universal_ori}, our method outperforms other methods.

Furthermore, as shown in~\cref{tab:CelebA_all_universal_ori_incremental,tab:CelebA_all_universal_incremental}, our method benefits most from appending FFHQ. Furthermore, the accuracy improvement is aligned with the breaking point of each method, which differs across different methods. Specifically, methods with lower breaking points perform better.

\begin{table*}[ht!]

\caption{Effect of pseudo-labeling (trained by the original CelebA dataset) on attribute bias removal methods in CelebA dataset under extreme bias. 
The baseline method trained on the extreme bias dataset (\emph{TrainEx}) is listed for reference. All other methods are trained on the combination of the moderate bias dataset (\emph{TrainOri}) and FFHQ pseudo-labeled by a classifier pretrained on TrainOri. With pseudo-labeling, all methods outperform the baseline, with our proposed method achieving the best performance.}

\label{tab:CelebA_all_universal_ori}
\centering


\begin{tabular}{lcccc}
\toprule
\multirow{2}{*}{Method} & \multicolumn{2}{c}{Test Accuracy}         & \multicolumn{2}{c}{Mutual Information} \\
\cmidrule(lr){2-3}  \cmidrule(lr){4-5} 
                        & Unbiased ↑          & Bias-conflicting ↑  & $I(Z;A)$ ↓          & $\Delta$ (\%) ↑  \\
                        \midrule
Baseline (TrainEx)      & 66.11{\scriptsize $\pm$0.32}          & 33.89{\scriptsize $\pm$0.45}          & 0.57{\scriptsize $\pm$0.01}           & 0.00             \\
Baseline                & 67.21{\scriptsize $\pm$0.48}          & 39.43{\scriptsize $\pm$0.48}          & 0.41{\scriptsize $\pm$0.00}           & 39.02            \\
\midrule
LNL~\cite{learn_not_to_learn_Colored_MNIST}                     & 72.84{\scriptsize $\pm$0.62}          & 44.85{\scriptsize $\pm$0.78}          & 0.38{\scriptsize $\pm$0.02}           & 33.33            \\
DI~\cite{domain_independent_training}                      & 76.82{\scriptsize $\pm$0.84}          & 50.13{\scriptsize $\pm$0.91}          & 0.33{\scriptsize $\pm$0.01}           & 42.11            \\
LfF~\cite{LfF_CelebA_Bias_conflicting}                     & 75.59{\scriptsize $\pm$0.15}          & 48.21{\scriptsize $\pm$1.71}          & 0.35{\scriptsize $\pm$0.02}           & 38.60            \\
EnD~\cite{End}                     & 72.92{\scriptsize $\pm$0.34}          & 48.52{\scriptsize $\pm$0.68}          & 0.37{\scriptsize $\pm$0.02}           & 35.09            \\
CSAD~\cite{CSAD}                    & 75.48{\scriptsize $\pm$0.63}          & 49.42{\scriptsize $\pm$1.38}          & 0.33{\scriptsize $\pm$0.01}           & 42.11            \\
BCL~\cite{BCL}                     & 77.61{\scriptsize $\pm$0.28}          & 50.34{\scriptsize $\pm$1.25}          & 0.34{\scriptsize $\pm$0.02}           & 40.35            \\
\midrule
Ours                    & \textbf{78.79{\scriptsize $\pm$0.39}} & \textbf{52.72{\scriptsize $\pm$0.56}} & \textbf{0.32{\scriptsize $\pm$0.01}}  & \textbf{43.86}  \\
\bottomrule
\end{tabular}

\end{table*}
\begin{table*}[ht!]
\caption{Comparison between training all methods on extreme bias data and universal dataset (pseudo-labeled by the classifier pretrained on \emph{TrainOri}) in CelebA. Universal dataset is constructed by appending extreme bias data (\emph{TrainEx}) in CelebA with FFHQ. The accuracy on \emph{Unbiased} testing set to predict \emph{blond hair} is reported. FFHQ is pseudo-labeled by the classifier pretrained on TrainOri to facilitate the usage of other methods. Breaking point is represented with $H(Y|A)$. $\Delta$ indicates the difference in testing accuracy when trained on TrainEx alone versus TrainEx appending FFHQ. Our method yields the lowest breaking point and outperforms other methods.}
\label{tab:CelebA_all_universal_ori_incremental}
\centering


\begin{tabular}{lcccc}
\toprule
Method    & \emph{TrainEx} ↑    & \emph{TrainEx+FFHQ} ↑ & $\Delta$ (Acc.) ↑ & \multicolumn{1}{c}{Breaking point ($H(Y|A)$) ↓} \\
\midrule
Baseline  & 66.11{\scriptsize $\pm$0.32}          & 67.21{\scriptsize $\pm$0.48}            & 1.10              & -                                               \\
LNL~\cite{learn_not_to_learn_Colored_MNIST}  & 64.81{\scriptsize $\pm$0.17}          & 72.84{\scriptsize $\pm$0.62}            & 8.03              & 0.0365                                          \\
DI~\cite{domain_independent_training}   & 66.83{\scriptsize $\pm$0.44}          & 76.82{\scriptsize $\pm$0.84}            & 9.99              & 0.0204                                          \\
LfF~\cite{LfF_CelebA_Bias_conflicting}  & 64.43{\scriptsize $\pm$0.43}          & 75.59{\scriptsize $\pm$0.15}            & 11.16             & 0.0204                                          \\
EnD~\cite{End}  & 66.53{\scriptsize $\pm$0.23}          & 72.92{\scriptsize $\pm$0.34}            & 6.39              & 0.0642                                          \\
CSAD~\cite{CSAD} & 63.24{\scriptsize $\pm$2.36}          & 75.48{\scriptsize $\pm$0.63}            & 12.24             & 0.0204                                          \\
BCL~\cite{BCL}  & 65.30{\scriptsize $\pm$0.51}          & 77.61{\scriptsize $\pm$0.28}            & 12.31             & 0.0204                                          \\
Ours      & \textbf{66.31{\scriptsize $\pm$0.26}} & \textbf{78.79{\scriptsize $\pm$0.39}}   & \textbf{12.48}    & 0.0204                                         \\
\bottomrule
\end{tabular}

\end{table*}
\begin{table*}[ht!]
\caption{Comparison between training all methods on extreme bias data and universal dataset (pseudo-labeled by the
classifier pretrained on \emph{TrainEx}) in CelebA. Universal dataset is constructed by appending extreme bias data (\emph{TrainEx}) in CelebA with FFHQ. The accuracy on \emph{Unbiased} testing set to predict \emph{blond hair} is reported. FFHQ is pseudo-labeled by the classifier pretrained on TrainEx to facilitate the usage of other methods. Breaking point is represented with $H(Y|A)$. $\Delta$ indicates the difference in testing accuracy when trained on TrainEx alone versus TrainEx appending FFHQ. Our method yields the lowest breaking point and outperforms other methods.}
\label{tab:CelebA_all_universal_incremental}
\centering


\begin{tabular}{lcccc}
\toprule
Method   & \emph{TrainEx} ↑    & \emph{TrainEx+FFHQ} ↑ & $\Delta$ (Acc.) ↑ & \multicolumn{1}{c}{Breaking point ($H(Y|A)$) ↓} \\
\midrule
Baseline & 66.11{\scriptsize $\pm$0.32}          & 67.02{\scriptsize $\pm$0.78}            & 0.91              & -                                               \\
LNL~\cite{learn_not_to_learn_Colored_MNIST}      & 64.81{\scriptsize $\pm$0.17}          & 67.47{\scriptsize $\pm$0.34}            & 2.66              & 0.0365                                          \\
DI~\cite{domain_independent_training}       & 66.83{\scriptsize $\pm$0.44}          & 70.61{\scriptsize $\pm$0.58}            & 3.78              & 0.0204                                          \\
LfF~\cite{LfF_CelebA_Bias_conflicting}      & 64.43{\scriptsize $\pm$0.43}          & 69.42{\scriptsize $\pm$0.61}            & 4.99              & 0.0204                                          \\
EnD~\cite{End}      & 66.53{\scriptsize $\pm$0.23}          & 67.65{\scriptsize $\pm$0.34}            & 1.12              & 0.0642                                          \\
CSAD~\cite{CSAD}     & 63.24{\scriptsize $\pm$2.36}          & 68.18{\scriptsize $\pm$0.16}            & 4.94              & 0.0204                                          \\
BCL~\cite{BCL}      & 65.30{\scriptsize $\pm$0.51}          & 70.43{\scriptsize $\pm$0.71}            & 5.13              & 0.0204                                          \\
\midrule
Ours     & \textbf{66.31{\scriptsize $\pm$0.26}} & \textbf{72.05{\scriptsize $\pm$0.86}}   & \textbf{5.74}     & 0.0204                                          \\
\bottomrule
\end{tabular}

\end{table*}

\subsection{Comparison with Generative Dataset Augmentation}
\label{appsec:GAN-based}
In~\cref{subsec:GANs} of the main paper, we compare our filter with the methods based on generative adversarial network (GAN) from two key aspects: framework difference and experimental evaluation. In this section, we supplement more analysis on the difference between other GAN-based methods with our method.

First, from the perspective of framework, one notable distinction between our approach and other GAN-based methods~\cite{CGN, Camel, BiaSwap, GAN_Debiasing_hat_glasses_correlation} is that we use generative models to directly modify the input image by eliminating the protected attribute information while preserving other information, and only utilize the filtered images to train the downstream task classifier, rather than augment the original dataset with additional images. This sets our approach apart from other GAN-based methods that rely on using both the original and augmented data for classifier training, with the goal of maintaining performance. Our use of generative models to edit images in place is a unique feature of our filter, making it a new approach to address this problem. Furthermore, our filter is target-agnostic and only needs to be trained once when applying to different targets of downstream tasks, but existing methods~\cite{CGN, Camel, BiaSwap, GAN_Debiasing_hat_glasses_correlation} require target labels for training and need to be retrained every time during applying.
Specifically, CGN~\cite{CGN} is designed with three predefined independent mechanisms, namely shape, texture, and background. This design is effective for digit classification on MNIST~\cite{mnist} and object classification on ImageNet~\cite{imagenet}, but may not be suitable for face dataset since many facial attributes (\eg \textit{Chubby}) cannot be well extracted by these three predefined mechanisms. In contrast, our filter is more general without the need for prior knowledge about the type of bias attribute. Also, CGN is trained on an even number of real and counterfactual images in each batch, and as authors mentioned in~\cite{CGN} the performance of CGN may suffer if there are fewer real images than counterfactual images. In contrast, the classification task in use of our filter solely relies on synthetic data without any real data. According to the comparable performance between CGN and our filter, as shown in~\cref{tab:CelebA_GAN_BlondHair,tab:CelebA_GAN_HeavyMakeup}, the quality of the generated images from our filter may be better than that of CGN. Furthermore, the technique of other GAN-based methods~\cite{Camel,BiaSwap,GAN_Debiasing_hat_glasses_correlation} may be ineffective in the extreme bias setup. For instance, in the case of CAMEL~\cite{Camel}, inter-subgroup augmentation cannot be normally deployed in the first stage of the two-stage training process if there are few bias-conflicting samples in the extreme bias case. Similarly, BiaSwap~\cite{BiaSwap} relies on identifying bias-aligned samples and bias-conflicting samples, but in the extreme bias case, there may be no bias-conflicting samples to train on.

\begin{table*}[htbp]

\caption{Accuracy of GAN-based methods under extreme bias and moderate bias in CelebA to predict \textit{heavy makeup}. For our method, we report inside parentheses the partially observable Universal distribution used in addition to TrainEx for training its filter. Our method performs better or on-par with GAN-based methods with only half the size of classifier training set.}

\label{tab:CelebA_GAN_HeavyMakeup}
\centering
\resizebox{1\textwidth}{!}{%
\begin{tabular}{lcccccc}
\toprule
\multirow{2}{*}{Method}  & \multicolumn{3}{c}{Extreme Bias Training Set (\emph{TrainEx})}              & \multicolumn{3}{c}{Moderate Bias Training Set (\emph{TrainOri})}         \\
\cmidrule(lr){2-4}  \cmidrule(lr){5-7} 
                         & Classifier training size ↓ & Unbiased ↑           & Bias-conflicting ↑   & Classifier training size ↓ & Unbiased ↑           & Bias-conflicting ↑   \\
                         \midrule
Baseline                 & 130410                   & 59.50{\scriptsize $\pm$0.12}          & 30.09{\scriptsize $\pm$0.53}          & 162770                   & 62.00{\scriptsize $\pm$0.02}          & 33.75{\scriptsize $\pm$0.28}          \\
CGN~\cite{CGN}                      & 130410$\times$2                 & 58.85{\scriptsize $\pm$1.56}          & 29.57{\scriptsize $\pm$1.34}          & 162770$\times$2                 & 70.98{\scriptsize $\pm$0.76}          & 48.12{\scriptsize $\pm$1.72}          \\
CAMEL~\cite{Camel}                    & 130410$\times$2                 & 63.34{\scriptsize $\pm$1.23}          & 30.74{\scriptsize $\pm$1.53}          & 162770$\times$2                 & 74.37{\scriptsize $\pm$1.32}          & 53.53{\scriptsize $\pm$1.25}          \\
BiaSwap~\cite{BiaSwap}                  & 130410$\times$2                 & 64.93{\scriptsize $\pm$0.82}          & 30.88{\scriptsize $\pm$1.96}          & 162770$\times$2                 & 77.31{\scriptsize $\pm$1.27}          & 55.51{\scriptsize $\pm$1.63}          \\

GAN-Debiasing~\cite{GAN_Debiasing_hat_glasses_correlation}            & 130410$\times$2                 & 62.98{\scriptsize $\pm$1.46}          & 30.15{\scriptsize $\pm$1.78}          & 162770$\times$2                 & 75.64{\scriptsize $\pm$1.72}          & 54.47{\scriptsize $\pm$1.52}          \\

Ours           & 130410                   & 64.91{\scriptsize $\pm$1.32}          & 31.36{\scriptsize $\pm$2.43}          & 162770                   & 76.82{\scriptsize $\pm$1.83}          & 54.09{\scriptsize $\pm$3.04}          \\
\midrule
Ours (FFHQ)      & 130410                   & \textbf{71.59{\scriptsize $\pm$0.81}} & 38.64{\scriptsize $\pm$1.57}          & 162770                   & 79.59{\scriptsize $\pm$1.73}          & 57.58{\scriptsize $\pm$1.83}          \\
Ours (Synthetic) & 130410                   & 69.32{\scriptsize $\pm$0.95}          & \textbf{40.91{\scriptsize $\pm$1.54}} & 162770                   & \textbf{80.73{\scriptsize $\pm$0.71}} & \textbf{58.27{\scriptsize $\pm$1.67}} \\
\bottomrule
\end{tabular}
}
\end{table*}

\section{Details of Experiments on Adult}
\label{appsec:Adult}
In this section, we provide additional information about the experiments conducted on Adult dataset. 
For the census data in Adult dataset, we perform data preprocessing where continuous attributes are normalized to the range of $[0,1]$, and discrete attributes are encoded as one-hot vectors.
To plot~\cref{fig:EB_Adult} in the main paper, we construct several training sets with different $H(Y|A)$ by gradually incorporating bias-conflicting samples into \emph{TrainEx} training set. 
Specifically, we adjust the percentage of bias-conflicting samples from 0 to 1 with 0.1 as the step size.

\subsection{Pseudo-Labeling of the \Univ{} Distribution}

In~\cref{subsec:all_universal} of the main paper, we conduct the comparison across different methods when training all methods on \Univ{} dataset with pseudo-labels of targets. In that section, we assume that the dataset with both target and protected attribute labels yields extreme bias so that the model used for pseudo-labeling is also trained on the extreme bias data.
For a more comprehensive comparison, in this section, we provide the results on Adult dataset where the model used for pseudo-labeling is trained on the original Adult dataset. As shown in~\cref{tab:Adult_all_universal_ori}, our method outperforms other methods.

\begin{table*}[ht!]

\caption{Effect of pseudo-labeling (trained by the original Adult dataset) on attribute bias removal methods in Adult dataset under extreme bias. 
The baseline method trained on the extreme bias dataset (\emph{TrainEx}) is listed for reference. All other methods are trained on the combination of the moderate bias dataset (\emph{TrainOri}) and bias-conflicting samples pseudo-labeled by a classifier pretrained on TrainOri. With pseudo-labeling, all methods outperform the baseline, with our proposed method achieving the best performance.}

\label{tab:Adult_all_universal_ori}
\centering





}%
\end{table*}

\section{Experimental Evaluation on IMDB}
\label{appsec:IMDB}
In this section, we present the complementary results on IMDB dataset~\cite{IMDB} in~\cref{tab:IMDB}, and the visualization of the filtered images on IMDB dataset in~\cref{fig:IMDB_visualization_supp}.
Furthermore, we conduct a comprehensive analysis of the results, both quantitatively and qualitatively.

\subsection{Experiment Setup}

Following the setup in~\cite{learn_not_to_learn_Colored_MNIST}, we conduct experiments on IMDB with age as the protected attribute.
It is worth noting that age is a twelve-class random variable quantified with five-year steps in IMDB, which is different from the binary protected attribute label (sex) in CelebA.
To facilitate our analysis, following data split in~\cite{learn_not_to_learn_Colored_MNIST}, we introduce three sub-datasets: (1) \emph{EB1} consisting of women aged 0-29 and men aged 40+, (2) \emph{EB2} consisting of women aged 40+ and men aged 0-29, and (3) \emph{Test} which contains 20\% of all cleaned samples aged 0-29 and 40+. 
Besides, we construct a \Univ{} dataset by appending the biased dataset with the synthetic dataset generated by the generative model~\cite{CAT}.

\begin{table}[ht]
\caption{Performance of attribute bias removal methods in IMDB. \textbf{Bold} for the best results.}
\label{tab:IMDB}
\centering


\begin{tabular}{lcccc}
\toprule
Trained on & \multicolumn{2}{c}{EB1}    & \multicolumn{2}{c}{EB2}    \\
\cmidrule(lr){2-3}  \cmidrule(lr){4-5} 
                        Tested on & EB2               & Test              & EB1               & Test              \\
                        \midrule
Baseline                & 59.9{\scriptsize $\pm$0.0}          & 84.4{\scriptsize $\pm$0.1}          & 57.8{\scriptsize $\pm$0.0}          & 69.8{\scriptsize $\pm$0.0}          \\
BlindEye~\cite{BlindEye_IMDB_eb}                & 63.7{\scriptsize $\pm$1.8}          & 85.6{\scriptsize $\pm$1.1}          & 57.3{\scriptsize $\pm$1.6}          & 69.9{\scriptsize $\pm$1.3}          \\
LNL~\cite{learn_not_to_learn_Colored_MNIST}                     & 68.0{\scriptsize $\pm$2.0}          & 86.7{\scriptsize $\pm$0.9}          & 64.2{\scriptsize $\pm$1.1}          & 74.5{\scriptsize $\pm$1.1}          \\
BackMI~\cite{Back_MI}                  & 69.1{\scriptsize $\pm$0.0}          & 87.6{\scriptsize $\pm$0.0}          & 65.1{\scriptsize $\pm$0.0}          & 76.2{\scriptsize $\pm$0.0}          \\
EnD~\cite{End}                     & 65.5{\scriptsize $\pm$0.8}          & 87.2{\scriptsize $\pm$0.3}          & 69.4{\scriptsize $\pm$2.0}          & 78.2{\scriptsize $\pm$1.2}          \\
CSAD~\cite{CSAD}                    & 70.4{\scriptsize $\pm$2.8}          & 87.0{\scriptsize $\pm$1.1}          & 68.1{\scriptsize $\pm$1.8}          & 78.7{\scriptsize $\pm$1.9}          \\
Ours             & 73.1{\scriptsize $\pm$2.3}          & 87.6{\scriptsize $\pm$0.3}          & 69.4{\scriptsize $\pm$1.2}          & 78.6{\scriptsize $\pm$0.7}          \\
\midrule
Ours (Universal)   & \textbf{75.7{\scriptsize $\pm$1.8}} & \textbf{88.3{\scriptsize $\pm$0.5}} & \textbf{70.3{\scriptsize $\pm$2.3}} & \textbf{79.2{\scriptsize $\pm$1.0}} \\
\bottomrule
\end{tabular}

\end{table}

\subsection{Quantitatively Analysis}
Compared with CelebA results in~\cref{tab:CelebA_BlondHair} of the main paper, we observe that all models perform well on IMDB, as shown in~\cref{tab:IMDB}. To better analyze it, we present detailed statistics in~\cref{fig:statistics_IMDB,fig:boxplot_IMDB}.
According to the age label division shown in~\cref{tab:IMDB_age_label}, we present boxplots of age labels for females and males in~\cref{fig:boxplot_IMDB}. 
These boxplots show that \emph{Test} yields the same attribute bias~as \emph{EB1} so that the accuracy is high when trained on \emph{EB1} and tested on \emph{Test}.
Moreover, we observe that the sex distribution of \emph{EB2} is severely imbalanced according to the number of female samples (1744) and male samples (15056) as shown in~\cref{fig:statistics_IMDB}.
Besides, the difference between samples with closed age is not visually clear when comparing the younger females or younger males with the older females or older males respectively in~\cref{fig:IMDB_EB_comparison}.
Furthermore, the majority age group of females is 25-29 in \emph{EB1} and 40-44 in \emph{EB2}, and the majority age group of males is 40-44 in \emph{EB1} and 25-29 in \emph{EB2}. Thus, the dominated groups for each sex category in \emph{EB1} and \emph{EB2} are visually indistinguishable. 
Therefore, the models may compromise the classification performance with attribute bias removal \wrt age since age information is hard to capture by models, thereby resulting in a good performance for all methods.

\subsection{Qualitatively Analysis}
As shown in~\cref{fig:IMDB_visualization_supp}, our model filters out age information by blurring the image and adding wrinkles to images of young individuals, which further closes the appearance gap between different age groups.
Notably, there are several outliers in the training set hindering the generation of high-quality images. As shown in~\cref{fig:IMDB_long_distance}, some samples do not contain an obvious face. Moreover, in~\cref{fig:IMDB_two}, some samples contain more than one face. Besides, as identified in~\cite{BlindEye_IMDB_eb}, the age labels of IMDB are noisy. Overall, these obstacles negatively affect the generation of high-quality filtered images for our approach.
 
\section{Experimental Evaluation on Waterbirds}
\label{appsec:Waterbirds}
We supplement the experiments on Waterbirds~\cite{DRO}, where the target is waterbird or landbird, and the attribute is water background or land background. Following Group DRO~\cite{DRO}, the training set contains more instances of waterbirds with water backgrounds and landbirds with land backgrounds compared to other combinations, whereas such a correlation does not exist in the testing set. 
The networks of the filter and the assigned hyper-parameters are the same as the experiments on CelebA, as shown in~\cref{tab:CelebA_filter_network} and~\cref{tab:hp_filter}, respectively.
To construct the \Univ{} dataset, we ensure an even number of landbirds and waterbirds on both land and water backgrounds by utilizing provided pixel-level segmentation masks to extract each bird from its original background and then placing it onto water background or land background sourced from the Places dataset~\cite{Places}. Please see the qualitative results of our method in~\cref{fig:waterbirds} and quantitative comparison in~\cref{tab:Waterbirds&NLP}.

\section{Application on More Modalities (CivilComment-WILDS)}
\label{appsec:modalities}
In the main paper, our experiments include modalities beyond images, \eg Adult dataset, which consists of census data. For NLP, following JTT~\cite{JTT}, we supplement the results on CivilComments-WILDS, an NLP task that classifies online comments as toxic or non-toxic, where target labels are spuriously correlated with mentions of certain demographic identities. 
We assign hyper-parameters of the filter ($\lambda_{mi}$, $\lambda_{pred}$, and $\lambda_{rec}$) to be 5, 5, and 10, respectively. Other hyper-parameters are configured following~\cite{Wilds}. Consequently, we set a maximum of 300 tokens for the token-type embedding of the original text and ensure the same number of tokens for both reconstructed text and filtered text. 
The filter is designed to remove words associated with specific demographic identities. 
The filter networks remain consistent with the Adult experiments, with the exception of adjusting the input dimension to 300, as shown in~\cref{tab:Adult_filter_network}.
To construct the \Univ{} dataset, we build a subset by including the samples from all 16 groups categorized by target (toxic or non-toxic) and all 8 demographic identities. Please see~\cref{tab:Waterbirds&NLP} for the results.

\begin{figure*}[ht!]
    \begin{minipage}{0.48\textwidth}
        \captionsetup{type=table}
\caption{Average and worst-group test accuracies on Waterbirds and CivilComments-WILDS.}
\label{tab:Waterbirds&NLP}
\centering
\resizebox{1\textwidth}{!}{%
\begin{tabular}{lcccc}
\toprule
\multirow{2}{*}{Model} & \multicolumn{2}{c}{Waterbirds}   & \multicolumn{2}{c}{CivilComments-WILDS} \\
\cmidrule(lr){2-3}  \cmidrule(lr){4-5} 
                       & Avg        & Worst-group         & Avg            & Worst-group            \\
                       \midrule
Baseline               & 97.26{\scriptsize $\pm$0.20} & 72.60{\scriptsize $\pm$1.90}          & 92.64{\scriptsize $\pm$0.52}     & 57.42{\scriptsize $\pm$0.73}             \\
LfF~\cite{LfF_CelebA_Bias_conflicting}                    & 91.22{\scriptsize $\pm$0.85} & 78.04{\scriptsize $\pm$1.83}          & 92.52{\scriptsize $\pm$0.91}     & 58.81{\scriptsize $\pm$1.23}             \\
Group DRO~\cite{DRO}                    & 93.50{\scriptsize $\pm$0.30} & 91.40{\scriptsize $\pm$1.10}          & 88.91{\scriptsize $\pm$0.28}     & 69.94{\scriptsize $\pm$0.52}             \\
JTT~\cite{JTT}                    & 93.34{\scriptsize $\pm$0.56} & 86.72{\scriptsize $\pm$0.97}          & 91.14{\scriptsize $\pm$0.34}     & 69.36{\scriptsize $\pm$0.89}             \\
\midrule
Ours                   & 93.37{\scriptsize $\pm$0.81} & 92.06{\scriptsize $\pm$1.58}          & 91.26{\scriptsize $\pm$0.95}     & 69.51{\scriptsize $\pm$0.71}             \\
Ours (Universal)       & 94.24{\scriptsize $\pm$0.92} & \textbf{93.21{\scriptsize $\pm$1.43}} & 92.42{\scriptsize $\pm$1.43}     & \textbf{70.25{\scriptsize $\pm$0.56}}    \\
\bottomrule
\end{tabular}

}
    \end{minipage}
    \hfill
    \begin{minipage}{0.48\textwidth}
\centering
  \includegraphics[width=0.7\linewidth]{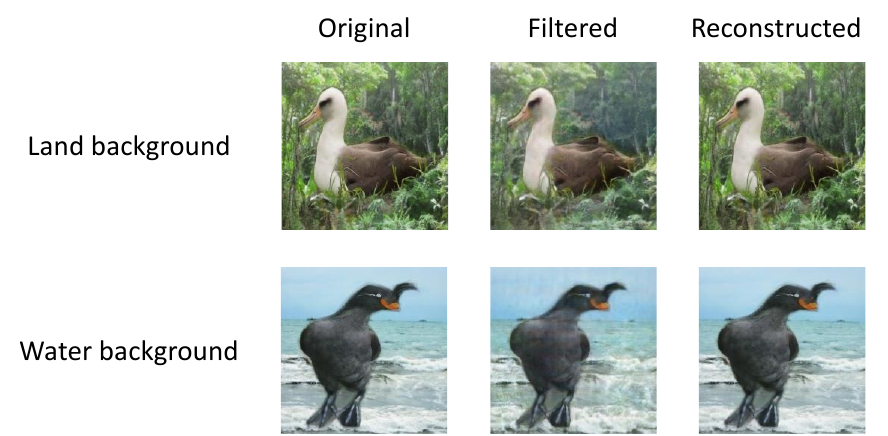}
  \caption{For the origin image with a land background, the background of the filtered image tends to shift towards a blue hue, bridging the gap between land and water backgrounds, since blue is the predominant color for water backgrounds. 
  On the other hand, for the original image with a water background, the background of the filtered image tends to become blurred and yield the texture of leaves.}
\label{fig:waterbirds}
    \end{minipage}%
\end{figure*}

\section{Comparison with Group DRO and DFA}
\label{appsec:DRO_DFA}
Group DRO~\cite{DRO} emphasizes the role of regularization in distributionally robust optimization to prevent overfitting and enhance worst-group performance. Our approach surpasses Group DRO by not demanding target labels during training, which becomes valuable when obtaining target labels is challenging. Thus, we believe our approach is more applicable to address attribute bias.
Besides, DFA~\cite{DFA} introduces feature-level data augmentation to generate diverse bias-conflicting samples by disentangling intrinsic and bias attributes from bias-aligned samples. In contrast, our approach conducts in-place image editing, eliminating the need for augmenting the dataset with additional bias-conflicting samples. This property is advantageous especially for non-binary bias attributes since generating bias-conflicting samples with combinations of target and all other bias attributes, can be time-consuming. 

We supplement the empirical comparison with them on several datasets in~\cref{tab:CelebA_DFA&DRO}. The implementation follows the same guidelines as our main paper. As shown in~\cref{tab:CelebA_DFA&DRO}, Group DRO trade-off average performance with enhanced worst-group performance. One possible reason is that the regularization in Group DRO may overemphasize the worst group during training in extreme bias where the worst group performance is notably compromised.

\begin{table*}[ht!]
\caption{Comparison under extreme bias between DFA, Group DRO, GAN-Debiasing, and our method based on testing accuracy in CelebA.}
\label{tab:CelebA_DFA&DRO}
\centering

\resizebox{0.62\textwidth}{!}{%
\begin{tabular}{lcccc}
\toprule
\multirow{2}{*}{Method}          & \multicolumn{2}{c}{BlondHair}             & \multicolumn{2}{c}{HeavyMakeup}           \\
\cmidrule(lr){2-3}  \cmidrule(lr){4-5} 
                                 & Unbiased ↑          & Bias-conflicting ↑  & Unbiased ↑          & Bias-conflicting ↑  \\
                                 \midrule
Baseline                         & 66.11{\scriptsize $\pm$0.32}          & 33.89{\scriptsize $\pm$0.45}          & 59.50{\scriptsize $\pm$0.12}          & 30.09{\scriptsize $\pm$0.53}          \\
DFA~\cite{DFA}                              & 65.93{\scriptsize $\pm$0.43}          & 32.18{\scriptsize $\pm$0.63}          & 63.74{\scriptsize $\pm$0.43}          & 30.69{\scriptsize $\pm$0.97}          \\
Group DRO~\cite{DRO}                        & 63.82{\scriptsize $\pm$0.81}          & 34.23{\scriptsize $\pm$0.94}          & 61.43{\scriptsize $\pm$1.82}          & 30.92{\scriptsize $\pm$1.62}          \\
\midrule
Ours                             & 66.31{\scriptsize $\pm$0.26}          & 32.22{\scriptsize $\pm$0.43}          & 64.91{\scriptsize $\pm$1.32}          & 31.36{\scriptsize $\pm$2.43}          \\
Ours (FFHQ)                      & \textbf{71.53{\scriptsize $\pm$0.67}} & 47.17{\scriptsize $\pm$0.72}          & \textbf{71.59{\scriptsize $\pm$0.81}} & 38.64{\scriptsize $\pm$1.57}          \\
Ours (Synthetic)                 & 71.37{\scriptsize $\pm$0.64}          & \textbf{48.06{\scriptsize $\pm$0.82}} & 69.32{\scriptsize $\pm$0.95}          & \textbf{40.91{\scriptsize $\pm$1.54}} \\
\bottomrule
\end{tabular}

}
\end{table*}

\section{Related Work about Domain-Invariant Representation Learning Approaches}
\label{appsec:domain_invariant}
In the main paper, we provide related work about mutual information-based methods~\cite{learn_not_to_learn_Colored_MNIST,Back_MI,CSAD}, which provides a good basis to draw conclusions about domain shift works. In this section, we cover a more comprehensive list of related work about domain-invariant representation learning approaches.
Ganin \etal~\cite{ganin2016domain} propose Domain-Adversarial Neural Networks (DANN) for domain adaptation, which are trained to be discriminative for the primary task in the source domain and domain-indiscriminate to handle domain shifts. Zhao \etal~\cite{zhao2019learning} introduce an upper bound for conditional shift (the variation in class-conditional distributions of input features between the source and target domains). Albuquerque \etal~\cite{albuquerque2019generalizing} present an adversarial domain generalization method to minimize risk across unseen domains by mapping data to a space where training distributions are indistinguishable while retaining task-relevant information.

\begin{table*}[h!]
\caption{Division of the age label in IMDB.}
\label{tab:IMDB_age_label}
\centering

\resizebox{1\textwidth}{!}{%

\begin{tabular}{lcccccccccccc}
\toprule
Age   & 0-19 & 20-24 & 25-29 & 30-34 & 35-39 & 40-44 & 45-49 & 50-54 & 55-59 & 60-64 & 65-60 & 70+ \\
\midrule
Label & 0    & 1     & 2     & 3     & 4     & 5     & 6     & 7     & 8     & 9     & 10    & 11 \\
\bottomrule
\end{tabular}

}
\end{table*}
\begin{figure*}[htbp]
    \centering 
    
      \subfloat[EB1.]{\includegraphics[width=0.3\linewidth]{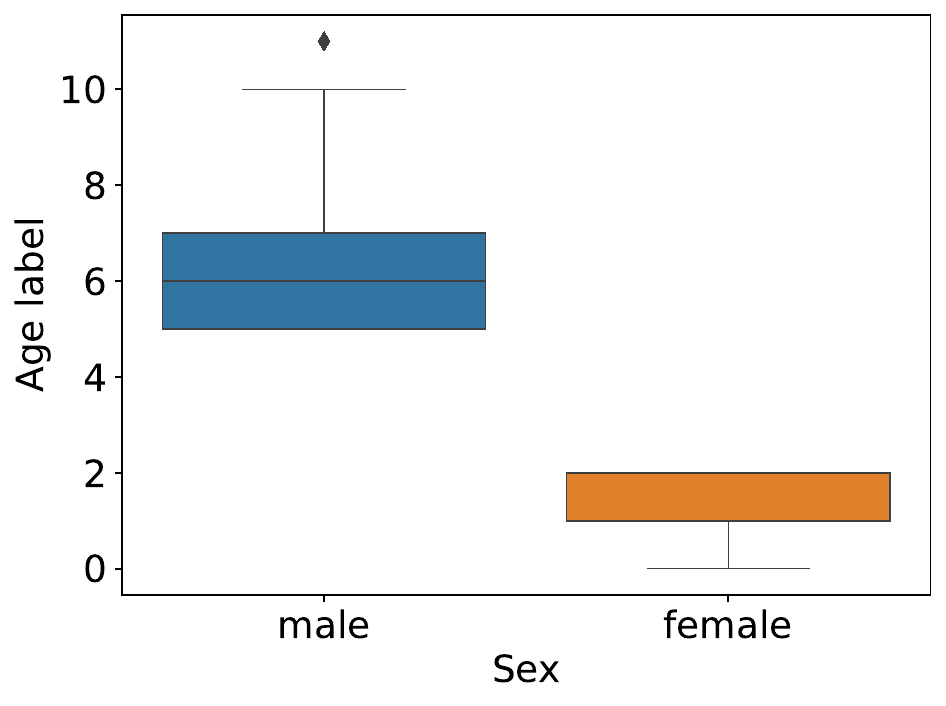}\label{fig:eb1}}\quad
      \subfloat[EB2.]{\includegraphics[width=0.3\linewidth]{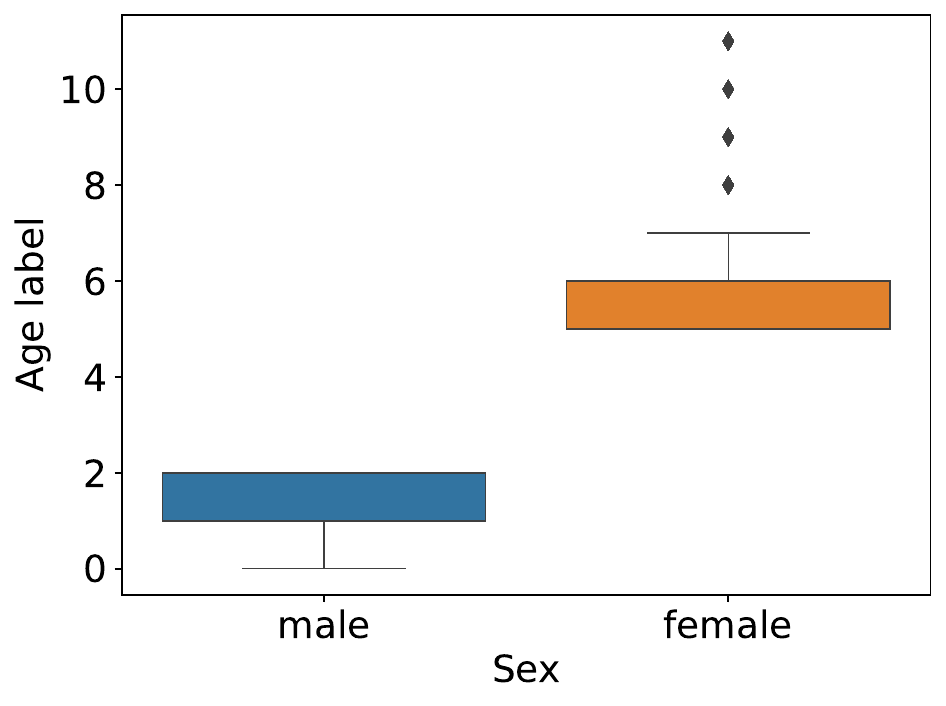}\label{fig:eb2}}\quad
      \subfloat[Test.]{\includegraphics[width=0.3\linewidth]{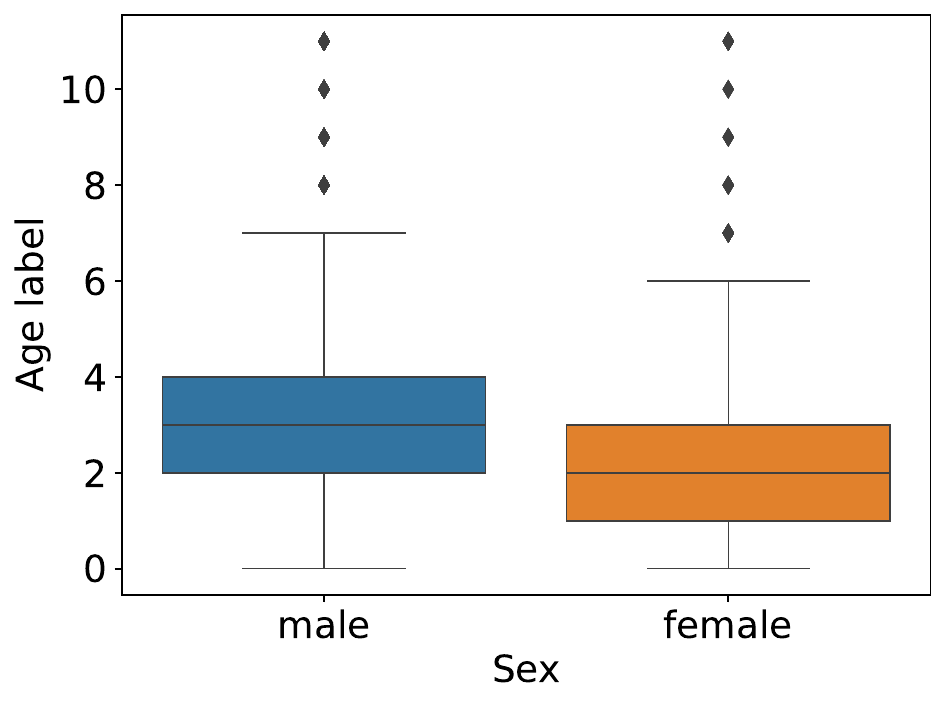}\label{fig:unbiased}}

\caption{Boxplots of age label distribution across sex in IMDB.}
\label{fig:boxplot_IMDB}
\end{figure*}
\begin{figure*}[htbp]
    \centering 

      \subfloat[EB1 with 24997 females and 11007 males.]{\includegraphics[width=0.3\linewidth]{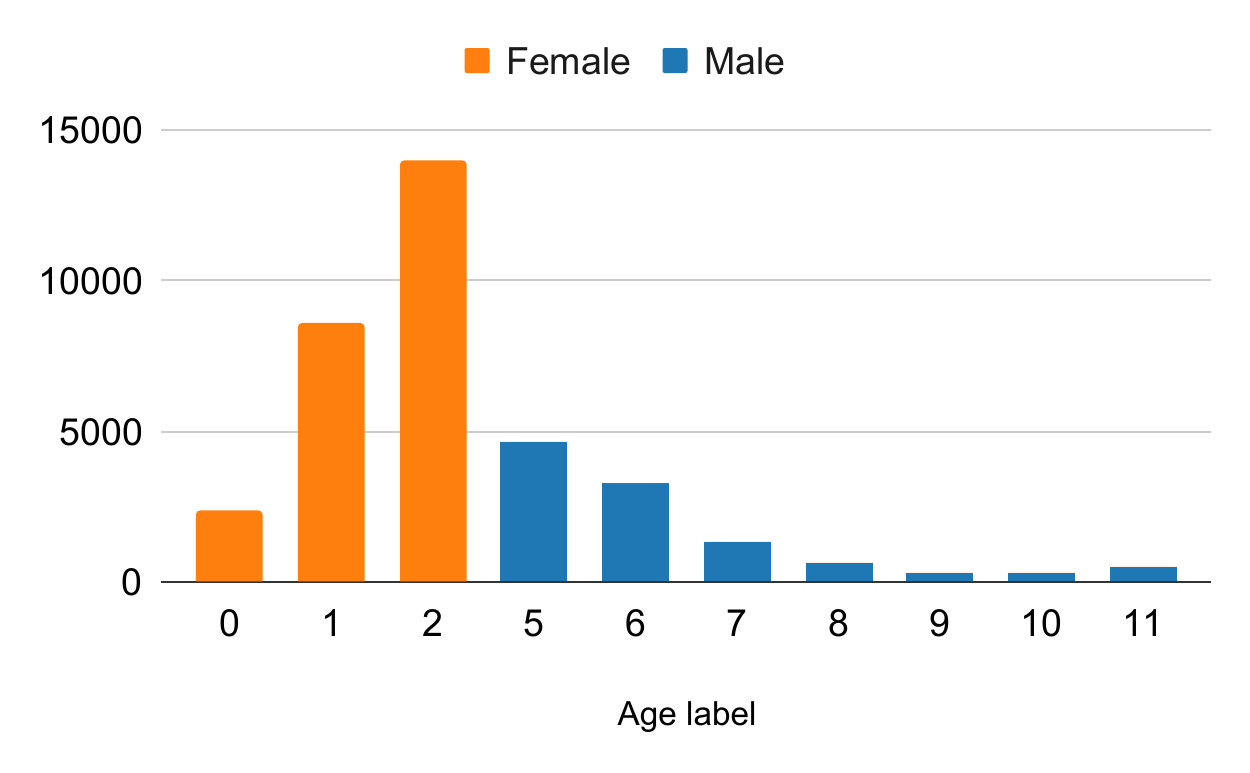}\label{fig:eb1_s}}\quad
      \subfloat[EB2 with 1744 females and 15056 males.]{\includegraphics[width=0.3\linewidth]{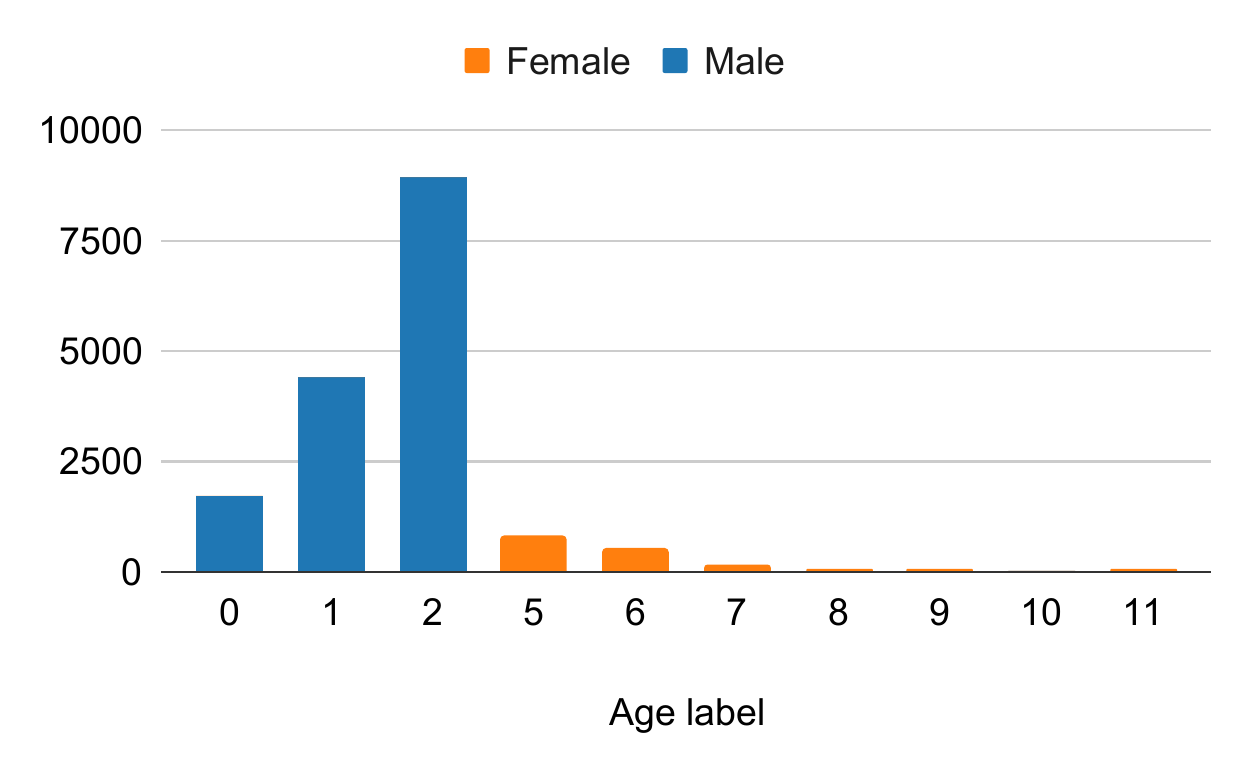}\label{fig:eb2_s}}\quad
      \subfloat[Test with 10753 females and 11715 males.]{\includegraphics[width=0.3\linewidth]{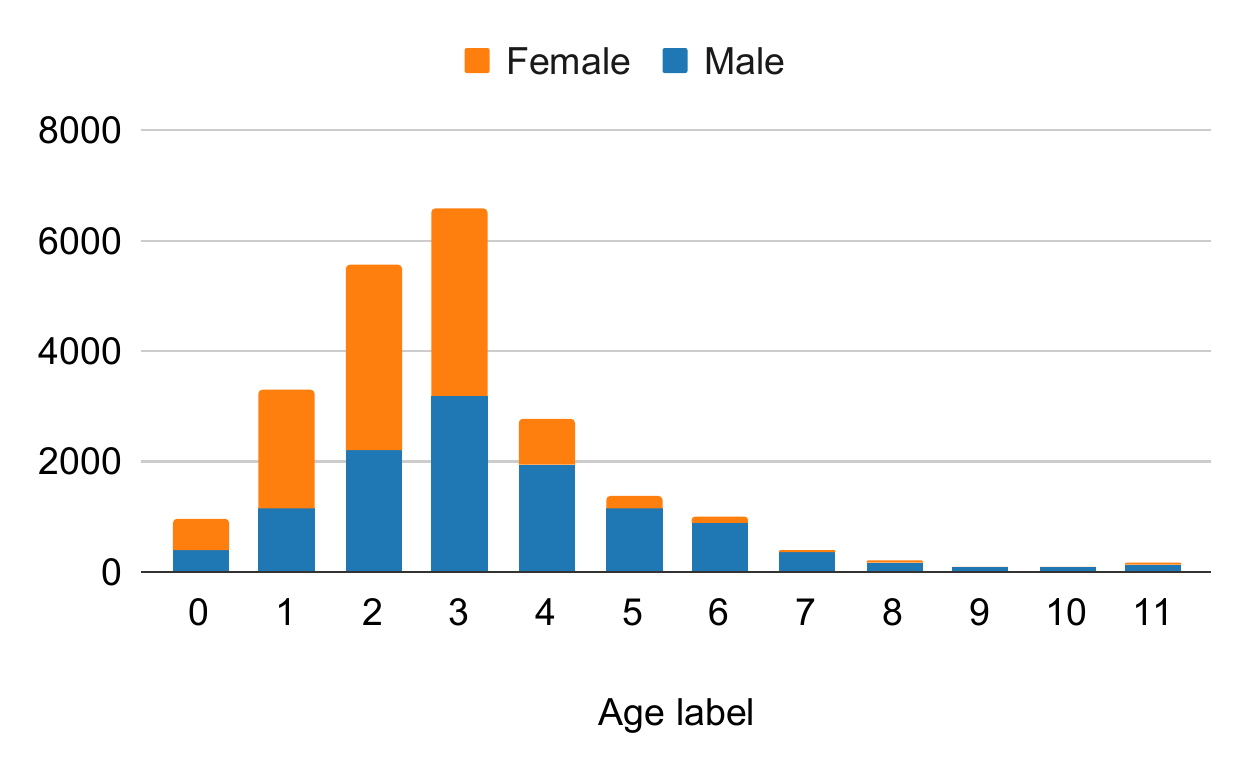}\label{fig:unbiased_s}}

\caption{Statistics for the number of females and males in different age categories of IMDB.}
\label{fig:statistics_IMDB}
\end{figure*}
\begin{figure*}[ht!]
    \begin{minipage}{0.48\textwidth}

\centering
      \subfloat[EB1 dominated by females aged 25-29 and males aged 40-44.]{\includegraphics[width=1\linewidth]{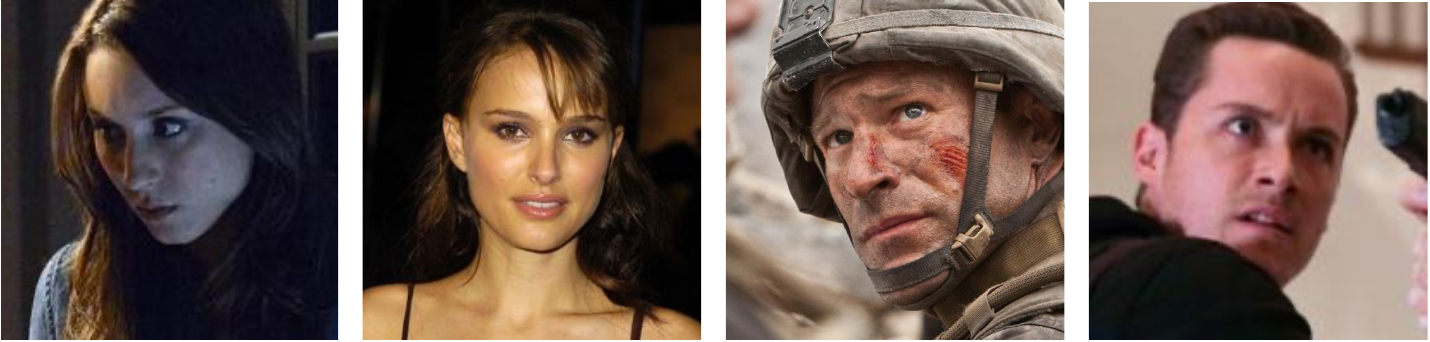}\label{fig:IMDB_comparison_eb1}}\quad
      
      \subfloat[EB2 dominated by females aged 40-44 and males aged 25-29.]{\includegraphics[width=1\linewidth]{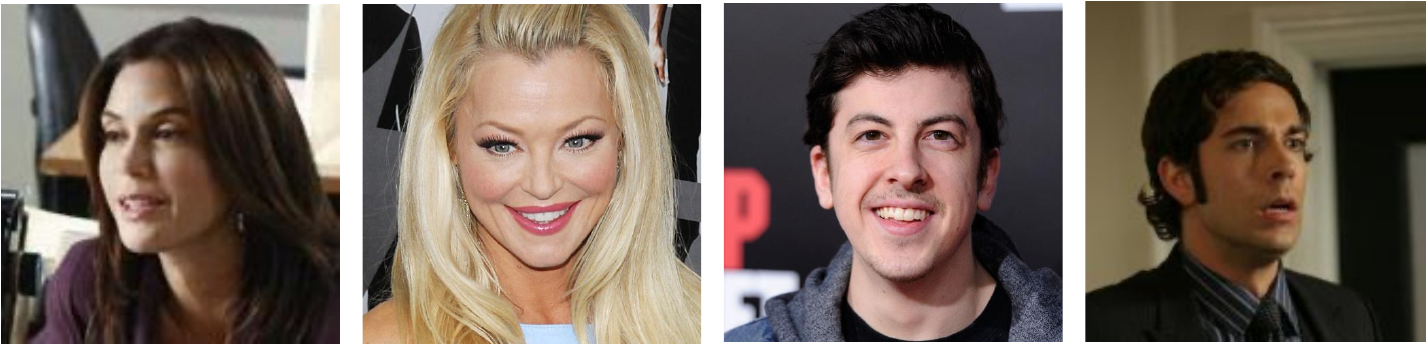}\label{fig:IMDB_comparison_eb2}}\quad

\caption{Comparison of dominated groups in EB1 and EB2.}
\label{fig:IMDB_EB_comparison}
    
    \end{minipage}
    \hfill
    \begin{minipage}{0.48\textwidth}
\centering
      \subfloat[Examples taken from distance view.]{\includegraphics[width=1\linewidth]{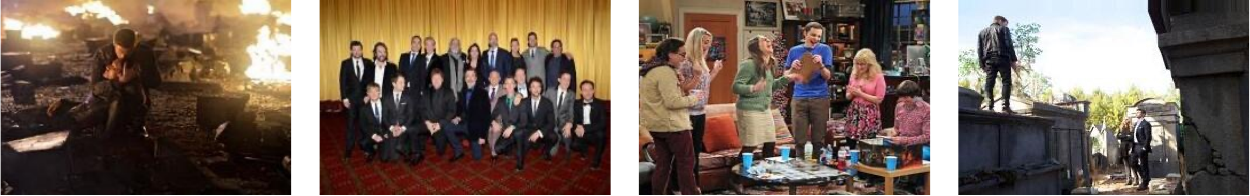}\label{fig:IMDB_long_distance}}\quad
      
      \subfloat[Examples with two faces.]{\includegraphics[width=1\linewidth]{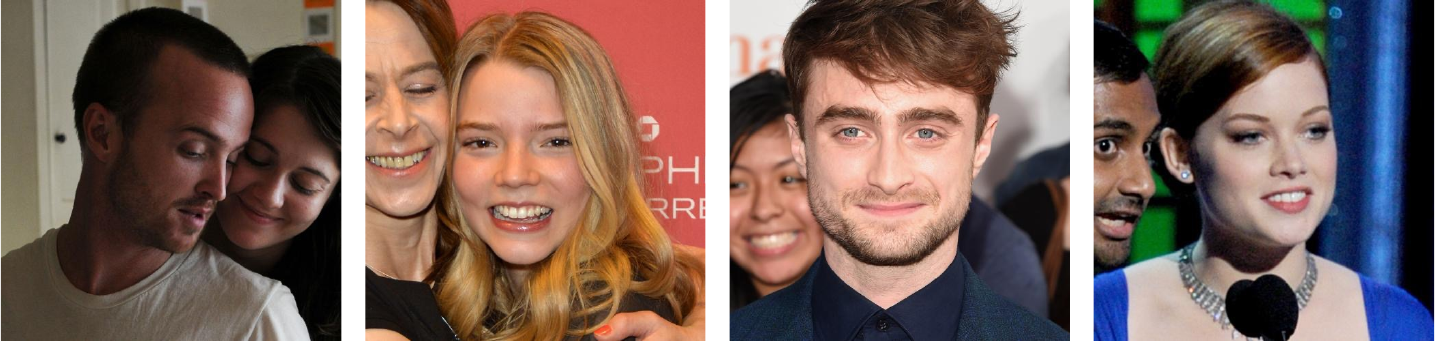}\label{fig:IMDB_two}}\quad

\caption{Noisy examples of IMDB for sex classification.}
\label{fig:IMDB_noisy}

    \end{minipage}%
\end{figure*}

\section{Training Details}
\label{appsec:training_details}
In this section, we present the implementation details of our filter and the downstream task classifier on Colored MNIST, CelebA, Adult, and IMDB.
The experiments are conducted on one NVIDIA 1080 Ti GPU. The reference time spent for each experiment is shown in~\cref{tab:time}.

\subsection{Filter Training}
The network architectures of our filter on Colored MNIST are shown in~\cref{tab:CMNIST_filter_network}, the architectures on CelebA and IMDB are shown in~\cref{tab:CelebA_filter_network}, and the architecture on Adult is shown in~\cref{tab:Adult_filter_network}.  
We optimize our filter by Adam optimizer~\cite{Adam} with default settings $(\beta_1=0, \beta_2=0.9)$ in WGAN-GP~\cite{wgan_gp}. The hyper-parameters are shown in~\cref{tab:hp_filter}.

\begin{table*}[htbp]
\caption{Time statistics based on one NVIDIA 1080 Ti GPU. The wavy line $\sim$ represents the approximation.}
\label{tab:time}
\centering

\begin{tabular}{lccccc}
\toprule
\multicolumn{1}{l}{\multirow{2}{*}{Experiment}} & \multirow{2}{*}{Resolution} & \multirow{2}{*}{Total image} & \multirow{2}{*}{Filter training} & \multicolumn{2}{c}{Classifier training} \\
\cmidrule(lr){5-6} 
\multicolumn{1}{c}{}                            &                             &                              &                                  & With filter        & Without filter     \\
\midrule
Colored MNIST                                   & 32$\times$32                       & 50000                        & $\sim$ 1h 10min                 & $\sim$ 26min      & $\sim$ 21min      \\
CelebA                                          & 224$\times$224                     & $\sim$ 180000               & $\sim$ 1d 2h                   & $\sim$ 6h 52min   & $\sim$ 5h 18min   \\
IMDB                                            & 224$\times$224                     & $\sim$ 50000                & $\sim$ 7h 31min                 & $\sim$ 3h 22min   & $\sim$ 2h 28min \\
Adult                                            & 108                     & $\sim$ 50000                & $\sim$ 23min                 & $\sim$ 5min   & $\sim$ 4min \\
\bottomrule
\end{tabular}


\end{table*}
\begin{table*}[htbp]
\caption{Hyper-parameters for filter-based approach training.}
\label{tab:hp_filter}
\centering

\begin{tabular}{lcccccc}
\toprule
Experiment    & $\lambda_{mi}$ & $\lambda_{pred}$ & $\lambda_{rec}$ & Learning rate & Batch size & Optimizer                 \\
\midrule
Colored MNIST & 10 & 100  & 100 & 0.0003        & 32         & Adam ($\beta_1$=0, $\beta_2$=0.9) \\
CelebA        & 50 & 50   & 100 & 0.0003        & 32         & Adam ($\beta_1$=0, $\beta_2$=0.9) \\
IMDB          & 50 & 50   & 100 & 0.0003        & 32         & Adam ($\beta_1$=0, $\beta_2$=0.9) \\
Adult          & 5 & 5   & 10 & 0.0003        & 32         & Adam ($\beta_1$=0, $\beta_2$=0.9) \\
\bottomrule
\end{tabular}


\end{table*}
\begin{table*}[htbp]
\caption{Hyper-parameters for downstream task classifier training.}
\label{tab:hp_task}
\centering

\begin{tabular}{lccc}
\toprule
Experiment    & Learning rate & Batch size & Optimizer                     \\
\midrule
Colored MNIST & 0.001         & 32         & Adam ($\beta_1$=0.9, $\beta_2$=0.999) \\
CelebA        & 0.001         & 32         & Adam ($\beta_1$=0.9, $\beta_2$=0.999) \\
IMDB          & 0.001         & 32         & Adam ($\beta_1$=0.9, $\beta_2$=0.999) \\
Adult          & 0.001         & 32         & Adam ($\beta_1$=0.9, $\beta_2$=0.999) \\
\bottomrule
\end{tabular}


\end{table*}
\begin{table*}[htbp]
\caption{Hyper-parameters for SSL-based approach training.}
\label{tab:hp_SSL}
\centering

\begin{tabular}{lcccc}
\toprule
Experiment    & Temperature & Learning rate & Batch size & Optimizer                     \\
\midrule
CelebA & 0.07 & 0.001         & 32         & Adam ($\beta_1$=0.9, $\beta_2$=0.999) \\
\bottomrule
\end{tabular}


\end{table*}

 
\subsection{Downstream Task Classifier Training}
\label{sec:classifier_training}
For digit classification on Colored MNIST, we use Lenet-5~\cite{lenet} as the backbone classifier.
For facial attribute classification on CelebA and binary sex classification on IMDB, following CSAD~\cite{CSAD}, we use ResNet-18~\cite{ResNet} as the backbone classifier. 
For income prediction on Adult dataset, we use a three-layer MLP as the backbone classifier. The hyper-parameters are shown in~\cref{tab:hp_task}.

\clearpage
\begin{table*}[htbp]
\caption{Networks of the target-agnostic adversarial filter on Colored MNIST.}
\label{tab:CMNIST_filter_network}
\centering

\begin{tabular}{lcccccc}
\toprule
Layer          & Kernel & Stride & Padding & Output shape      & Activation & Normalization \\
\midrule
\multicolumn{7}{c}{Encoder $G_{enc}$}                                                       \\
\midrule
Input          & -      & -      & -       & 3 $\times$ 32 $\times$ 32       & -          & -             \\
Convolution    & 4$\times$4    & 2$\times$2      & 1       & 64$\times$16$\times$16          & LeakyReLU  & BatchNorm     \\
Convolution    & 4$\times$4    & 2$\times$2      & 1       & 128$\times$8$\times$8           & LeakyReLU  & BatchNorm     \\
Convolution    & 4$\times$4    & 2$\times$2      & 1       & 256$\times$4$\times$4           & LeakyReLU  & BatchNorm     \\
               \midrule
\multicolumn{7}{c}{Decoder $G_{dec}$}                                                       \\
\midrule
TransposedConv & 4$\times$4    & 2$\times$2      & 1       & 256$\times$8$\times$8           & ReLU       & BatchNorm     \\
TransposedConv & 4$\times$4    & 2$\times$2      & 1       & 128$\times$16$\times$16         & ReLU       & BatchNorm     \\
TransposedConv & 4$\times$4    & 2$\times$2      & 1       & 3$\times$32$\times$32           & Sigmoid    & BatchNorm     \\
\midrule
\multicolumn{7}{c}{Critic network $D$}                                                      \\
\midrule
Convolution    & 4$\times$4    & 2$\times$2      & 1       & 64$\times$16$\times$16          & LeakyReLU  & InstanceNorm  \\
Convolution    & 4$\times$4    & 2$\times$2      & 1       & 128$\times$8$\times$8           & LeakyReLU  & InstanceNorm  \\
Convolution    & 4$\times$4    & 2$\times$2      & 1       & 256$\times$4$\times$4           & LeakyReLU  & InstanceNorm  \\
Linear         & -      & -      & -       & 256               & ReLU       & InstanceNorm  \\
Linear         & -      & -      & -       & 1                 & -          & -             \\
\midrule
\multicolumn{7}{c}{Regressor $R$}                                                           \\
\midrule
Convolution    & 5$\times$5    & 1$\times$1      & 0       & 6$\times$28$\times$28           & LeakyReLU  & -             \\
AveargePooling & 2$\times$2    & 2$\times$2      & -       & 6$\times$14$\times$14           & -          & -             \\
Convolution    & 5$\times$5    & 1$\times$1      & 0       & 16$\times$10$\times$10          & LeakyReLU  & -             \\
AveargePooling & 2$\times$2    & 2$\times$2      & -       & 16$\times$5$\times$5            & -          & -             \\
Convolution    & 5$\times$5    & 1$\times$1      & 0       & 120$\times$1$\times$1           & LeakyReLU  & -             \\
Linear         & -      & -      & -       & 84                & ReLU       & -             \\
Linear         & -      & -      & -       & 3                 & Sigmoid    & -             \\
\midrule
\multicolumn{7}{c}{Auxiliary network $T$}                                           \\ 
\midrule
Linear         & -      & -      & -       & 1024              & LeakyReLU  & -             \\
Linear         & -      & -      & -       & 256               & LeakyReLU  & -             \\
Linear         & -      & -      & -       & 64                & LeakyReLU  & -             \\
Linear         & -      & -      & -       & 10                & LeakyReLU  & -             \\
Linear         & -      & -      & -       & 1                 & -          & -             \\
\bottomrule
\end{tabular}


\end{table*}
\begin{table*}[htbp]
\caption{Networks of the target-agnostic adversarial filter on Adult.}
\label{tab:Adult_filter_network}
\centering

\begin{tabular}{lcccccc}
\toprule
Layer                & Kernel               & Stride               & Padding              & Output shape         & Activation           & Normalization        \\
\midrule
\multicolumn{7}{c}{Encoder $G_{enc}$}                                                                                                                          \\
\midrule
Input                & -                    & -                    & -                    & 108                  & -                    & -                    \\
Linear               & -                    & -                    & -                    & 64                   & ReLU                 & -                    \\
Linear               & -                    & -                    & -                    & 10                   & ReLU                 & -                    \\
\midrule
\multicolumn{7}{c}{Decoder $G_{dec}$}                                                                                                                          \\
\midrule
Linear               & -                    & -                    & -                    & 64                   & ReLU                 & -                    \\
Linear               & -                    & -                    & -                    & 108                  & -                    & -                    \\
\midrule
\multicolumn{7}{c}{Critic network $D$}                                                                                                                         \\
\midrule
Linear               & -                    & -                    & -                    & 64                   & ReLU                 & -                    \\
Linear               & -                    & -                    & -                    & 1                    & -                    & -                    \\
\midrule
\multicolumn{7}{c}{Regressor $R$}                                                                                                                              \\
\midrule
Linear               & -                    & -                    & -                    & 64                   & ReLU                 & -                    \\
Linear               & -                    & -                    & -                    & 1                    & Sigmoid              & -                    \\
                     \midrule
\multicolumn{7}{c}{Auxiliary network $T$}                                                                                     \\
\midrule
Linear               & -                    & -                    & -                    & 10                   & LeakyReLU            & -                    \\
Linear               & -                    & -                    & -                    & 1                    & -                    & -               \\
\bottomrule
\end{tabular}


\end{table*}
\begin{table*}[htbp]
\caption{Networks of the target-agnostic adversarial filter on CelebA, IMDB.}
\label{tab:CelebA_filter_network}
\centering

\begin{tabular}{lcccccc}
\toprule
Layer          & Kernel & Stride & Padding & Output shape & Activation & Normalization \\
\midrule
\multicolumn{7}{c}{Encoder $G_{enc}$}                                                  \\
\midrule
Input          & -      & -      & -       & 3$\times$224$\times$224    & -          & -             \\
Convolution    & 4$\times$4    & 2$\times$2    & 1       & 64$\times$112$\times$112   & LeakyReLU  & BatchNorm     \\
Convolution    & 4$\times$4    & 2$\times$2    & 1       & 128$\times$56$\times$56    & LeakyReLU  & BatchNorm     \\
Convolution    & 4$\times$4    & 2$\times$2    & 1       & 256$\times$28$\times$28    & LeakyReLU  & BatchNorm     \\
Convolution    & 4$\times$4    & 2$\times$2    & 1       & 512$\times$14$\times$14    & LeakyReLU  & BatchNorm     \\
Convolution    & 4$\times$4    & 2$\times$2    & 1       & 1024$\times$7$\times$7     & LeakyReLU  & BatchNorm     \\
\midrule
\multicolumn{7}{c}{Decoder $G_{dec}$}                                                  \\
\midrule
TransposedConv & 4$\times$4    & 2$\times$2    & 1       & 1024$\times$14$\times$14   & ReLU       & BatchNorm     \\
TransposedConv & 4$\times$4    & 2$\times$2    & 1       & 512$\times$28$\times$28    & ReLU       & BatchNorm     \\
TransposedConv & 4$\times$4    & 2$\times$2    & 1       & 256$\times$56$\times$56    & ReLU       & BatchNorm     \\
TransposedConv & 4$\times$4    & 2$\times$2    & 1       & 128$\times$112$\times$112  & ReLU       & BatchNorm     \\
TransposedConv & 4$\times$4    & 2$\times$2    & 1       & 3$\times$224$\times$224    & Sigmoid    & BatchNorm     \\
\midrule
\multicolumn{7}{c}{Critic network $D$}                                                 \\
\midrule
Convolution    & 4$\times$4    & 2$\times$2    & 1       & 64$\times$112$\times$112   & LeakyReLU  & InstanceNorm  \\
Convolution    & 4$\times$4    & 2$\times$2    & 1       & 128$\times$56$\times$56    & LeakyReLU  & InstanceNorm  \\
Convolution    & 4$\times$4    & 2$\times$2    & 1       & 256$\times$28$\times$28    & LeakyReLU  & InstanceNorm  \\
Convolution    & 4$\times$4    & 2$\times$2    & 1       & 512$\times$14$\times$14    & LeakyReLU  & InstanceNorm  \\
Convolution    & 4$\times$4    & 2$\times$2    & 1       & 1024$\times$7$\times$7     & LeakyReLU  & InstanceNorm  \\
Linear         & -      & -      & -       & 1024         & ReLU       & InstanceNorm  \\
Linear         & -      & -      & -       & 1            & -          & -             \\
\midrule
\multicolumn{7}{c}{Regressor $R$}                                                      \\
\midrule
ResNet18$^*$      & -      & -      & -       & 128          & -          & -             \\
Linear         & -      & -      & -       & 1            & Sigmoid    & -             \\
\midrule
\multicolumn{7}{c}{Auxiliary network $T$}                                              \\
\midrule
Linear         & -      & -      & -       & 1024         & LeakyReLU  & -             \\
Linear         & -      & -      & -       & 256          & LeakyReLU  & -             \\
Linear         & -      & -      & -       & 64           & LeakyReLU  & -             \\
Linear         & -      & -      & -       & 10           & LeakyReLU  & -             \\
Linear         & -      & -      & -       & 1            & -          & -             \\
\bottomrule
\multicolumn{7}{l}{\small$^*$We modify the last layer of ResNet18 to be a linear layer with 128 output channels and ReLU as activation} \\
\multicolumn{7}{l}{followed by a dropout with $p=0.5$.}
\end{tabular}


\end{table*}

\begin{table*}[htbp]
\caption{Networks of the approach based on self-supervised learning on CelebA.}
\label{tab:CelebA_SSL_network}
\centering

\begin{tabular}{lcccccc}
\toprule
Layer     & Kernel & Stride & Padding & Output shape & Activation & Normalization \\
\midrule
\multicolumn{7}{c}{Encoder $E$}                                                       \\
\midrule
ResNet18 & -      & -      & -       & 1000         & -          & -             \\
\midrule
\multicolumn{7}{c}{Mapping network $M$}                                                     \\
\midrule
Linear    & -      & -      & -       & 512          & ReLU       & -             \\
Linear    & -      & -      & -       & 128          & ReLU       & -             \\
\midrule
\multicolumn{7}{c}{Regressor $R$}                                                    \\
\midrule
Linear    & -      & -      & -       & 512          & ReLU       & -             \\
Linear    & -      & -      & -       & 64           & ReLU       & -             \\
Linear    & -      & -      & -       & 10           & ReLU       & -             \\
Linear    & -      & -      & -       & 1            & -          & -            \\
\bottomrule
\end{tabular}


\end{table*}

\end{appendices}

\end{document}